\newtheorem{@assumption}{\sc Assumption}[section]
\newcommand{\E}{\mathbb{E}}
\newcommand {\argmin}[1]{\underset{#1}{\rm{argmin}}} 
\newcommand {\argmax}[1]{\underset{#1}{\rm{argmax}}}
\newcommand{\pkg}[1]{{\normalfont\fontseries{b}\selectfont #1}}
\let\proglang=\textsf
\let\code=\texttt
\title
{Block Dense Weighted Networks \\with Augmented Degree Correction} 
\author{Benjamin Leinwand \;\;\;\;\;\;\;\;\;  Vladas Pipiras\\
University of North Carolina at Chapel Hill
}
\begin{document}
\maketitle
\begin{abstract}
Dense networks with weighted connections often exhibit a community like structure, where although most nodes are connected to each other, different patterns of edge weights may emerge depending on each node's community membership. We propose a new framework for generating and estimating dense weighted networks with potentially different connectivity patterns across different communities. The proposed model relies on a particular class of functions which map individual node characteristics to the edges connecting those nodes, allowing for flexibility while requiring a small number of parameters relative to the number of edges. By leveraging the estimation techniques, we also develop a bootstrap methodology for generating new networks on the same set of vertices, which may be useful in circumstances where multiple data sets cannot be collected. Performance of these methods are analyzed in theory, simulations, and real data.
\end{abstract}
\textbf{Keywords:} Dense Networks; Weighted Networks; Degree Corrected Block Model; Bootstrap; Community Detection
\section{Introduction}\label{s:intro}
We are interested in modeling dense weighted networks with real, continuous-valued weights $W_{uv}$ for pairs of nodes $u$, $v \in V$, where $V$ denotes the set of all nodes (vertices) in the network. Denseness means that the edge $W_{uv}$ is present for all pairs $u$, $v$ where $u \ne v$, though we will also discuss the case where a proportion of the weights are “missing.” Examples include structural brain networks (as in Figure \ref{f:LDF}) and correlation networks. Focusing on dense weighted networks, what are natural modeling approaches? At the simplest level, if a network has no meaningful structure, one could postulate that 

\begin{equation}  \label{e:noise}
f(W_{uv}) = \epsilon_{uv},
\end{equation}
where $\epsilon_{uv}$ are i.i.d.\ $\cal{N}$$(0,1)$ random variables and $f$ is a function mapping $W_{uv}$ to the (standard) ``normal” space. One can take 
\begin{equation}\label{dist-to-normal}
f(w) = \Phi^{-1}(G(w)),
\end{equation}
where $G$ is the CDF of $W_{uv}$ and $\Phi^{-1}$ is the inverse CDF of $\cal{N}$$(0,1)$. In practice, we can substitute the empirical CDF for the unknown true $G$.  

Most networks of interest, including most real networks, however, have some kind of structure. Two common structures involve communities, broadly understood as sets of particular nodes whose edges exhibit similar connectivity patterns, and degree correction, broadly understood as certain nodes having consistently more edges (or, in dense weighted networks, greater edge weights) than other nodes. A simple way to capture degree correction, which we refer to as ``sociability" (or \textit{SC}, for short) in the ``normal" space is to set 
\begin{align}\label{e:NLSM}
f(W_{uv}) & =   \alpha Z_u + \beta Z_v + \sigma \epsilon_{uv}  \nonumber\\ 
& = \alpha \Phi_1^{-1}(\Psi_u) + \beta \Phi_1^{-1}(\Psi_v) + \sigma \epsilon_{uv} =:  h(\Psi_u, \Psi_v) + \sigma \epsilon_{uv},
\end{align}
where $Z_u$ and $Z_v$ are i.i.d.\ $\cal{N}$$(0,1)$ variables, associated with nodes $u$ and $v$ respectively, and $\alpha, \beta$ and $\sigma$ are other model parameters. In the equivalent expression, $\Psi_u$ are i.i.d.\ U$(0,1)$ random variables (uniform on the interval $(0,1)$), and the function  $h(x, y) =  \alpha \Phi^{-1}(x)+ \beta \Phi^{-1}(y)$ will be allowed to take more general forms below. In the ``normal" space, the function $h(\Phi(z_u), \Phi(z_v)) = \alpha z_u + \beta z_v$ is linear. In the case where $\alpha, \beta >0$, a larger value of $Z_u$ will tend to make $W_{uv}$ values larger across all $v$'s, inducing a sociability structure that reflects degree correction. The term $h(\Psi_u, \Psi_v)$ in (\ref{e:NLSM}) is thus the SC term in the model. The term $ \sigma \epsilon_{uv}$ is thought to consist of independent variables. 

Models of the type (\ref{e:NLSM}) appear in \cite{Fosdick_Hoff_2015}, who also include a multiplicative interaction term. In a departure from that work, we allow for community structures and more general functions $h$ than (\ref{e:NLSM}). After exponentiation, and at the conditional mean level, note also that  (\ref{e:NLSM}) yields

\begin{equation}  \label{e:NLSM-expectation}
\E(e^{f(W_{uv})}|Z) = e^{\alpha Z_u}e^{\beta Z_v}e^{\frac{\sigma^2}{2}} =: \theta_u\theta_v \mu.
\end{equation}   	
Specifications of the form (\ref{e:NLSM-expectation}) are common for connection probabilities in \textit{unweighted} degree corrected or SC models. See degree corrected stochastic block models (DCBMs) in \cite{karrer_newman_2011}, \cite{Gao_Ma_Zhang_Zhou_2018},  or their extensions, popularity adjusted stochastic block models (PABMs) in \cite{sengupta_chen_2017}, \cite{noroozi2019estimation}. While in the DCBM, ``sociability" parameters are global, in the PABM, each node has a possibly different sociability parameter for each community in the network. The models considered here are close in spirit to PABMs and we draw from the techniques in \cite{noroozi2019estimation} to analyze them. However, our focus is on weighted networks where information may be encoded in the patterns of the edge weights rather than the existence of particular edges in a given network. 
We shall thus also consider community versions of the model (\ref{e:NLSM}), where the function $h$ and the parameter $\sigma$ can depend on the pair of communities to which $u$ and $v$ belong. Importantly, according to this definition, communities are not necessarily defined by higher or lower propensities to connect with entire other communities, but rather by particular patterns of edge weights which represent ``preferences" for specific nodes over others  \textit{within the same community}. 

As noted above, we will go beyond the ``linear" sociability patterns encoded by the particular function $h$ shown in (\ref{e:NLSM}) while, perhaps surprisingly, remaining in the ``normal" space. To motivate this extension, instead write the SC term in (\ref{e:NLSM}) as 
\begin{equation}  \label{e:H-expansion}
h(\Psi_u, \Psi_v) = d \Phi^{-1}(H(\Psi_u, \Psi_v)),
\end{equation}  
where $d \in [0,1]$ and $H(x,y) = \Phi(c^{-1}\alpha \Phi^{-1}(x) + c^{-1}\beta \Phi^{-1}(y))$ with $c = \sqrt{\alpha^2 + \beta^2}$. $\alpha$ and $\beta$ modulate the influence of $y$ relative to $x$. The constant $c$ serves a normalizing role so that $H(\Psi_u, \Psi_v)$ is ensured to be U$(0,1)$, and hence the value of $h(\Psi_u, \Psi_v)$ resides in the ``normal" space with variance $d^2$. Plugging (\ref{e:H-expansion}) into the last term of (\ref{e:NLSM}) and constraining $d^2 + \sigma^2 =1$ ensures the resulting values of $f(W_{uv})$ are in the (standard) ``normal" space. 

The critical observation, though, is that plugging (\ref{e:H-expansion}) into (\ref{e:NLSM}) while constraining $d^2 + \sigma^2 =1$ will output values in the (standard) ``normal" space for any function $H$ where $H(\Psi_u, \Psi_v) \sim$ U$(0,1)$, including functions that bear no similarity to the normal distribution as shown beneath (\ref{e:H-expansion}). We shall consider several broad classes of such $H$-functions. Examples of the sociability patterns resulting from various considered $H$-functions are depicted in Figure \ref{f:different-h-funcs} below. The key point is that while $H$ could be associated with quite different SC patterns, the SC term (\ref{e:H-expansion}) would nonetheless reside in the ``normal" space. Used in conjunction with (\ref{dist-to-normal}), which transforms an arbitrary (and possibly nonparametric) distribution of edge weights, this constructs a map between $\Psi$ values and edge weights via the (standard) ``normal" space. In summary, our key contributions at the model level concern: 
\begin{itemize}
  \item Focus on dense weighted networks;
  \item Possibility of community structure;
  \item Multiple nearly arbitrary distributions of edge weights;
   \item Flexible sociability patterns through $H$-functions. 
\end{itemize} 

Modeling questions will also be addressed in the paper below. Figure \ref{f:LDF} illustrates our modeling approach. It shows a network where the edge weights are the logs of the white matter fiber counts connecting two regions in a patient's brain. In this case, the two assumed communities are the left and right hemispheres of the brain. We often reorder the nodes first by community, and then within each community, sort the nodes by within community degree. This is what's seen in the second plot from the left of Figure \ref{f:LDF}. There are instances where one might want to sort the nodes differently, for example, if there is a core-periphery structure, it might be preferable to sort nodes first by community then by weight of edges connected to nodes in the core. In the third plot from the left, we show an ``estimate" of the SC term from our method, with the same ordering as in the second plot. Finally, a bootstrap replicate network of the original network is displayed in the right plot, again reordered for easier viewing. Notably, based on the different contour shapes in the bottom left and the top right sections of the third plot, it can be seen (using the plots in Figure \ref{f:different-h-funcs} as a point of reference) that the intra-left hemisphere edges have a different best fitting $H$-function than the intra-right hemisphere edges.      
\begin{figure}
 \includegraphics[width=\linewidth]{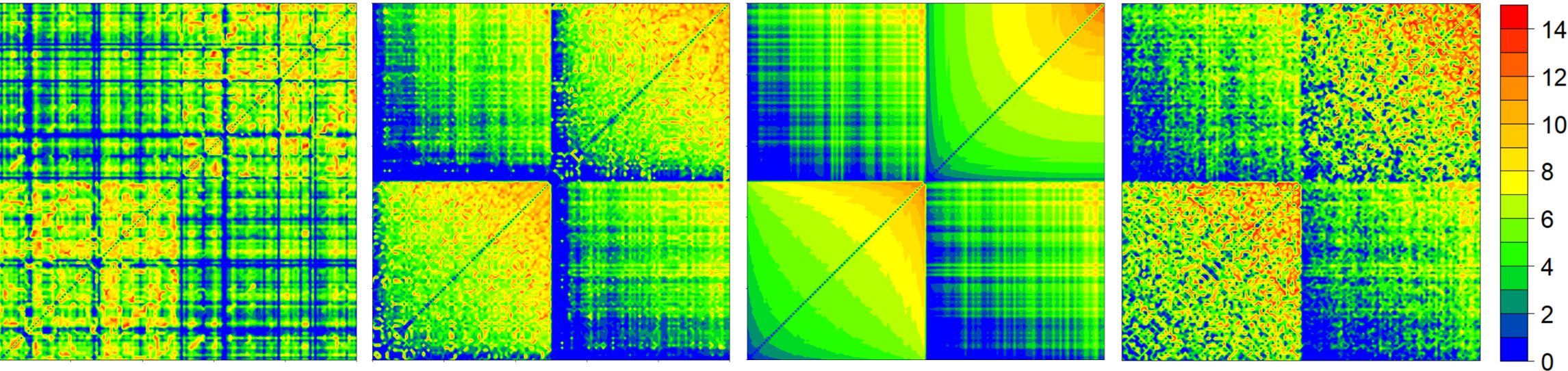}
\caption{From left to right: a structural brain network of the log values of white matter fiber counts between 148 regions. The same network reordered by within community degree. The similarly reordered ``estimate" of the structural network. A reordered bootstrap replicate network of the observed network.}
\label{f:LDF}
\end{figure}

There are other models designed to generate weighted networks. The Weighted Stochastic Block Model introduced by \cite{aicher2013adapting} includes degree correction only with regard to an edge's existence, not for modeling the weights of particular edges. The generalized exponential random graph model from \cite{desmarais_2012} is indeed a very general model, but requires a lot of advanced knowledge to specify the appropriate model for estimation if given a specific network. As noted above, \cite{Fosdick_Hoff_2015} includes a form that looks superficially like the linear models discussed in this paper, but the higher order dynamics described when incorporating multiplicative interaction effects bears little resemblance to the “non-linear” models presented here, and doesn't accomodate communities. \cite{peixoto2018nonparametric} looks for general forms of community structure, but not of the kind proposed here, as their edge weights depend only on community membership without regard for other nodal features. For other work on weighted networks, see the melding of of mutual information and common neighbors in \cite{Zhu_Xia_2016}, estimates of nodes' perceptions of one another to predict signed edge weights in \cite{Kumar_Spezzano_Subrahmanian_Faloutsos_2016}, and leveraging graph metrics to ``denoise" weighted networks in \cite{spyrou2018weighted}. Related work in the unweighted setting can be found in \cite{Bartlett_2017}, which deploys pairwise measures of node association to model binary edges,  and the use of copulas in \cite{Fan_Xu_Cao}.

As seen in Figure \ref{f:LDF}, for a given network, we can use our model to estimate a data generating process and subsequently generate synthetic data via a bootstrap-type method. This procedure can create ``new" networks which replicate the structure of the observed network even without a priori knowing the functional form of the edge weight generating process between particular communities. As in the case of using brain networks as a diagnostic aid, when networks are used as inputs to other analyses, if data collection is difficult, these synthetic network replicates may be used as supplemental data. Additionally, taking a cue from the rightmost plot of Figure \ref{f:LDF}, the random variation between bootstrap replicate networks can serve as a sensitivity test for results using the original network, allowing for greater robustness even with limited data, as in the classical bootstrap.  

The rest of this paper is structured as follows. In Section \ref{s:cosmo}, we develop theory for generating the proposed class of networks, along with details on $H$-functions in Section \ref{s:Hfunc}. In Section \ref{s:intronmici}, we discuss methods for estimating the generating processes of observed networks when the community memberships of each node are known or have been estimated. In Section \ref{s:bootstrap}, we build on the estimation procedures from Section \ref{s:intronmici} to generate new synthetic networks that are plausible stand-ins for real networks, in the vein of the bootstrap. In Section \ref{s:comm-det}, we discuss how to adapt community detection techniques to networks of this kind. In Section \ref{s:robust}, we discuss applying our methods under slight departures from the main models of interest. In Section \ref{s:net-sims}, we apply our method to real data and compare the performance to other existing models. The appendix discusses technical details and extensions.

\section{Model formulation}\label{s:cosmo}
Let $v_1,\ldots, v_n$ be the vertices (nodes) in a dense network with undirected and weighted edges, and no self-loops. Each node belongs to exactly one community, $1, \ldots , K$. Henceforth, $u$ and $v$ will refer to nodes, and $i$ and $j$ will refer to communities, e.g. $u \in i, v \in j$.

In our random graph model, a node $u$ has a ``sociability" (i.e.\ ``popularity", degree correction) parameter $\Psi_u$. These parameters are assumed to be i.i.d.\ U$(0,1)$. Let $W_{u v}$ denote the weight of the edge connecting nodes $u$ and $v$. We primarily focus on the case with continuous-valued $W_{u v}$. At the most general level, we examine random graphs of the following form: for $u, v$ such that $u \in i$, $v \in j$, suppose
\begin{equation}\label{e:NSM}
f_{ij}(W_{u v}) = h_{i j}(\Psi_{u}, \Psi_{v}) + \sigma_{ij}\epsilon_{uv},
\end{equation}
 where $f_{ij}$ is a monotonically increasing function over the range of $W_{uv}$, $h_{ij}$ is a monotonic function in its 2 arguments, $\epsilon_{uv}$ are error terms with $\mathbb{E}(\epsilon_{uv}) = 0$,  $\mathbb{E}(\epsilon_{uv}^2) = 1$ and $ \sigma_{ij} \ge 0$. Those terms with $ij$ subscripts are particular to edges where one of the nodes is in community $i$ and the other is in community $j$, while terms with subscripts $uv$ are idiosyncratic for that particular edge. In the most flexible version of the model, as in the PABM, each node may have $K$ different $\Psi$ values, each one for parametrizing edge weights connecting to nodes in a particular community. A special case of interest is the linear model
\begin{equation} \label{e:LSM}
f_{ij}(W_{u v}) = \gamma_{i j} + \alpha_{ij}h_{1ij}(\Psi_u) + \beta_{ij}h_{2ij}(\Psi_v) + \sigma_{ij}\epsilon_{uv},
\end{equation}
where $ \gamma_{i j},  \alpha_{ij},  \beta_{ij} \in \mathbb{R}$, $\mathbb{E}(h_{kij}(\Psi)) = 0$,  and $\mathbb{E}(h_{kij}(\Psi)^2) = 1$ for $k = 1,2$ and $\Psi \sim$ U$(0,1)$. We refer to (\ref{e:LSM}) as a \textit{linear sociability model (LSM)} and to (\ref{e:NSM}) where $h_{ij}$ is not linear as a \textit{nonlinear sociability model (NSM)}. Examples and discussion below will provide motivation and intuition about these models. 

Under monotonicity assumptions, note that the observed edge weight $W_{uv}$ is monotone in the sociability parameters of  nodes $u$ and $v$. As a special case, letting $\alpha_{ij} = \beta_{ij} = 0$ in the LSM, node sociability plays no role in the weight of the edge between nodes $u$ and $v$, rather the weights are generated independently from some distribution, as in a Weighted Stochastic Block Model. Similarly, letting $\sigma_{ij} = 0$ would generate a network completely determined by random node sociabilities. 

In what follows, $\Phi_{\mu,\sigma^2}$ will denote the CDF of a $\mathcal{N}(\mu, \sigma^2)$ distribution and $\Phi_{\sigma^2} := \Phi_{0,\sigma^2}$. Furthermore, though each pair of communities $i$ and $j$ are assumed to possibly be connected via a function $h_{ij}$ (along with $\gamma_{i j},  \alpha_{ij},  \beta_{ij}$, etc.), to simplify notation, we will drop the subscript in our notation, assuming that the discussion always concerns the relevant pair of communities $i, j$ based on context, where $i$ may or may not be the same as $j$.

\begin{example}(Normal LSM.)  This is (\ref{e:LSM}) with 
\begin{align}\label{e:NLSM2}
f(W_{uv}) & = \gamma + \alpha \Phi_1^{-1}(\Psi_u) + \beta \Phi_1^{-1}(\Psi_v) + \sigma \epsilon_{uv} \nonumber\\
& =  \gamma + \alpha Z_u + \beta Z_v + \sigma \epsilon_{uv},
\end{align}
where $Z_u \sim \mathcal{N}(0, 1)$. The function $f$ can be the identity (in which case $W_{uv}$ is Gaussian itself, assuming normality of $\epsilon_{uv}$) or some other transformation, such as $f(W) = \log(W)$.
\end{example}

One natural choice of $f$ in (\ref{e:NSM}) or (\ref{e:LSM}), after a common practice of transforming data to standard normal, is to consider 
\begin{equation}\label{e:normalize}
 f(w)=  \Phi_1^{-1}(G(w)),
\end{equation}
where $G$ represents the CDF of $\{W_{uv}: u \in i, v \in j\}$. We pursue this case in the following canonical example that we use for NSMs. The example relies upon \textit{H-functions}, a concept that will be discussed in greater detail in Section \ref{s:Hfunc}.

\begin{example}($H$-Normal NSM.)  This is (\ref{e:NSM}) with 
\begin{equation}\label{e:HNSM}
 \Phi_1^{-1}(G(W_{uv})) = \frac{1}{\sqrt{1+\sigma^2}}  \Phi_1^{-1}(H(\Psi_u, \Psi_v))+ \frac{\sigma}{\sqrt{1+\sigma^2}} \epsilon_{uv},
\end{equation}
where $G$ is again the CDF of $W_{uv}$, $\epsilon_{uv}$ are i.i.d.\  $\mathcal{N}(0, 1)$, and $\sigma \ge 0$. Furthermore, $H(x, y)$ is an $H$-function having the following key properties (see Section \ref{s:Hfunc} for more details): $H(\Psi_u, \Psi_v) \sim$ U$(0,1)$ for independent U$(0,1)$ random variables $\Psi_u, \Psi_v$, and $H(x, y)$ is monotone in both arguments. The first property ensures that 
\begin{equation}
\frac{1}{\sqrt{1+\sigma^2}} \Phi_1^{-1}(H(\Psi_u, \Psi_v)) + \frac{\sigma}{\sqrt{1+\sigma^2}} \epsilon_{uv}\ =: Z_{uv}
\end{equation}
are $\mathcal{N}(0, 1)$ variables, and hence by inverting (\ref{e:HNSM}), the variables 
\begin{equation}
W_{uv} = G^{-1}(\Phi_1(Z_{uv}))
\end{equation}
indeed have $G$ as their CDF. 
\end{example}

Note that the $H$-Normal NSM also has the Normal LSM  as a special case. Using the $H$-function 
\begin{equation}\label{e:H-normal-rho}
H(x, y) = \Phi_{1+\rho^2}(\Phi_1^{-1}(x) + \Phi_{\rho^2}^{-1}(y)),
\end{equation}
one observes that $H(\Psi_u, \Psi_v) \sim$ U$(0,1)$ since $\Phi_1^{-1}(\Psi_u) \sim \mathcal{N}(0, 1),   \Phi_{\rho^2}^{-1}(\Psi_v) \sim \mathcal{N}(0, \rho^2)$, and hence $\Phi_1^{-1}(\Psi_u) + \Phi_{\rho^2}^{-1}(\Psi_v)  \sim \mathcal{N}(0, 1+\rho^2)$. With the choice (\ref{e:H-normal-rho}) plugged into (\ref{e:HNSM}), the latter model becomes 
$$\Phi_1^{-1}(G(W_{uv})) = \frac{1}{\sqrt{1+\sigma^2}}  \Phi_1^{-1}(H(\Psi_u, \Psi_v))+ \frac{\sigma }{\sqrt{1+\sigma^2}} \epsilon_{uv} $$
$$= \frac{1}{\sqrt{1+\sigma^2}} \left( \Phi_1^{-1}(\Phi_{1+\rho^2}(\Phi_1^{-1}(\Psi_u) + \Phi_{\rho^2}^{-1}(\Psi_v))) \right)+ \frac{\sigma }{\sqrt{1+\sigma^2}} \epsilon_{uv} $$
\begin{equation}\label{e:H3-with-noise}
=\frac{1}{\sqrt{1+\sigma^2} \sqrt{1+\rho^2}} \Phi_1^{-1}(\Psi_u) + \frac{\rho}{\sqrt{1+\sigma^2 }\sqrt{1+\rho^2}} \Phi_1^{-1}(\Psi_v) + \frac{\sigma}{\sqrt{1+\sigma^2}} \epsilon_{uv},
\end{equation}
by using the identities $\Phi_a^{-1}(c) = \sqrt{a}\Phi_1^{-1}(c)$ and $\Phi_b(c) = \frac{\Phi_1(c)}{\sqrt{b}}$. Note that (\ref{e:H3-with-noise}) is a normalized version of the Normal LSM (\ref{e:NLSM}) where $\gamma$ is set to 0. Examples of $H$-functions which do not correspond to Normal LSM will be given in Section \ref{s:Hfunc}. NSMs are a large class, but some other potentially interesting examples can be constructed in a similar manner to the $H$-Normal NSM above, as is detailed in the technical appendix. 

While $H$-Normal NSMs indeed take advantage of many features of the normal distribution, they are actually not very restrictive. Instead of representing the CDF of a linear combination of normal random variables as in (\ref{e:H-normal-rho}), the $H$-function in (\ref{e:HNSM}) can represent the CDF of some other weighted combination of random variables, in which case the underlying ``shape" of the connections between communities $i$ and $j$ will look very different, as can be seen in Figure \ref{f:different-h-funcs}. This paper will focus on $H$-Normal NSMs because all $H$-Normal NSMs incorporate normally distributed errors. 

Finally, we introduce a bit more terminology. In the LSM (\ref{e:LSM}), we distinguish the following cases with specific terms:
\begin{itemize}
  \item $\alpha > 0$, $\beta > 0$: positive association,
  \item $\alpha < 0$, $\beta < 0$: negative association,
   \item $\alpha \cdot \beta < 0$: Simpson association,
  \item $\alpha \ne 0$, $\beta = 0$: projection onto 1st coordinate,
   \item $\alpha = 0$, $\beta \ne 0$: projection onto 2nd coordinate.
\end{itemize}

\section{$H$-functions}\label{s:Hfunc}
We begin by introducing some terminology. 
\begin{definition} \label{d:pos-ass} (Positive association.) 
A function $H:(0, 1) \times (0,1) \rightarrow (0,1)$ is an $H$-function with \textit{positive association} if:
\begin{enumerate}
  \item $H$ is non-decreasing in both arguments;
  \item  $\iint_{H(x, y)\le z} dx dy =z$, for all $z \in (0,1)$.
\end{enumerate} 
\end{definition}
The term ``positive association" refers to the fact that, when considered across communities, such models would tend to produce larger weights for nodes in the two communities with simultaneously larger sociabilities. The monotonicity condition 1 captures the idea of node sociability as discussed above. Condition 2 is equivalent to requiring that $H(\Psi_u, \Psi_ v) = \Psi_{u v}$ is a U$(0,1)$ random variable. In contrast with copulas, the output of a positively associated $H$-function is not bounded above by the minimum of the inputs. 
 
\begin{definition}\label{d:neg-ass} (Negative association; Simpson association.) A function $H:(0, 1) \times (0,1) \rightarrow (0,1)$ is an $H$-function with \textit{negative association} if $H(1-x, 1-y)$ is an $H$-function with positive association. A function $H:(0, 1) \times (0,1) \rightarrow (0,1)$ is an $H$-function with \textit{Simpson association} if $H(x, 1-y)$ or $H(1-x, y)$ is an $H$-function with positive association. 
\end{definition}

If $\Psi_u$ is a uniform random variable, then $1-\Psi_u$ is also a uniform random variable, so negative association also ensures that $H(\Psi_u, \Psi_ v) = \Psi_{u v}$ is a uniform random variable.  A similar observation can be made for Simpson association. The term ``negative association" arises because the monotonicity of $H$ results in the fact that, when looking across the communities, nodes with greater node sociabilities actually tend to have smaller edge weights. Simpson associations are so named because they indicate a localized area where certain broader trends of the network may be inverted. This error at the local level when extrapolating from global phenomena is reminiscent of Simpson's paradox.    

A property shared by all $H$-functions is that $H(\Psi_{u}, \Psi_{v})$ is a U$(0, 1)$ random variable for such independent random variables $\Psi_u, \Psi_v$. There are many ways to achieve this, but one quite general construction which we found to be flexible and interesting is as follows. Note that a random variable $F^{-1}(\Psi)$ has the CDF $F$ for a U$(0,1)$ random variable $\Psi$. Take now two CDFs $F_1, F_2$  and let $F_{1,2} = F_1 * F_2$ be their convolution CDF. Then $F_1^{-1}(\Psi_u) + F_2^{-1}(\Psi_v)$ has the same distribution as  $F_{1,2}^{-1}(\Psi_{u v})$. This suggests setting
\begin{equation}\label{e:HCDF}
H(x, y) = F_{1,2}(F_1^{-1}(x) +F_2^{-1}(y)).
\end{equation}
By construction, this function satisfies the condition 2 of Definition \ref{d:pos-ass}, but one can easily check that condition 1 holds as well. The function (\ref{e:H-normal-rho}) is an example of $H$-function in the form of (\ref{e:HCDF}) with an explicit convolution $F_{1,2}$. Besides the normal distributions, choosing $F_1$ and $F_2$ to be exponential, Cauchy, or uniform would also give an explicit form for $F_{1,2}$, although (\ref{e:HCDF}) is far more general than these simple cases imply. 

\begin{figure}
 \includegraphics[width=\linewidth]{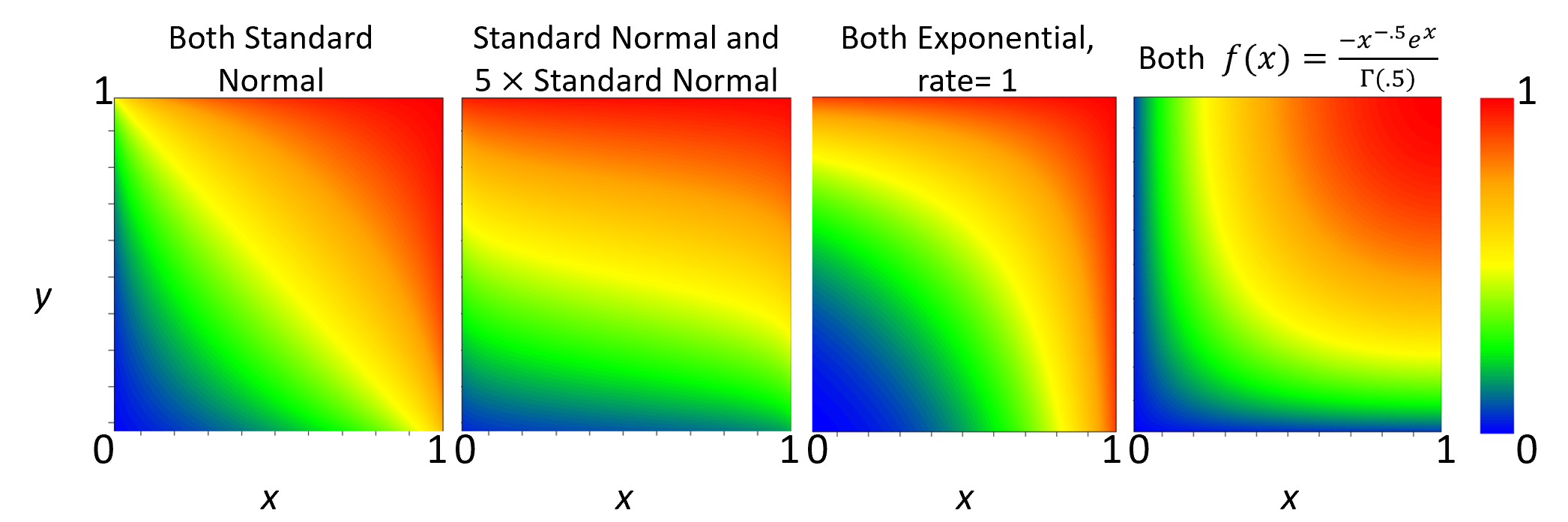}
\caption{Plots of examples of $H$-functions.}
\label{f:different-h-funcs}
\end{figure}

Figure \ref{f:different-h-funcs} illustrates some of the different kinds of contours that can be created using $H$-functions of the form in (\ref{e:HCDF}). The resulting weighted bipartite subnetworks between 2 different communities generated using these $H$-functions in $H$-Normal NSM (\ref{e:HNSM}) would inherit similar connectivity patterns, albeit with normally distributed ``errors" included. In this case, the x and y axes represent the values of $x$ and $y$, respectively, each running from .01 to .99 by increments of .01, and the colors represent the output of the $H$-function. From left to right, the first plot shows the values of $H(x, y) = \Phi_{1+\rho^2}(\Phi_1^{-1}(x) + \Phi_{\rho^2}^{-1}(y))$  where $\rho =1$. The second plot corresponds to this function with $\rho = 5$. The third plot depicts (\ref{e:HCDF}) where $F_1$ and $F_2$ are both exponential distributions with a rate parameter 1, and $F_{1,2}$ is a gamma distribution with shape parameter of 2 and rate parameter of 1. Finally, the rightmost plot is from the $H$-function (\ref{e:HCDF}) where  $F_1$ and $F_2$ have density

$$f(x) =
  \begin{cases} \frac{-x^{-.5}e^x}{\Gamma(.5)}, & x< 0, \\
  0, & \text{otherwise}. 
   \end{cases}
$$
\\
In this case, $F_{1,2}$ can be checked to be given by
$$F_{1,2}(z) =
  \begin{cases}
 e^{z}, & z< 0, \\
  1, & z\ge 0 . 
   \end{cases}
$$

These different images show that $H$-functions (\ref{e:HCDF}) make a rather flexible class. Other $H$-functions include maps to the first or second coordinates, which would give perfectly vertical or horizontal contours.

\section{Estimation with known communities}\label{s:intronmici}

In this section, we discuss estimation of the different models discussed in Section \ref{s:cosmo}, while assuming that the true community labels $\{ i \}$ of the nodes in the network are known. As far as estimation goes, we do not impose that each node's estimated $\widehat{\Psi}_u$ value is constant globally, but rather only constant over each community. It may be desirable in future work to align these $\widehat{\Psi}_u$ estimates over the whole network, but the presented estimation processes are more flexible. 
 Even using this assumption, the model parameters to be estimated depend on the specific model in question. Given a particular set of community labels, we can treat each subnetwork of the larger network -- where we analyze the connectivity patterns between 2 different communities $i$ and $j$ -- as a bipartite graph, and any subnetwork where we look at the connectivity within a single community as a smaller network.

\subsection{Estimation for Normal LSM}\label{s:nmicim1}
 We assume henceforth that $i$ and $j$ are fixed and work on one smaller subnetwork. It is assumed that the transformation $f$ in (\ref{e:NLSM2}) has already been performed, so without loss of generality,  $f(W_{uv}) =W_{uv}$. We also assume for simplicity that $\gamma =0$. Then, the model (\ref{e:NLSM2}) can be expressed after exponentiation as
\begin{equation}\label{e:exp_edge}
e^{W_{uv}} = e^{\alpha Z_u}e^{\beta Z_v}e^{\sigma \epsilon_{uv}} = e^{\alpha Z_u}e^{\beta Z_v}e^{\frac{\sigma^2} {2}}\left(\frac{e^{\sigma \epsilon_{uv}}}{\E e^{\sigma \epsilon_{uv}}}\right),
\end{equation}
where the last term within the parentheses has an expected value of 1. The structure of (\ref{e:exp_edge}) enables the use of rank-one Nonnegative Matrix Factorization (NMF) to estimate the parameters of interest. NMF approximates the matrix represented by $e^{W_{uv}}$ as the decomposition $ab^\prime$ where $a = (a_u)$ and $b = (b_v)$ are column vectors with positive entries. Taking the log of this approximation yields the approximate identity
\begin{equation}\label{e:NMF}
\log(a_u) + \log(b_v) \approx \alpha Z_u +\beta Z_v+ \frac{\sigma^2}{2},
\end{equation}
which suggests the following estimators for the model parameters of interest:
\begin{itemize}
  \item[] $\widehat{\alpha}$ = SD$(\log(a))$,  \, $\widehat{\beta}$ = SD$(\log(b))$,

    \item[] $\widehat{Z}_u =  \frac{\log(a_u) - \overline{\log(a)}}{\widehat{\alpha}}$, \,  $\widehat{\Psi}_u = \Phi_1(\widehat{Z}_u)$, \,   $\widehat{Z}_v =  \frac{\log(b_v) - \overline{\log(b)}}{\widehat{\beta}}$, \, $\widehat{\Psi}_v = \Phi_1(\widehat{Z}_v)$,
\end{itemize} 
where SD stands for the standard deviation, and $\overline{\log(x)}$ indicates the sample mean of $\log(x)$. In light of (\ref{e:NLSM2}), we also set $\widehat{\sigma}$ = SD$(W_{uv} - \widehat{\alpha}  \widehat{Z}_u- \widehat{\beta} \widehat{Z}_v)$.  An adjustment for subnetworks where all nodes are in the same community is given in Appendix \ref{sa:repeats}.

A concentration inequality for a Normal LSM bounding the difference between the best possible estimates of the network's SC to the true generating SC process without any ``error" included is given in Appendix \ref{sa:LSM-proof}.

\subsection{Extension to LSM}\label{s:nmicim2}

The difference between (\ref{e:NLSM2}) and (\ref{e:LSM}), ignoring extra subscripts, is that rather than having standard normal random variables ascribed to each node, (\ref{e:LSM}) includes random variables $h_1(\Psi_u)$ and $h_2(\Psi_v)$ with possibly different forms, albeit with identical first 2 moments. The procedure described in Section \ref{s:nmicim1} will still apply with one exception. The relation (\ref{e:NMF}) cannot directly estimate $\widehat{Z}_u$ or $\widehat{Z}_v$, but rather $\widehat{h_1(\Psi_u)}$ and $\widehat{h_2(\Psi_v)}$. After getting these estimates, a distribution can be fit to the data points while assuming that $\Psi_u$ and $\Psi_v$ are truly distributed uniformly over the unit interval. The best fitting distribution can then be inverted to estimate $\widehat{\Psi}_u$ and $\widehat{\Psi}_v$. Finally, the parameter $\sigma$ is estimated to be the standard deviation of 
$W_{uv} - \widehat{\alpha}  \widehat{h_1(\Psi_u)}- \widehat{\beta} \widehat{h_2(\Psi_v)}$.

\subsection{Estimation for $H$-Normal NSM}\label{s:nmcomp}   

Though using NMF is appropriate when the function $h$ in (\ref{e:NSM}) is linear, it is unsuitable for nonlinear functions, which require an alternative methodology.
For an estimator of $G_{ij}$, by the construction of the graph in the $H$-Normal NSM (\ref{e:HNSM}), one could naturally set the empirical CDF of the weights $\{W_{uv}: u \in i, v \in j\}$. We make a small modification to this and instead set
\begin{equation}\label{e:edge-ECDF}
\widehat{G}(w) = \frac{1}{n+1}\sum_{W_{uv}:\, u \in i, \, v \in j} \mathbbm{1}_{\{W_{uv}\le w\}}\, ,
\end{equation}
where $n$ is the number of edges $W_{uv}: u \in i, v \in j$. That is, we divide by  $n+1$ in (\ref{e:edge-ECDF}) instead of $n$. There are two reasons for this. First, note that the model in (\ref{e:HNSM}) implies a distorted but generally linear relationship between $\Phi_1^{-1}(G(W_{uv}))$ and $\Phi_1^{-1}(H(\Psi_u, \Psi_v))$. We shall use this relation to estimate the function $H$, and at the empirical level, shall consider $\widehat{G}(W_{uv})$ in place of $G(W_{uv})$. Dividing by $n+1$ ensures that $\widehat{G}(W_{uv}) < 1$, so $\Phi_1^{-1}(\widehat{G}(W_{uv}))$ is finite. Second, modulo any dependence issues, if one thinks of (an appropriate scaling of) $H(\Psi_u, \Psi_v)$ as representing the order statistics of $n$ uniform random variables on (0,1), recall that the $k$th order statistic follows a Beta$(k, n+1-k)$ distribution, which has a mean of $\frac{k}{n+1}$. At the mean level, it is then natural to place $\widehat{G}(W_{uv})$ at multiples of $\frac{1}{n+1}$, not $\frac{1}{n}$.
In fact, we use one other modification to the definition (\ref{e:edge-ECDF}) when the values $W_{uv}$ repeat, which can be found in Appendix \ref{sa:repeats}.

For node sociabilities $\Psi_u$, we define them locally based on two communities $i$ and $j$ (and possibly $i=j$ so there is only one community). For node $u \in i$, consider 
 \begin{equation}\label{e:local-degree}
D_{j}(u) = \sum_{v:\, v \in j}\Phi_1^{-1}(\widehat{G}(W_{uv})).
\end{equation}
 We think of $D_{j}(u)$ as a ``local sociability statistic" of $u$, since it looks at how $u$ connects to one community $j$, rather than the whole network. By the construction of the NSM model (\ref{e:HNSM}) and the properties of positively associated $H$-functions, if $H$ is positively associated, one expects the ordering of the local sociability statistics $D_{j}(u)$'s of those nodes in community $i$ to match the ordering of the sociabilities $\Psi_u$. This suggests setting 
 \begin{equation}\label{e:psi-est}
\widehat{\Psi}_u^{(j)} = \frac{1}{n_i+1}\sum_{u': u' \in i} \mathbbm{1}_{\{D_j(u')\le D_j(u)\}},
\end{equation}
where $n_i$ is the number of $u' : u' \in i$. That is, defining $\widehat{\Psi}_u^{(j)}$ as the rescaled ordering of the ``local sociability statistic" of $u$ in its community $i$ with respect to community $j$. As in (\ref{e:edge-ECDF}), note the division by $n_i +1$ in (\ref{e:psi-est}), placing the  $\widehat{\Psi}_u^{(j)}$ values at the expected values of the order statistics of $n_i$ draws from a U$(0,1)$ distribution. When the association of $H$ is negative, we expect the ordering of these local sociability statistics $D_{j}(u)$ to have a strong negative correlation with the true $\Psi_u$ values. In other words, if the true $H$ has negative association, we expect the ordering of the local sociability statistics $D_{j}(u)$'s of those nodes in community $i$ to match the ordering of $1 - \Psi_u$. The estimation of $H$ described next will therefore adapt automatically to any form of the association of $H$.

We view the estimation of $H$ as choosing the best candidate from a set $\mathcal{H}$ of $H$-functions. This set can be parametric (e.g. parametrized by $\rho^2$ in (\ref{e:H-normal-rho})) or consist of several $H$-functions. More precisely, we set:
\begin{equation}\label{e:optimization}
\begin{aligned}
\widehat{H}, \widehat{\sigma} =  & \argmin {H\in \mathcal{H}, \sigma \ge 0} \sum\limits_{u \in i, v \in j} \left(\Phi_1^{-1}(\widehat{G}(W_{uv})) - \frac{1}{\sqrt{1+\sigma^2}}\Phi_1^{-1}( H(\widehat{\Psi}_u^{(j)}, \widehat{\Psi}_v^{(i)}))\right)^2.
\end{aligned}
\end{equation}
The optimization in  (\ref{e:optimization}) is carried out numerically over different functional forms of $H$, and the associated minimizing choice of $\sigma$ is taken as the estimated $\widehat{\sigma}$.

\subsection{Estimated network sociability}\label{s:examps}

There may be a desire to examine the SC of the network implied by the parameter estimates. In this case, the ``estimated edge values" are given by
\begin{equation}\label{e:estimate}
\widehat{W}_{uv} =   \argmin {w} \left|\Phi_1^{-1}(\widehat{G}(w)) - \frac{1}{\sqrt{1+\widehat{\sigma}^2}}\Phi_1^{-1}(\widehat{H}(\widehat{\Psi}_u^{(j)}, \widehat{\Psi}_v^{(i)}))\right|.
\end{equation}
These estimates seek to smooth out the effects of any ``errors" observed over particular edges in the original network. As $\widehat{\sigma}$ grows, $\frac{1}{\sqrt{1+\widehat{\sigma}^2}}\Phi_1^{-1}(\widehat{H}(\widehat{\Psi}_u^{(j)}, \widehat{\Psi}_v^{(i)}))$ shrinks to 0, so the estimate $\widehat{W}_{uv}$ tends toward the median edge weight in the subnetwork. If a subnetwork has a large estimated $\widehat{\sigma}$ value, the range of the estimated subnetwork will be much smaller than the range of the observed subnetwork. By contrast, bootstrap replicates of the kind described in Section \ref{s:bootstrap} below are expected to have the same variance structure as the original network.

\subsection{Spurious patterns}\label{s:spurious}
Note that the $H$-Normal NSM (\ref{e:HNSM}) allows for the independent edges $\epsilon_{uv}$ only in the limit $\sigma \rightarrow \infty$. In practice, even when only independent edges $\epsilon_{uv}$ are present, a finite value of $\sigma$ will be estimated, and a spurious sociability pattern will be ``found." (This is discussed in connection with Figure \ref{f:planted} below). This scenario could be flagged by examining suitable MSEs.

Using results of (\ref{e:optimization}), the MSE in ``normal" space is defined as
\begin{equation}\label{e:MSE}
\frac{1}{N} \sum\limits_{u \in i, v \in j} (\Phi_1^{-1}(\widehat{G}(W_{uv})) - \frac{1}{\sqrt{1+\widehat{\sigma}^2}}\Phi_1^{-1}( \widehat{H}(\widehat{\Psi}_u^{(j)}, \widehat{\Psi}_v^{(i)})))^2,
\end{equation}
where $N$ represents the number of edges connecting nodes in $i$ to nodes in $j$. If $\widehat{\sigma} \rightarrow \infty$, then by construction,  MSE $\rightarrow$ 1. With no upper bound on $\sigma$, in practice we expect the MSE for independent edges to be slightly smaller than 1. We can compare the observed MSE of a subnetwork to the MSE values we get when edge weights really are generated as independent $\epsilon_{uv}$.
 
Where overfitting is suspected in a particular subnetwork, we can draw completely random $\mathcal{N}(0,1)$ edge weights and create a fictional subnetwork of the same size as the observed subnetwork. From there, we repeat the estimation process to calculate the MSE from this synthetic subnetwork, with the additional restriction that, using the terminology of (\ref{e:HCDF}), $F_1$ and $F_2$ of the estimated $H$-function for the fictitious subnetwork must be of the same distributional family as the $F_1$ and $F_2$ in the $\widehat{H}$ estimated for the real data. 
We can generate many fictitious subnetworks, and if the MSE obtained from the true subnetwork is smaller than some large proportion of the fictional MSE values, the estimates from (\ref{e:estimate}) should be retained. Otherwise, we replace all estimated edge weights in the subnetwork with the median edge weight in the subnetwork.

\section{Bootstrap}\label{s:bootstrap}
In certain cases, such as brain scans, it can be difficult to obtain multiple measurements of the same network where the underlying structure broadly remains the same, but there may be some variation at the level of particular edges. For this reason, we want a procedure which can generate new networks that can mimic the structure of real networks, much the same way the classical bootstrap can be used to generate new samples from a single sample.  In Section \ref{s:intronmici}, we estimated a network using $H$-functions. In this section, we extend our construction to generate new network samples that still allow for both flexible connectivity patterns between communities as well as node specific degree correction effects. 

Assuming the community assignments are correct, pairwise functions $\widehat{H}$ and parameters $\widehat{\sigma}$ are estimated based on the estimated sociabilities $\widehat{\Psi}_u^{(j)}$. To get a bootstrapped edge weight, we can draw a new $\widetilde{\epsilon}_{uv} \sim \mathcal{N}(0, 1)$  in (\ref{e:HNSM}) and set  


\begin{equation}\label{e:bootstrap}
\widetilde{W}_{uv} =   \argmin {w} \left |\Phi_1^{-1}(\widehat{G}(w)) - \left(\frac{1}{\sqrt{1+\widehat{\sigma}^2}}  \Phi_1^{-1}(\widehat{H}(\widehat{\Psi}_u^{(j)}, \widehat{\Psi}_v^{(i)}))+ \frac{\widehat{\sigma}}{\sqrt{1+\widehat{\sigma}^2}} \widetilde{\epsilon}_{uv}\right)\right|.
\end{equation}

When the true $\sigma$ is small, $\widehat{\sigma}$ can be estimated to be 0. However, for the purposes of the bootstrap, it is useful to include randomness; otherwise each bootstrap replicate network will be identical. In the case where $\widehat{\sigma} < c$ (a small positive value), we can replace $\widehat{\sigma}$ in (\ref{e:bootstrap}) with the MSE given by (\ref{e:MSE}). 

If, after performing the procedure described in Section \ref{s:spurious}, we believe there is no relationship between edge weights and their incident nodes, for each bootstrap replicate, we instead draw every edge at random with replacement from the relevant edge set. In that case $$\widetilde{W}_{uv} =  \argmin {w} \ \left |\Phi^{-1}(\widehat{G}(w)) - \widetilde{\epsilon}_{uv}\right|.$$ 

\section{Community detection}\label{s:comm-det}
Estimates above depend on assigning each node into its community. In this work, community $i$ is defined as a subset of nodes which all share a common $f_{ij}$, $h_{ij}$ and $\sigma_{ij}$ for each particular corresponding community $j$, as given in (\ref{e:NSM}). As nodes in the same community share functions to generate edge weights, patterns in edge weights can be used to cluster nodes into communities. As estimating $\widehat{G}$ requires defining the estimated node set in each community, perhaps surprisingly, the clustering techniques discussed in this section do not depend on $\widehat{G}$, but instead rely upon a measure of cluster goodness.

\subsection{Measure accounting for sociability}\label{s:icim1}
Letting $\sigma = 0$ and fixing any two communities $i$ and $j$, for any model discussed in Section \ref{s:cosmo}, with $u_1, u_2 \in i$, $v_1, v_2 \in j$, if $W_{u_1 v_1} > W_{u_1 v_2}$, then so too is $W_{u_2 v_1} > W_{u_2 v_2}$. Nodes in the same community $i$ share an ``order of preferences" over nodes in another particular community $j$, as reflected by persistently greater edge weights. Even allowing for positive $\sigma$, a good clustering for these models should reflect a relatively consistent order of preferences. Since this order of preferences over nodes in $j$ is expected for all nodes in community, the local degree of each node $v$ in community $j$ with respect to community $i$, $$d_i(v)  = \sum_{u':\, u' \in i}W_{u'v},$$ should also display this ordering. For example, in any plot in Figure \ref{f:different-h-funcs}, comparing any set of columns, the rightmost column (representing the node with the larger $\Psi$ value) never has a smaller edge weight value than the corresponding location in the left column. That is, if $W_{u_1v_1} > W_{u_1v_2}$, then $d_i(v_1) > d_i(v_2)$.  For each node $u \in i$ and each community $j$, we can then define a node-community correlation as  
$$C_{ij}(u) = corr\{(d_i(v), W_{uv}): v \in j, u \ne v\}.$$ 

For a good clustering with estimated communities $\{\, \widehat{i} \,\}$, fixing any node $u \in \widehat{i}$ and looking at all nodes $v \in \widehat{j}$, the edge weights $W_{uv}$ should be correlated with  $d_{\widehat{i}}(v)$. 
A good estimated clustering should result in large, positive $C_{\widehat{i}\,\widehat{j}}\,$ values everywhere. Even so, when $\sigma > 0$, even nodes in the same community $i$ may exhibit minor variation in preferences over nodes in community $j$. Additionally, we could ensure perfect correlation between a node's edge weights and its community's preferences if we made that node its own community, but that would be overly prescriptive. While individual $C_{\widehat{i}\,\widehat{j}}$ values may be useful for diagnosing localized issues, in a larger network, it is preferable to aggregate these values to get a system-wide overview of clustering success. For a particular assignment of communities, we define our measure as:  
\begin{equation}\label{e:measure}
L(\{\,\widehat{i}\,\}, \widehat{K})= \sum_{\widehat{i}=1}^{\widehat{K}}\sum_{\widehat{j}=1}^{\widehat{K}} \overline{C_{\widehat{i}\,\widehat{j}}(u)}\times\left(1-\sqrt{SD(C_{\widehat{i}\,\widehat{j}}(u))}\right) \times ((n_{\widehat{i}}-2)(n_{\widehat{j}}-2))_+ \times (1+ \mathbbm{1}_{\{\widehat{i}=\widehat{j}\}}),
\end{equation}
where $\widehat{K}$ is the total number of estimated communities in the network, and $n_{\widehat{i}}$ is the number of nodes in community $\widehat{i}$. The average and standard deviation of $C_{\widehat{i}\,\widehat{j}}(u)$ in (\ref{e:measure}) are over $u \in \widehat{i}$. 

Nodes placed in the same community should exhibit a shared ordering of preferences over nodes in any other community, represented by the average of the $C_{ij}$ values. In addition to rewarding clusterings that show consistent ordering of preferences, the measure $L$ also prefers clusterings with less variation in $C_{ij}$ values for a fixed $i$ and $j$, which can reflect a shared $\sigma$ value. By multiplying by $((n_i-2)(n_j-2))_+$, there are increasing returns to scale in the size of communities. Without increasing returns to scale, nodes could be clustered into many dyads or triads, all producing consistently large absolute $C_{ij}$ values, but this clustering would lead to overfitting. Under $L$, communities of size one or two are worthless. Finally, within community subnetwork performance is counted twice to balance the influence of all subnetworks. Consider a network with 2 communities of 52 nodes each. Without this doubling, the two within community subnetworks would each have a maximum possible contribution of 2500 to the measure, while the between community subnetwork would have a maximum possible contribution of 5000.  

The measure $L$ tries to find the appropriate balance between size and homogeneity of the estimated communities. Increasing returns to scale are crucial because they induce larger communities, even if they contain some nodes with minor deviations from the community's collective ordering of preferences. However, if multiple nodes have preferences at odds with the rest of their assigned community, it would become beneficial to separate this set of crosscutting nodes into their own splinter community to improve the totality of the measure. Of course, just as modularity may not be ideal for community detection in every network model, in cases where node sociabilities do not matter (akin to a standard SBM), this measure will not be effective at recovering the true communities.  

It's worth noting here that absence of an ordering of preferences can also be a valid shared ordering of preferences. In the simplest case, all weights between communities $i$ and $j$  can be identical, or they can all be generated as i.i.d. random variables. This still may be useful for clustering. For example, if communities $i$ and $j$ have identical functions to generate both within community and between community edges, it may be inappropriate to call them two different communities. However if the edges between $i$ and a third community $k$ are generated completely at random, but the generating process of edges between $j$ and $k$ has some kind of association, that should distinguish nodes in community $i$ from nodes in community $j$.

In Appendices \ref{s:greedy-alg} and \ref{s:spectral-alg}, we present 2 community detection algorithms which try to maximize the measure $L$. One is stochastic, while the other is bottom-up and deterministic. Experimentally, there have been occasions where each algorithm outperforms the other. Unless otherwise noted, community estimates presented in figures in this paper are the $L$ maximizing clustering given by one of these algorithms. 




\section{Robustness of estimation procedure}\label{s:robust}
The estimation pipeline described above is tailor made for the dense weighted networks described in Section \ref{s:cosmo}. However, the procedure still appears to succeed for related networks which are not explicitly NSMs or LSMs. 

\subsection{Sparser networks}\label{s:sparse}
While the discussion so far has centered on dense networks, in this section, we propose an extension where many edges may be missing, and there are two layers to the generative model. In this instance, it is necessary to distinguish between the adjacency network, which is the set of present edges, and the set of weights of those edges. In principle, the set of communities in the adjacency network could be different than the set of communities in the weights, but we only consider the case where they are the same. However, we do allow potentially different sets of sociability parameters. 
To generate a network with missing edges, we can use existing models such as the SBM, DCBM, or PABM to generate the adjacency network. To generate the edge weights, first we generate a dense weighted network as in this paper, then take the Hadamard product of the adjacency network matrix and the dense weighted network.
     
Moving from generation to estimation, we can extract the adjacency network of an observed network by replacing any non-zero weights with 1, then use established community detection and estimation methods to cluster the nodes and estimate the probability of an edge's presence. With these estimated community memberships, we can iteratively use a modified version of the estimation method described in Section \ref{s:intronmici} to estimate the generative process for the edge weights, as shown in Algorithm \ref{alg:missing}. First, we estimate the model on the observed network while ignoring missing (zero valued) edges. Then, we replace any missing edges in the original network with the estimates from our model. We then re-estimate the model based on this updated network, and continue to update those edges which were missing in the original network. This iteration is useful because a node with larger edge weights could, by chance, have many missing edges, which would deflate its estimated $\widehat{\Psi}$ value. With multiple iterations, those missing values should be replaced by better and better estimates, which should hopefully mitigate the impact of these missing values on our estimate of the edge weight generating process. This progression has been observed in simulations, and the performance of Algorithm \ref{alg:missing} is discussed in Section \ref{s:sparse_nets}.   

\begin{algorithm}\label{alg:missing}
\SetAlgoLined
\KwResult{$\hat{E}_k$, $\hat{H}$, $\hat{\sigma}$}
\KwInput{$W$, $\epsilon$}
\nl $k$ = 0; $W_0$ = $W$; $\Delta = \infty$; $\hat{E}_0$ = zero matrix of appropriate size\; 

\nl \While {$\Delta > \epsilon$}{
\nl Estimate $\hat{H}$ and $\hat{\sigma}$ for $W_k$ using the methods described in Section \ref{s:nmcomp}. When k = 0, ignore any zero valued (missing) edges in all numbered equations\;
\nl k$++$\;
\nl Using the estimated $\hat{H}$ and $\hat{\sigma}$, generate all edges in $\hat{E}_k$ via (\ref{e:estimate})\;
\nl $\Delta =||\hat{E}_k - \hat{E}_{k-1}||_F^2$ \;
\nl For any zero valued (missing) edge in $W_0$, substitute in the corresponding edge from $\hat{E}_k$ to calculate $W_k$\;
}
\caption{Estimating a subnetwork with missing edges}
\end{algorithm}

After this estimation process completes, we can move back from estimation to generation. The upshot of this entire process is that given one weighted network with missing edges, we can estimate both the adjacency network generating process and the edge weight generating process. This information can serve to generate new weighted networks with missing edges, using (\ref{e:bootstrap}) to get a synthetic edge weight network, and using the estimated SBM-type parameters based on the observed adjacency network to generate a synthetic adjacency network. Finally, take the Hadamard product of these two matrices to generate a synthetic network where the distributions for each edge weight match the estimated distribution in the original network, including treating 0 as a missing edge. 
\subsection{Noisy edge weights}\label{s:noiseedge}
Our estimation method appears to work even when a dense network is not generated as an $H$-Normal NSM. Rearranging (\ref{e:HNSM}),
 $$W_{uv} = G^{-1}\left(\Phi_1\left(\frac{1}{\sqrt{1+\sigma^2}}  \Phi_1^{-1}(H(\Psi_u, \Psi_v))+ \frac{\sigma}{\sqrt{1+\sigma^2}} \epsilon_{uv}\right)\right).$$
If $G$ is a distribution with a maximum, that means no edge weight can exceed that maximum. However, if we include another term $\zeta_{uv} \sim \mathcal{N}(0, \sigma_{\zeta}^2)$ into
 $$W_{uv} = G^{-1}\left(\Phi_1\left(\frac{1}{\sqrt{1+\sigma^2}}  \Phi_1^{-1}(H(\Psi_u, \Psi_v))+ \frac{\sigma}{\sqrt{1+\sigma^2}} \epsilon_{uv}\right)\right) + \zeta_{uv},$$
then, if $\sigma_{\zeta}^2 >0$, each edge weight is distributed over $\mathbb{R}$. As can be seen in Appendix \ref{s:noisyedgesim}, the estimation procedure still captures the underlying network dynamics in this case, although the estimates degrade as $\sigma_{\zeta}^2$ increases. 

\section{Simulations}\label{s:net-sims}
For each simulation, we include a description and a figure showing the original network and the estimated underlying network, where the clustering choice is the estimated measure $L$ maximizing clustering using the algorithms presented in Appendix \ref{a:comm-det2}. 
   
\subsection{Varying $\sigma$} 
Figure \ref{f:no-noise-NSM} depicts variations on and estimates of an underlying network with 4 communities of 37 nodes each. In plot A, the network is displayed with $\sigma = 0$.  Within community edges are drawn from a uniform distribution with a maximum of 150, while between community edges are drawn from a uniform distribution with a maximum of 100. Node sociability parameters for both communities range from .05 to .95 in increments of .025. The $H$-function is the same as that used for the rightmost plot in Figure \ref{f:different-h-funcs}, though since the between community edges have negative association, the inputs to that $H$-function are $1 - \Psi_u$ and $1 - \Psi_v$. This network is ordered so one can visually discern communities and connectivity patterns. Even so, the first step is to estimate community structure, as node ordering does not impact the community detection algorithm. The resulting estimated network is shown below the original and looks very similar to the original network.

\begin{figure}[h!]
\centering
 \includegraphics[width=\linewidth]{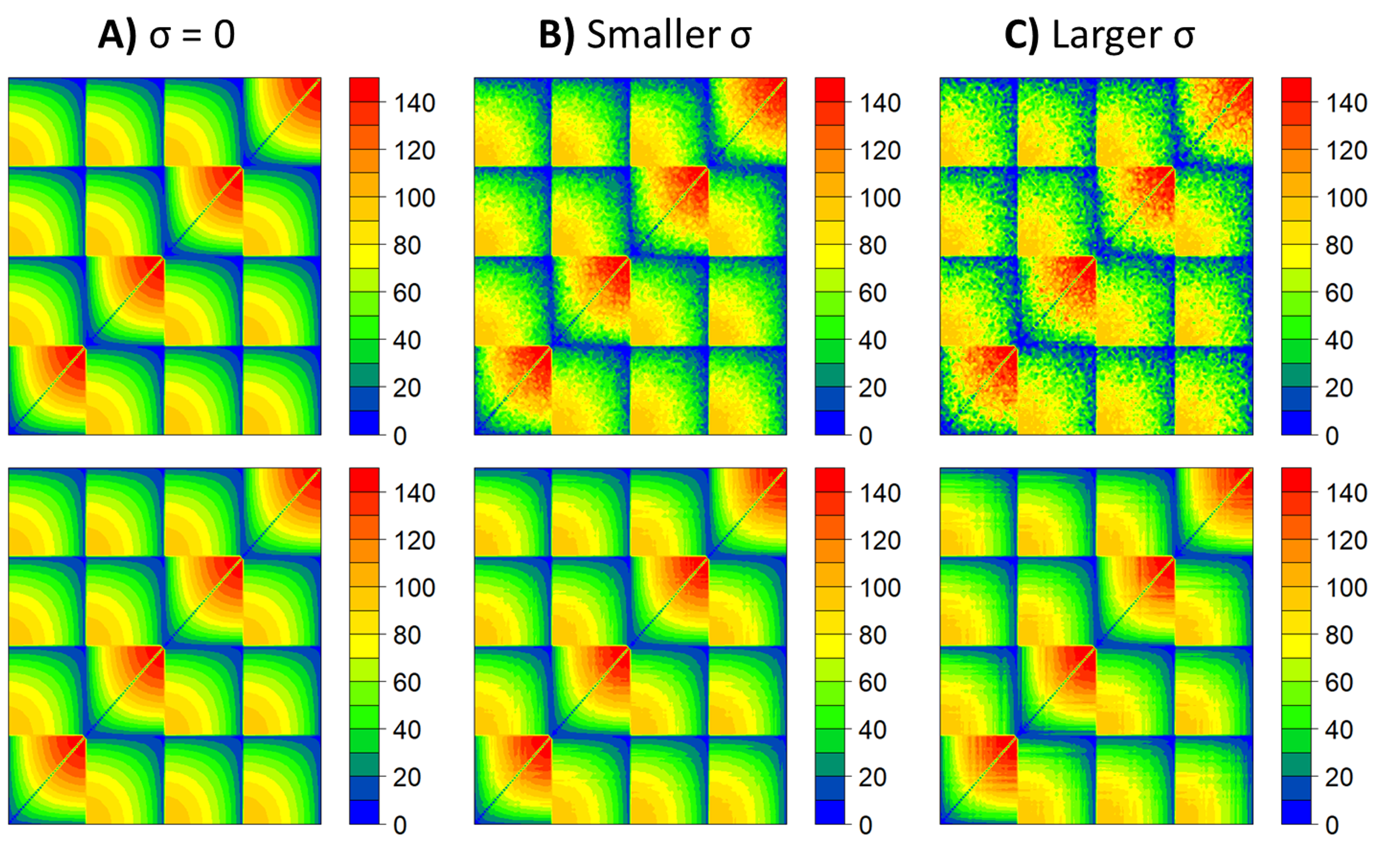}
\caption{A single $H$-Normal NSM with different $\sigma$ values. The estimate of each network is shown below the original network being estimated.}
\label{f:no-noise-NSM}
\end{figure}
Where plot A uses (\ref{e:HNSM}) with $\sigma = 0$, plot B uses $\sigma =.05$ everywhere, leaving the network looking smudged. The estimate looks like a smoothed version of the actual observed network, albeit somewhat ``blurrier" than the underlying network seen in plot A. This performance degradation with increasing $\sigma$ is to be expected. In plot C, $\sigma =.15$ for within community edges, and $\sigma =.2$ for between community edges, and the estimate looks slightly worse than in plot B. 

\subsection{Disassortative network with spurious patterns}
Figure \ref{f:planted} introduces several changes. First, the communities are disassortative, as between community edges are larger than within community edges. Second, the within community edge weights are i.i.d. Third, the between community edge weights are generated using randomly generated Gamma parameters for each node, and using those as inputs into a negative binomial distribution. This is not generated as an $H$-Normal NSM, yet we still use our estimation procedure. Fourth, the nodes are not ordered. If the true community orderings are not known a priori, a network may look like the left plot of Figure \ref{f:planted}. Following estimation, the communities are clustered correctly, and the between community estimate broadly looks smoother than the original network. The initial estimated network appears to amplify spurious structure in the within community edges, giving some order to the pure randomness seen in the original network. Utilizing the procedure discussed in Section \ref{s:spurious}, within community edges for subnetworks with spurious patterns are replaced by the median value of that subnetwork's edge weights. 
\begin{figure}[h!]
 \includegraphics[width=\linewidth]{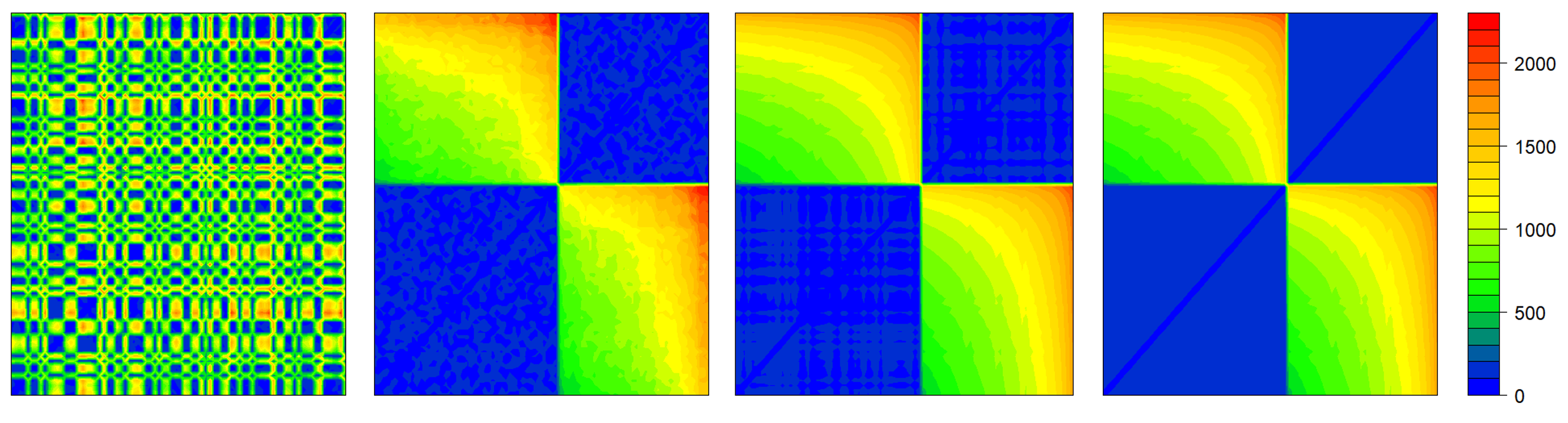}
\caption{Network with spurious pattern of within community edges. From left to right: original, reordered, initial estimate, final estimate.}
\label{f:planted}
\end{figure}
 
\subsection{Missing edges}\label{s:sparse_nets}
Figure \ref{f:missing} shows the network in Figure \ref{f:no-noise-NSM} after deleting many edges at random, along with the final estimated networks using the procedure discussed in Section \ref{s:sparse}. The communities are estimated for the network with 20\% of the edges missing, but are assumed to be known for the network with 75\% of edges missing.  Even accounting for this, as one might expect, the reconstruction is more successful with fewer missing edges.   

\begin{figure}[h!]
 \includegraphics[width=\linewidth]{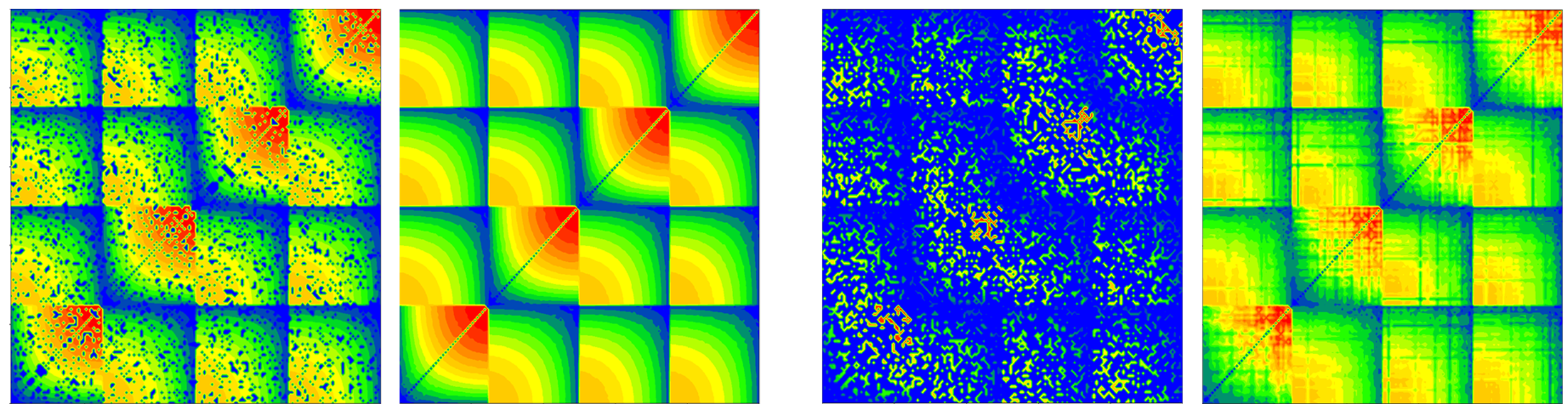}
\caption{Networks with missing edges. Left: network with 20\% of edges deleted and its estimate. Right: network with 75\% of edges deleted and its estimate.}
\label{f:missing}
\end{figure}

\section{Applications}
In this section, we show how the methods described in this paper work on real data where the ground truth clusterings are unknown. In this case, we will show the original network, the network reordered by community then within community degree, and then show the estimated network. 
\subsection{Brain networks}
Figure \ref{f:LDF}, which has already been discussed in Section \ref{s:intro}, shows a preprocessed DTI scan from the ADNI database (http://adni.loni.usc.edu). As in \cite{leinwand2020characterizing}, for this scan, the cortical surface has been parcellated into the 148 regions of the Destrieux Atlas using FreeSurfer on the T1-weighted MRI scan. Then probabilistic fiber tractography was applied on DWI and T1-weighted images using FSL software library to obtain a 148 $\times$ 148 matrix. Each entry in the matrix is the log of the count of white matter fibers connecting two brain regions. 

In contrast to the structural brain network discussed above, Figures \ref{f:ADHD} and \ref{f:typical} show the functional brain networks of subject IDs 293 and 108 from \cite{Brown_2012}, two pre-processed fMRI scans from the ADHD-200 sample.  Both networks come from females, where one is age 10.73 and typically developing, and the other is age 10.81 with ADHD. Both are processed using the Athena pipeline resulting in 190 regions. More details about preprocessing can be found at http://umcd.humanconnectomeproject.org.
 
The most obvious difference between the results is the ADHD network is split into 4 communities, while the typical network breaks into 5 communities. Looking at the typical network, we see clearer negative associations between communities than in the ADHD network, particularly accounting for the slightly different axes. The estimate of both networks shows some ``plaid" looking patterns, as opposed to colors monotonically changing in one direction, which indicates the ordering of $\Psi$ values within communities may not be the same as the ordering of $\Psi$ values between communities. More defined communities and greater negative association patterns in the typical scan would appear to support the hypothesis that ADHD subjects exhibit less modular brain organizations than typical subjects. 

\begin{figure}[h!]
 \includegraphics[width=\linewidth]{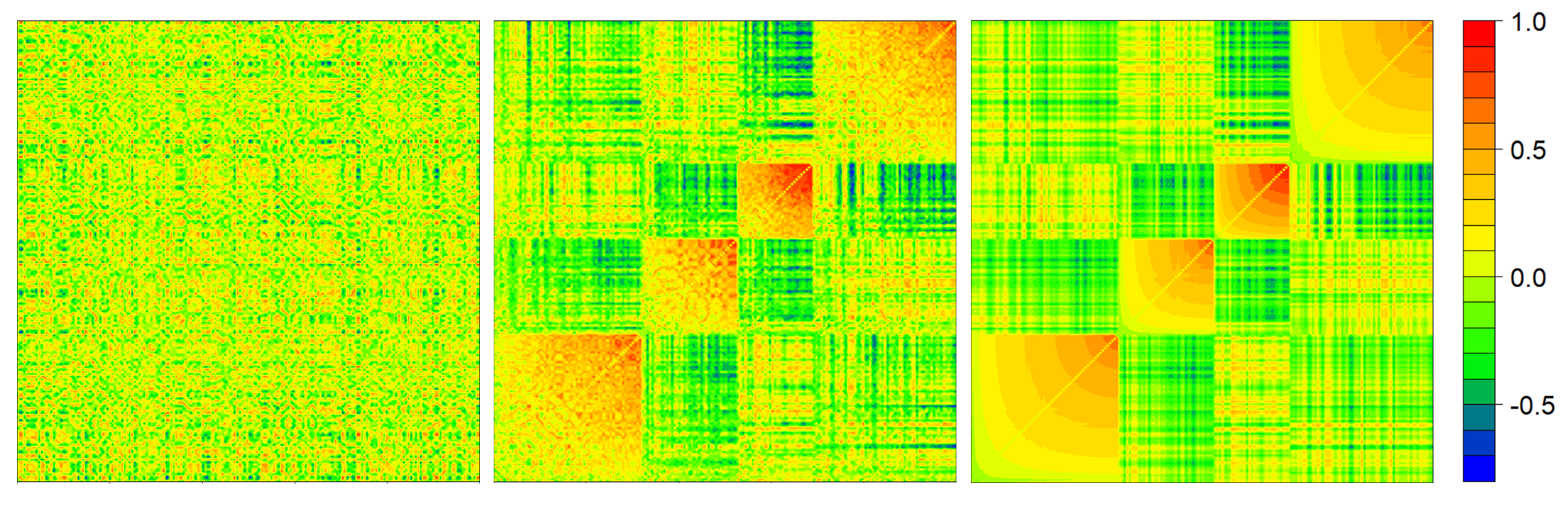}
\caption{ADHD brain network.}
\label{f:ADHD}
\end{figure}
\begin{figure}[h!]
 \includegraphics[width=\linewidth]{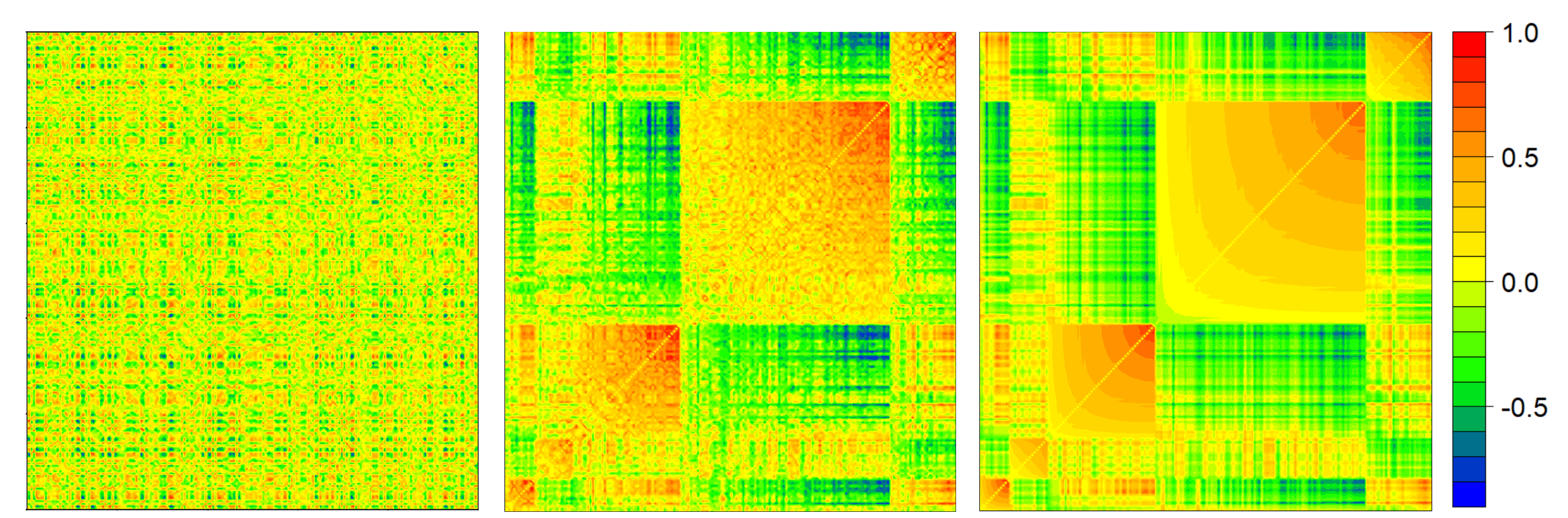}
\caption{Control brain network.}
\label{f:typical}
\end{figure}

\begin{figure}
 \includegraphics[width=\linewidth]{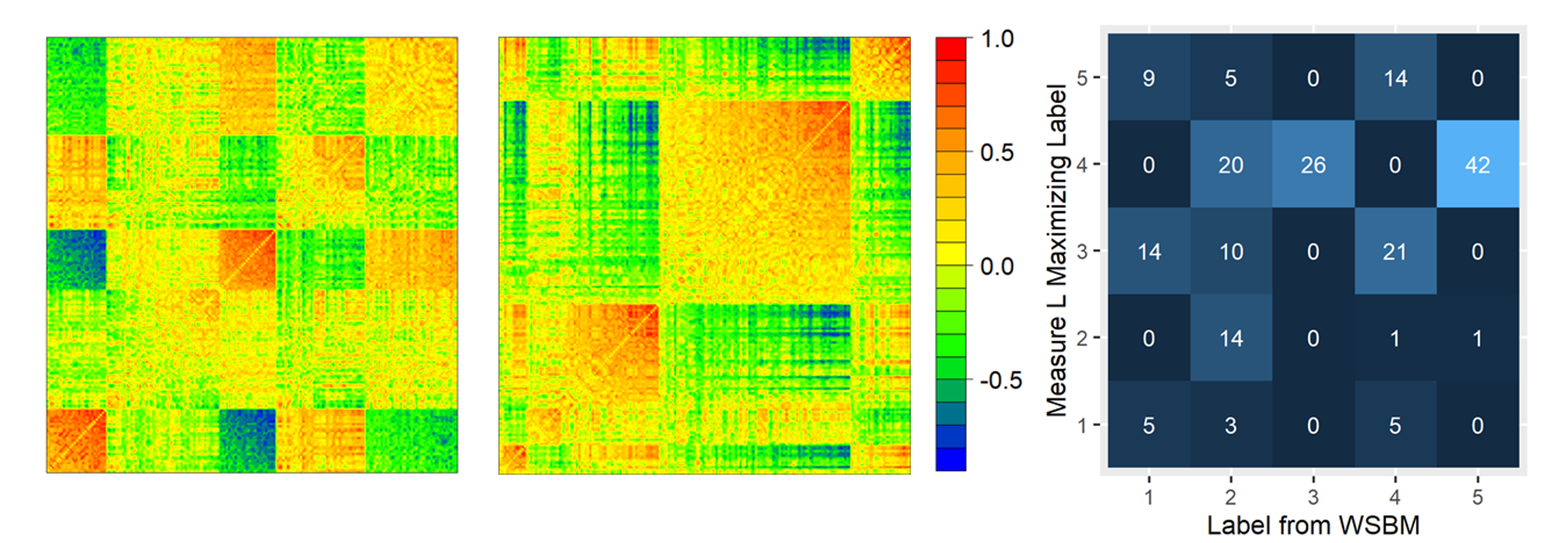}
\caption{From left to right: control brain network reordered by communities estimated using WSBM. The same network reordered by communities estimated using an approximate $L$ maximizing algorithm. The confusion matrix of node labels between these community estimates (note that label values do not reflect ordering).}
\label{f:WSBM_comp}
\end{figure}

Figure \ref{f:WSBM_comp} shows the control network rearranged and estimated based on WSBM community detection, using the Matlab package accompanying \cite{aicher2013adapting} and \cite{aicher2015learning}. Our model does require prespecification of the number of communities nor the distribution of edge weights within or between communities, but for the sake of comparison, we instruct their package to mimic the structure of our results as best as possible, segmenting the network into 5 communities, ignoring the edge distribution, and assuming the weight distribution is Normal. WSBM appears to cluster nodes such that the induced subnetworks have edge weights confined to a relatively narrow band of values, which gives the impression of more solid colors and fewer gradients in the rearranged matrix. It also produces relatively evenly sized clusters. Using an $L$ maximizing algorithm, on the other hand, produces a larger community containing almost half of the nodes. The $L$ maximizing communities produce a measure value of 6580 compared to 3276 for the WSBM communities. Implementing the estimation methods from Section \ref{s:intronmici}, the mean squared error of the final estimated matrix in Figure \ref{f:typical} is .024 compared to .033 if using the WSBM estimates. The WSBM communities also yield larger $\widehat{\sigma}$ values. This provides evidence that the community detection using the measure $L$ is capturing something different than WSBM community detection, and likely a signal more suitable for the estimation methods described in this paper. Further experimentation has shown that WSBM community detection of plot A in Figure \ref{f:no-noise-NSM} does not match the intuitive visual clustering. Functional brain networks may not be organized according to an NSM or LSM, but the presence of detectable negative associations merits further investigation.    

\subsection{State to state migration ``affinity"} 
Taking the state-to-state migration flows data from the 2017 American Community Survey 1-Year Estimates and dividing each cell in that table by the outgoing state's total outflows gives a transition probability matrix for those people who left their state in 2017. For the network shown in Figure \ref{f:migration}, this transition probability matrix is added to its transpose to get a symmetric matrix. This final network ignores the direction of greater inflows or outflows, but instead represents the ``affinity" between the two states in question. The results show geographic communities which appear to give a reasonable segmentation based on geography.  The estimated network displays positively associated within community dynamics, indicating both homophily and degree correction. However, the within community $\widehat{\sigma}$ estimates are relatively large for this network, so these estimates have a relatively narrow range. This may be due to small subnetwork size, but also because some of the largest values lie in the interior of the subnetworks, rather than on the frontier. 
\begin{figure}[h!]
 \includegraphics[width=\linewidth]{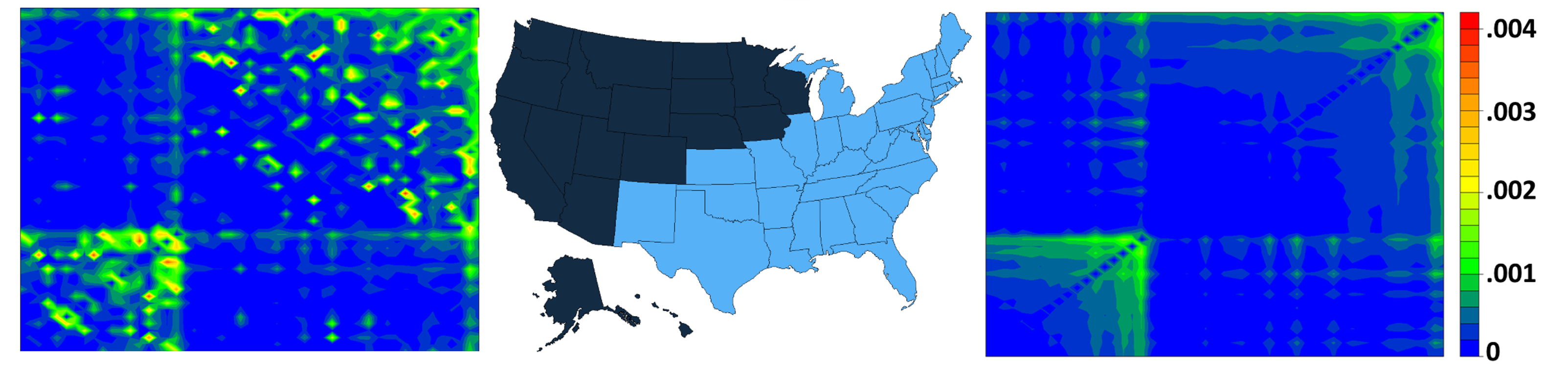}
\caption{2017 state to state migration.}
\label{f:migration}
\end{figure}

\section{Conclusions} 
We have introduced new models for dense weighted networks, wherein edge weights depend on node sociabilities and community memberships. The development of these models spurred estimation techniques for networks of this kind. With minor modifications, these estimation techniques appear to be applicable to an even broader class of networks than those introduced in this paper.  Furthermore, one can use the results from (\ref{e:MSE}) as a gauge of whether the described estimation process is appropriate for a particular network.  

One potential consideration for future work is determining whether $\Psi$ values should be estimated at the global or local level. In a case where each node's $\Psi$ value is globally consistent, estimating it across the whole network would be preferred to estimating several local estimates. However, given potentially different $H$-functions across different subnetworks, pooling this information is not necessarily straightforward. Similarly, a different kind of information pooling may also play an important role for modeling the dynamics of a given network observed repeatedly over time. The introduced models may also lend themselves to extensions for more generalized forms of graphs such as multilayer networks or -- following up on Appendix \ref{s:Hext} -- hypergraphs.
\bibliography{yourbibfile}
\appendix
This extended appendix discusses several topics related to the main paper. Appendix \ref{sa:LSM-proof} contains two concentration inequalities for LSM networks, followed by auxiliary lemmas. Appendix \ref{sa:repeats} discusses estimation while accounting for certain diagonal 0's for within community subnetworks, along with how to adapt the estimation procedures when there are repeated values. Appendix \ref{mod-problems} explains how modularity may fail to characterize the kinds of communities discussed in the main paper. Appendix \ref{s:greedy-alg} introduces a greedy, agglomerative approach to community detection by trying to maximize the measure $L$.  Appendix \ref{s:spatial} considers community detection as a spatial clustering problem, and Appendix \ref{s:spectral-alg} includes an algorithm built upon spectral clustering to try to maximize $L$. Appendix \ref{s:rotated} shows that the real parts of the first several eigenvectors of a normalized version of the network also appear to capture community information. Appendix \ref{more-sims} displays additional simulated networks and results. Finally, Appendix \ref{s:Hext} extends both $H$-functions and the kinds of ``errors" which may be observed in LSMs or NSMs. 

\section{Concentration inequality for Normal LSM}\label{sa:LSM-proof}
In Section \ref{s:intronmici}, we introduced methods for estimating sociability parameters for certain kinds of network generating models. It would be helpful to ensure these estimates are actually capturing the underlying ``error-free" network generating mechanisms, what we refer to as the network's SC. As there are several introduced models, the accuracy of the estimation results may depend on the specific functional forms of a given network. We present below a result on estimation accuracy of the SC for a Normal LSM network (\ref{e:NLSM2}). The estimation procedure is formulated through theoretical means; it remains to be seen how the procedure compares to the practical estimation approach taken in Sections \ref{s:intronmici} and \ref{s:comm-det}. The proof of the result adapts ideas from \cite{noroozi2019estimation}. 

 \subsection{Concentration for Normal LSM with known $\sigma_{max}^2$}
\begin{theorem}\label{T:conc}
Let  $A$ be a Normal LSM network such that every edge weight is generated as in (\ref{e:NLSM2}), and $P_*$ be the network such that each edge weight  has the same generating process as the corresponding edge weight in $A$, but with every $\sigma$ value set to 0. Also let $\hat{P}$ be the estimated network (of the form (\ref{e:NLSM2}) with $\sigma =0$)  induced by the clustering of nodes that minimizes 
\begin{equation} \label{e:frob-and-pen}
||A- \hat{P}||_F^2 + \text{Pen}(n , \hat{K}),
\end{equation}
where $\hat{K}$ is the number of different communities in this ``best" clustering, and 
\begin{equation} \label{e:pen1}
Pen(n, K) = 6 \sigma_{max}^2 \left(C_1 nK + C_2 K^2 \log(n)\right)  + (6C_3 + \frac{2}{c}) \sigma_{max}^2(\log(n)+n\log(K)),
\end{equation}
where $n$ is the number of nodes in the network, $K$ is the number of communities in the clustering, $c$ is some constant such that $0<c<1$, $\sigma_{max}^2 < \infty$ is the largest variance parameter of any generating function for edges in network $A$, and $C_1$ and $C_2$ are as given in Lemma \ref{L:NLSM1}.
Then, for some constant $C_3$ and any $t >0$,
\begin{equation}\label{e:ineq1}
\mathbb{P}\left(||\hat{P} -P_*||_F^2 \le (1-c)^{-1}  Pen(n, K_*)+  \frac{C_3}{c}\sigma_{max}^2t\right) \ge 1-3e^{-t}.
\end{equation}
\end{theorem}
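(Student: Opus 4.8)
The plan is to run the standard penalized least--squares (``oracle inequality'') argument, as in \cite{noroozi2019estimation}, but carried out in the additive ``normal space'' rather than the Bernoulli setting. Write $A$ for the matrix of transformed weights $f(W_{uv})$, which by (\ref{e:NLSM2}) decomposes as $A = P_* + E$, where $P_*$ is the noiseless mean matrix (random, but fixed once the sociabilities and errors are realized) and $E = (\sigma_{ij}\epsilon_{uv})$ has independent, mean--zero, sub--Gaussian entries with parameter at most $\sigma_{max}$. For a clustering $z$ of the $n$ nodes into $K$ groups, let $\mathcal{P}^z$ be the family of matrices representable as a Normal LSM with $\sigma=0$ under $z$. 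The key structural observation is that, restricted to the block of rows in community $i$ and columns in community $j$, such a matrix has the additive form $(\gamma_{ij}+r_u+s_v)_{u\in i,v\in j}$, so $\mathcal{P}^z$ is a \emph{linear} subspace of $\mathbb{R}^{n\times n}$ of dimension $d(z)\lesssim nK+K^2$ (each node contributes one parameter per community it connects to, plus one intercept per block). Consequently, for each clustering the best fit $\hat P_z$ is just the orthogonal projection of $A$ onto $\mathcal{P}^z$, and $\|A-\hat P_z\|_F^2$ is the corresponding residual.

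\textbf{Step 1 (basic inequality).} Since $P_*$ is itself a Normal LSM with all $\sigma=0$, it lies in $\mathcal{P}^{z_*}$ for the true clustering $z_*$, whence $\|A-\hat P_{z_*}\|_F^2\le\|A-P_*\|_F^2$. As $\hat P$ is induced by the clustering minimizing (\ref{e:frob-and-pen}),
\[
\|A-\hat P\|_F^2+\mathrm{Pen}(n,\hat K)\;\le\;\|A-\hat P_{z_*}\|_F^2+\mathrm{Pen}(n,K_*)\;\le\;\|A-P_*\|_F^2+\mathrm{Pen}(n,K_*).
\]
Writing $A=P_*+E$, expanding the squared norms, and cancelling $\|E\|_F^2$ gives the standard starting point
\[
\|\hat P-P_*\|_F^2\;\le\;2\langle E,\,\hat P-P_*\rangle+\mathrm{Pen}(n,K_*)-\mathrm{Pen}(n,\hat K).
\]

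\textbf{Step 2 (uniform control of the empirical process).} Because $\hat P$ is the projection of $A$ onto $\mathcal{P}^{\hat z}$, the difference $\hat P-P_*$ lies in the span of $\mathcal{P}^{\hat z}$ and $P_*$, a subspace of dimension at most $d(\hat z)+1$. For each $K$ set $W(K)=\sup\{\langle E,Q\rangle:\|Q\|_F\le1,\ Q\in\mathrm{span}(\mathcal{P}^z,P_*)\text{ for some clustering }z\text{ into }K\text{ groups}\}$, so $2\langle E,\hat P-P_*\rangle\le 2W(\hat K)\|\hat P-P_*\|_F$. For a \emph{fixed} $z$, $\sup_Q\langle E,Q\rangle=\|\Pi_z E\|_F$ for the relevant orthogonal projection $\Pi_z$, and concentration of this (heteroscedastic) quadratic form --- via a Laurent--Massart/Hanson--Wright type bound --- gives $\|\Pi_z E\|_F^2\le C\sigma_{max}^2(d(z)+x)$ with probability at least $1-e^{-x}$. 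A union bound over the at most $K^n$ clusterings into $K$ groups, then over $K=1,\dots,n$, together with $d(z)\lesssim nK+K^2\le C_1nK+C_2K^2\log n$, yields that on an event $\Omega$ of probability at least $1-3e^{-t}$ (a union of a few such concentration events, e.g.\ for the quadratic part and for the linear part $\langle E,(\Pi_z-I)P_*\rangle$), simultaneously for all $K$,
\[
c^{-1}W(K)^2\;\le\;\mathrm{Pen}(n,K)+\frac{C_3}{c}\,\sigma_{max}^2 t .
\]
The penalty (\ref{e:pen1}) is built exactly so this holds: its first term dominates $c^{-1}$ times the $d(z)$ contribution (the factor $6$ and $C_1,C_2$ coming from Lemma \ref{L:NLSM1} and the quadratic--form constants), and its term $(6C_3+\tfrac2c)\sigma_{max}^2(\log n+n\log K)$ absorbs the $n\log K$ cost of the union over clusterings plus lower--order $\log n$ corrections.

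\textbf{Step 3 (conclusion) and the main obstacle.} On $\Omega$, Young's inequality $2W(\hat K)\|\hat P-P_*\|_F\le c\|\hat P-P_*\|_F^2+c^{-1}W(\hat K)^2$ combined with Step 1 gives $(1-c)\|\hat P-P_*\|_F^2\le c^{-1}W(\hat K)^2-\mathrm{Pen}(n,\hat K)+\mathrm{Pen}(n,K_*)\le\mathrm{Pen}(n,K_*)+\tfrac{C_3}{c}\sigma_{max}^2 t$, where Step 2 cancels the two $\hat K$--dependent terms; dividing by $1-c$ (and, if needed, rescaling $C_3$ so the $t$--term is not multiplied by $(1-c)^{-1}$) yields (\ref{e:ineq1}). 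Essentially everything hinges on Step 2: the penalty must dominate $c^{-1}W(K)^2$ \emph{uniformly over every clustering and every $K$} with only additive $O(\sigma_{max}^2 t)$ slack, which forces one to separate --- and obtain sharp constants for --- three distinct sources of complexity: the $\asymp nK$ blockwise degrees of freedom, the $\asymp n\log K$ clusterings, and lower--order $\log n$ corrections from concentrating $\|\Pi_z E\|_F^2$ and $\langle E,(\Pi_z-I)P_*\rangle$. Verifying that each block of the model really is an additive linear subspace (so the projection interpretation holds and $d(z)$ does not blow up) and then matching the resulting tail bound to the precise form (\ref{e:pen1}) is where the real work lies; the remainder is the bookkeeping of \cite{noroozi2019estimation}.
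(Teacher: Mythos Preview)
Your high--level plan (basic inequality, uniform control of the empirical process over all clusterings, Young's inequality to peel off $c\|\hat P-P_*\|_F^2$) is exactly the skeleton of the paper's proof, and your observation that for a fixed clustering $z$ the Normal LSM family $\mathcal{P}^z$ is a genuine \emph{linear} subspace (after absorbing $\alpha_{ij}$ into $Z_u$) is correct and slightly cleaner than the paper's presentation. However, two concrete points separate your outline from the paper's argument, and the first of them prevents you from recovering the penalty (\ref{e:pen1}) as written.

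\textbf{Where Young is applied.} You bound $2\langle E,\hat P-P_*\rangle\le 2W(\hat K)\|\hat P-P_*\|_F$ and then apply Young to the whole thing, producing $c\|\hat P-P_*\|_F^2+c^{-1}W(\hat K)^2$. That puts a factor $1/c$ on the \emph{entire} complexity term, in particular on the leading $nK$ piece. But (\ref{e:pen1}) carries a $1/c$ only on the union--bound piece $\sigma_{max}^2(\log n+n\log K)$; the $6\sigma_{max}^2(C_1nK+C_2K^2\log n)$ term has no $c$--dependence. The paper achieves this by first \emph{splitting} the cross term using the linearity of the projection with the $\hat Z$ values frozen from $A$: writing $\hat P=\Pi_{\hat z}(A)=\Pi_{\hat z}(P_*)+\Pi_{\hat z}(\Xi)$, one gets
\[
2\langle \Xi,\hat P-P_*\rangle
=2\langle \Xi,\Pi_{\hat z}(\Xi)\rangle+2\langle \Xi,\Pi_{\hat z}(P_*)-P_*\rangle
=2\|\Pi_{\hat z}(\Xi)\|_F^2+2\langle \Xi,\Pi_{\hat z}(P_*)-P_*\rangle.
\]
The first summand is already ``squared'' and needs no Young at all; it is what produces the $6\sigma_{max}^2(C_1nK+C_2K^2\log n)$ term (via Lemma \ref{L:NLSM1}) without any $1/c$. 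Young with parameter $c$ is applied only to the second summand, together with the bound $\|\Pi_{\hat z}(P_*)-P_*\|_F\le\|\hat P-P_*\|_F$ (since $\Pi_{\hat z}(P_*)$ is the best approximation of $P_*$ in the relevant affine class). You actually gesture at this decomposition when you mention ``the quadratic part and the linear part $\langle E,(\Pi_z-I)P_*\rangle$'' in Step 2, but your Step 3 does not use it; as written, your penalty would have to carry $1/c$ everywhere, which is not (\ref{e:pen1}).

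\textbf{How the quadratic part is controlled.} Here you genuinely take a different route from the paper. You propose bounding $\|\Pi_z E\|_F^2$ by a Hanson--Wright/Laurent--Massart argument on a projection of rank $d(z)\lesssim nK+K^2$. The paper instead uses that each block of $\Pi_{\hat z}$ has rank $\le 3$, whence $2\|\Pi_{\hat z}(\Xi^{(\hat k,\hat l)})\|_F^2\le 6\|\Xi^{(\hat k,\hat l)}\|_{op}^2$, and then controls $\sum_{\hat k,\hat l}\|\Xi^{(\hat k,\hat l)}\|_{op}^2$ via Bandeira--van Handel expected--operator--norm bounds plus Gaussian concentration (Lemma \ref{L:NLSM1}); this is the source of the specific constants $C_1,C_2$ and of the extra $\log n$ on the $K^2$ term. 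Your dimension--counting route is a legitimate alternative and would in fact drop the $\log n$ on $K^2$, but it does not produce the \emph{same} constants $C_1,C_2$ referenced in the theorem statement, and it glosses over the within--community diagonal (the paper works with a zero--diagonal variant $\Pi_0$ so that self--loops do not contaminate the projection).

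In short: keep your linear--subspace viewpoint, but perform the $\Pi_{\hat z}(\Xi)$ vs.\ $\Pi_{\hat z}(P_*)-P_*$ split \emph{before} invoking Young, and be aware that matching the exact constants in (\ref{e:pen1}) requires the blockwise operator--norm route of Lemma \ref{L:NLSM1} rather than a global Hanson--Wright bound.
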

\begin{proof}

It is first useful to define the $\hat{P}$ induced by a particular clustering. Assuming we have clustered all nodes of $A$ into $\hat{K}$ estimated communities, we denote the subnetwork including only edges connecting nodes in estimated community $\hat{k}$ to nodes in estimated community $\hat{l}$ as $A^{(\hat{k}, \hat{l})}$. 
Letting $\mathbbm{1}_n$ be a length $n$ column vector of 1's, define $\Pi(A^{(\hat{k}, \hat{l})}, \hat{Z}^{(\hat{k})}, \hat{Z}^{(\hat{l})})$ as the projection of $A^{(\hat{k}, \hat{l})}$ of the form (\ref{e:NLSM2}) with $\sigma =0$ which minimizes the Frobenius norm to the observed subnetwork, which can be written as 
\begin{equation}\label{e:submat-projection}
\Pi(A^{(\hat{k}, \hat{l})}, \hat{Z}^{(\hat{k})}, \hat{Z}^{(\hat{l})})= \hat{\gamma}^{(\hat{k}, \hat{l})}\mathbbm{1}_{\hat{n}_{\hat{k}}}\mathbbm{1}_{\hat{n}_{\hat{l}}}' + \hat{\alpha}^{(\hat{k}, \hat{l})} \hat{Z}^{(\hat{k}, \hat{l})} \mathbbm{1}_{\hat{n}_{\hat{l}}}' + \hat{\beta}^{(\hat{k}, \hat{l})}\mathbbm{1}_{\hat{n}_{\hat{k}}}\hat{Z}^{(\hat{l}, \hat{k})\prime}, 
\end{equation}
such that
\begin{equation}\label{e:submat-argmin}
\hat{\gamma}^{(\hat{k}, \hat{l})}, \hat{\alpha}^{(\hat{k}, \hat{l})}, \hat{\beta}^{(\hat{k}, \hat{l})}, \hat{Z}^{(\hat{k}, \hat{l})}, \hat{Z}^{(\hat{l}, \hat{k})}  = \argmin {\gamma, \alpha, \beta, Z_1, Z_2} ||A^{(\hat{k}, \hat{l})} - (\gamma\mathbbm{1}_{\hat{n}_{\hat{k}}}\mathbbm{1}_{\hat{n}_{\hat{l}}}' + \alpha Z_1 \mathbbm{1}_{\hat{n}_{\hat{l}}}' + \beta \mathbbm{1}_{\hat{n}_{\hat{k}}}Z_2')||_F. 
\end{equation}
For identifiability purposes, we also mandate the following constraints:
\begin{equation}
\sum \hat{Z}^{(\hat{k}, \hat{l})} = 0, \quad \sum \hat{Z}^{(\hat{l}, \hat{k})}= 0,  \quad SD(\hat{Z}_1) = 1,  \quad SD(\hat{Z}_2) = 1, \quad \hat{\alpha}^{(\hat{k}, \hat{l})} \ge 0,  \quad \hat{\beta}^{(\hat{l}, \hat{k})} \ge 0.
\end{equation}

For this proof, we need not explicitly express every component quantity in (\ref{e:submat-argmin}), but note that $\Pi(A^{(\hat{k}, \hat{l})}, \hat{Z}^{(\hat{k})}, \hat{Z}^{(\hat{l})})$ has rank $\le 3$ by construction. In the between estimated community subnetworks where $\hat{k} \ne \hat{l}$, the estimates $\hat{P}^{(\hat{k}, \hat{l})} = \Pi(A^{(\hat{k}, \hat{l})}, \hat{Z}^{(\hat{k})}, \hat{Z}^{(\hat{l})})$.

However, for within estimated community subnetworks where $\hat{k} = \hat{l}$, we must define $\Pi(A^{(\hat{k}, \hat{k})}, \hat{Z}^{(\hat{k})}, \hat{Z}^{(\hat{k})})$ slightly differently, so as to ignore any influence from certain 0's on the diagonal of $A^{(\hat{k}, \hat{k})}$. In that case, we keep the pertinent constraints while modifying (\ref{e:submat-projection}) and (\ref{e:submat-argmin}) as follows:
\begin{equation}
\Pi(A^{(\hat{k}, \hat{k})}, \hat{Z}^{(\hat{k})}, \hat{Z}^{(\hat{k})})= \hat{\gamma}^{(\hat{k}, \hat{k})}\mathbbm{1}_{\hat{n}_{\hat{k}}}\mathbbm{1}_{\hat{n}_{\hat{k}}}' + \hat{\alpha}^{(\hat{k}, \hat{k})} \hat{Z}^{(\hat{k}, \hat{k})} \mathbbm{1}_{\hat{n}_{\hat{k}}}' + \hat{\alpha}^{(\hat{k}, \hat{k})}\mathbbm{1}_{\hat{n}_{\hat{k}}}\hat{Z}^{(\hat{k}, \hat{k})\prime}, 
\end{equation}
such that
\begin{equation}
\hat{\gamma}^{(\hat{k}, \hat{k})}, \hat{\alpha}^{(\hat{k}, \hat{k})},  \hat{Z}^{(\hat{k}, \hat{k})}  = \argmin {\gamma, \alpha, \beta, Z} ||\big(A^{(\hat{k}, \hat{k})} - (\gamma\mathbbm{1}_{\hat{n}_{\hat{k}}}\mathbbm{1}_{\hat{n}_{\hat{k}}}' + \alpha Z \mathbbm{1}_{\hat{n}_{\hat{k}}}' + \alpha \mathbbm{1}_{\hat{n}_{\hat{k}}}Z')\big)_{u>v}||_F. 
\end{equation}
This is the projection which minimizes the Frobenius norm to the original subnetwork only with respect to off-diagonal entries, i.e. where $u \ne v$. $\Pi(A^{(\hat{k}, \hat{k})}, \hat{Z}^{(\hat{k})}, \hat{Z}^{(\hat{k})})$ also has rank $\le 3$, but this projection does not necessarily have 0's on the diagonal. This gives rise to within community edge estimates, $\hat{P}^{(\hat{k}, \hat{k})}$, which is equal to $\Pi(A^{(\hat{k}, \hat{k})}, \hat{Z}^{(\hat{k})}, \hat{Z}^{(\hat{k})})$ in the off diagonal entries, but replaces the diagonal entries with 0's (which are correct by construction). We also represent this estimate as $\Pi_0(A^{(\hat{k}, \hat{k})}, \hat{Z}^{(\hat{k})}, \hat{Z}^{(\hat{k})})$, where the 0 subscript indicates the diagonals are forced to 0. 
$\hat{P}^{(\hat{k}, \hat{k})}$ is not a projection of $A^{(\hat{k}, \hat{k})}$, but is closer to $A^{(\hat{k}, \hat{k})}$ in Frobenius norm than $\Pi(A^{(\hat{k}, \hat{k})}, \hat{Z}^{(\hat{k})}, \hat{Z}^{(\hat{k})})$, which is itself the best projection of $A^{(\hat{k}, \hat{k})}$ with respect to the Frobenius norm for off diagonal entries.  

The projections defined on the observed network and the estimated communities differ from the projections we define on $P_*$ and the estimated communities. In defining \\$\Pi(P_*^{(\hat{k}, \hat{l})}, \hat{Z}^{(\hat{k})}, \hat{Z}^{(\hat{l})})$, or $\Pi_0(P_*^{(\hat{k}, \hat{k})}, \hat{Z}^{(\hat{k})}, \hat{Z}^{(\hat{k})})$, we treat the $\hat{Z}$ values as fixed, using the estimated values from $\Pi(A^{(\hat{k}, \hat{l})}, \hat{Z}^{(\hat{k})}, \hat{Z}^{(\hat{l})})$, and only allowing $\hat{\gamma}^{(\hat{k}, \hat{l})}, \hat{\alpha}^{(\hat{k}, \hat{l})},$ and $\hat{\beta}^{(\hat{k}, \hat{l})}$ values to vary. 

With these preliminaries in place, we follow the proof of Theorem 1 in \cite{noroozi2019estimation}. By assumption, 
$$||A- \hat{P}||_F^2 + \text{Pen}(n , \hat{K}) \le ||A- P_*||_F^2 + \text{Pen}(n , K_*).$$
Letting 
\begin{equation}\label{Xi-def}
\Xi = A - P_*, 
\end{equation}
writing $Tr(M)$ for the trace of matrix $M$, and assuming the network has been rearranged into blocks by estimated communities, adding and subtracting $P_*$ within $||A-\hat{P}||_F^2$ on the left-hand side gives
\begin{equation}\label{e:inequality-penalty}
||\hat{P} -P_*||_F^2 \le 2Tr(\Xi'(\hat{P} -P_*))  + \text{Pen}(n , K_*) - \text{Pen}(n , \hat{K}).
\end{equation}
Noting $2 Tr(\Xi'(\hat{P} -P_*)) = 2 \sum\limits_{\hat{k}, \hat{l}=1}^{\hat{K}}Tr\left(\Xi^{(\hat{k}, \hat{l})\prime}(\hat{P}^{(\hat{k}, \hat{l})} -P_*^{(\hat{k}, \hat{l})})\right)$, adding and subtracting \\$\Pi_0(P_*^{(\hat{k}, \hat{k})},  \hat{Z}^{(\hat{k})}, \hat{Z}^{(\hat{k})})$ to within estimated community subnetworks and $\Pi(P_*^{(\hat{k}, \hat{l})},  \hat{Z}^{(\hat{k})}, \hat{Z}^{(\hat{l})})$ to between estimated community subnetworks yields for the trace term in (\ref{e:inequality-penalty}):
\begin{align}\label{broken-trace}
&2 \sum\limits_{\hat{k}\ne \hat{l}}Tr\left(\Xi^{(\hat{k}, \hat{l})\prime} \Pi(\Xi^{(\hat{k}, \hat{l})},  \hat{Z}^{(\hat{k})}, \hat{Z}^{(\hat{l})})\right) + 2 \sum\limits_{\hat{k}=1}^{\hat{K}}Tr\left(\Xi^{(\hat{k}, \hat{k})\prime} \Pi_0(\Xi^{(\hat{k}, \hat{k})},  \hat{Z}^{(\hat{k})}, \hat{Z}^{(\hat{k})})\right) & \nonumber \\
&+2 \sum\limits_{\hat{k} \ne \hat{l}}Tr\left(\Xi^{(\hat{k}, \hat{l})\prime} (\Pi(P_*^{(\hat{k}, \hat{l})},  \hat{Z}^{(\hat{k})}, \hat{Z}^{(\hat{l})}) - P_*^{(\hat{k}, \hat{l})})\right) +2 \sum\limits_{\hat{k}=1}^{\hat{K}}Tr\left(\Xi^{(\hat{k}, \hat{k})\prime} (\Pi_0(P_*^{(\hat{k}, \hat{k})},  \hat{Z}^{(\hat{k})}, \hat{Z}^{(\hat{k})}) - P_*^{(\hat{k}, \hat{k})})\right). &
\end{align}
We handle the four sums in (\ref{broken-trace}) separately. Note that in the between community subnetworks, $Tr\left(\Xi^{(\hat{k}, \hat{l})\prime}\Pi(\Xi^{(\hat{k}, \hat{l})},  \hat{Z}^{(\hat{k})}, \hat{Z}^{(\hat{l})})\right) = ||\Pi(\Xi^{(\hat{k}, \hat{l})},  \hat{Z}^{(\hat{k})}, \hat{Z}^{(\hat{l})}))||_F^2$ since $\Pi(\Xi^{(\hat{k}, \hat{l})},  \hat{Z}^{(\hat{k})}, \hat{Z}^{(\hat{l})})$ is a projection. $\Pi(\Xi^{(\hat{k}, \hat{l})},  \hat{Z}^{(\hat{k})}, \hat{Z}^{(\hat{l})})$ has rank $\le 3$ as noted above, so  
\begin{equation}\label{Theorem-6Lemma1}
2||\Pi(\Xi^{(\hat{k}, \hat{l})},  \hat{Z}^{(\hat{k})}, \hat{Z}^{(\hat{l})})||_F^2 \le 6||\Pi(\Xi^{(\hat{k}, \hat{l})},  \hat{Z}^{(\hat{k})}, \hat{Z}^{(\hat{l})})||_{op}^2 \le 6||\Xi^{(\hat{k}, \hat{l})}||_{op}^2,
\end{equation}
where $||M||_{op}$ denotes the usual spectral norm of matrix $M$. The second inequality holds because the spectral norm is sub-multiplicative, and for a projection matrix $P$, $||P||_{op} \le 1$. 

When $\hat{k}=  \hat{l}$, $$Tr\left(\Xi^{(\hat{k}, \hat{k})\prime} \Pi_0(\Xi^{(\hat{k}, \hat{k})},  \hat{Z}^{(\hat{k})}, \hat{Z}^{(\hat{k})})\right) = Tr\left(\Xi^{(\hat{k}, \hat{k})\prime} \Pi(\Xi^{(\hat{k}, \hat{k})},  \hat{Z}^{(\hat{k})}, \hat{Z}^{(\hat{k})})\right),$$
as $Tr(M'N) = \sum\limits_{u,v} M_{uv}N_{uv}$ and the diagonal of $\Xi^{(\hat{k}, \hat{k})}$ is all 0's because there are no self loops in either $A$ or $P_*$.  With this equivalence, the same argument culminating in (\ref{Theorem-6Lemma1}) also holds for within community subnetworks.

To derive a bound on the first two sums of (\ref{broken-trace}), from Lemma \ref{L:NLSM2}, we have 
\begin{equation}\label{e:Lemma1-result}
\mathbb{P}\left(\sum\limits_{\hat{k},\hat{l}=1}^{\hat{K}} ||\Xi^{(\hat{k},\hat{l})}||_{op}^2 \le \sigma_{max}^2 (C_1 n\hat{K} + C_2 \hat{K}^2 \log(n) + C_3 (t+ \log(n) + n \log(\hat{K}))) \right) \ge 1 - e^{-t}.
\end{equation}

Moving to the last two sums in (\ref{broken-trace}), these can be rewritten in terms of the whole network instead of estimated subnetworks. Slightly abusing notation, let $\Pi(P_*, \{\hat{k}\})$ represent the full network matrix where all entries are given by the value dictated by $\Pi(P_*^{(\hat{k}, \hat{l})}, \hat{Z}^{(\hat{k})}, \hat{Z}^{(\hat{l})})$ for between estimated community edges, and by $\Pi_0(P_*^{(\hat{k}, \hat{k})}, \hat{Z}^{(\hat{k})}, \hat{Z}^{(\hat{k})})$ for within estimated community edges. The last two sums in (\ref{broken-trace}) can be represented as  
\begin{equation}\label{intro-Xi-H}
2Tr\left(\Xi'(\Pi(P_*, \{\hat{k}\}) - P_*)\right) = 2||\Pi(P_*,  \{\hat{k}\}) - P_*||_F \, |\langle\Xi, H(\{\hat{k}\})\rangle|,
\end{equation}
where
$$H(\{\hat{k}\}) = \frac{\Pi(P_*, \{\hat{k}\}) - P_*}{||\Pi(P_*, \{\hat{k}\}) - P_*||_F}.$$
Since for any $a, b$ and for $c >0$, $2ab \le c a^2 +\frac{b^2}{c}$,
\begin{equation}\label{second-trace-breakdown}
2Tr\left(\Xi'(\Pi(P_*, \{\hat{k}\}) - P_*)\right) \le c  ||\Pi(P_*, \{\hat{k}\}) - P_*||_F^2 + \frac{ |\langle\Xi, H(\{\hat{k}\})\rangle|^2}{c}.
\end{equation}
Denoting the set of partitions of the nodes into exactly $K$ communities as $\mathcal{G}_K$, for any fixed partition $G \in \mathcal{G}_K$, $\sum_{u, v} (H(G)_{uv})^2 = 1$, and the matrix $\Xi$ consists of independent normally distributed errors with finite variances. Representing the true communities of nodes $u$ and $v$ as $i$ and $j$, respectively, since $|\langle \Xi, H(G)\rangle| = vec(\Xi)'vec(H(G))$, observe that, 
$$ \mathbb{P}\left(|\langle\Xi, H(G)\rangle|^2 >t\right) = \mathbb{P}\left((\sum\limits_{u, v=1}^{n} \sigma_{i j} \epsilon_{uv} H(G)_{uv})^2 >t\right)$$ 
\begin{equation}
= 2\mathbb{P}\left(\sum\limits_{u, v=1}^n \sigma_{i j} \epsilon_{uv} H(G)_{uv} >\sqrt{t}\right) \le 2\mathbb{P}\left(\mathcal{N}(0, \sigma_{max}^2)>\sqrt{t}\right) \le 2e^{-t/2\sigma_{max}^2}.
\end{equation}
Applying the union bound, 
$$\mathbb{P}\left(|\langle \Xi, H(\{\hat{k}\})\rangle |^2 - 2\sigma_{max}^2(\log(n) + n\log(\hat{K})) >2\sigma_{max}^2t\right) $$
$$\le \mathbb{P}\left(\max\limits_{1 \le K \le N} \max\limits_{G \in \mathcal{G}_K} \left(|\langle \Xi, H(G)\rangle |^2 - 2\sigma_{max}^2(\log(n) + n\log(K))\right) >2\sigma_{max}^2t \right) $$
\begin{equation}\label{Xi-H-union}
\le 2nK^n e^{-2\sigma_{max}^2(\log(n) + n\log(K)+t)/ 2\sigma_{max}^2} = 2e^{-t}.
\end{equation}
The squared Frobenius norm matrix $||\Pi(P_*, \{\hat{k}\}) - P_*||_F^2$ in (\ref{second-trace-breakdown}) can be written as 
\begin{equation}
\sum \limits_{\hat{k}\ne \hat{l}} ||\Pi(P_*^{(\hat{k}, \hat{l})},  \hat{Z}^{(\hat{k})}, \hat{Z}^{(\hat{l})}) - P_*^{(\hat{k}, \hat{l})}||_F^2 + \sum \limits_{\hat{k}=1}^{\hat{K}} ||\Pi_0(P_*^{(\hat{k}, \hat{k})},  \hat{Z}^{(\hat{k})}, \hat{Z}^{(\hat{k})}) - P_*^{(\hat{k}, \hat{k})}||_F^2.
\end{equation}
When $\hat{k} \ne \hat{l}$, $||\Pi(P_*^{(\hat{k}, \hat{l})}, \hat{Z}^{(\hat{k})}, \hat{Z}^{(\hat{l})}) - P_*^{(\hat{k}, \hat{l})}||_F \le ||\hat{P}^{(\hat{k}, \hat{l})} - P_*^{(\hat{k}, \hat{l})}||_F$ because $\Pi(P_*^{(\hat{k}, \hat{l})}, \hat{Z}^{(\hat{k})}, \hat{Z}^{(\hat{l})})$ is the \textit{best} projection of $P_*^{(\hat{k}, \hat{l})}$ onto the estimated $\hat{Z}^{(\hat{k}, \hat{l})}$ and $\hat{Z}^{(\hat{l}, \hat{k})}$ values. When $\hat{k} = \hat{l}$, $\Pi(P_*^{(\hat{k}, \hat{k})}, \hat{Z}^{(\hat{k})}, \hat{Z}^{(\hat{k})})$ is also the best projection of $P_*^{(\hat{k}, \hat{k})}$ onto the $\hat{Z}^{(\hat{k}, \hat{k})}$ values with respect to only the off diagonal entries, and the diagonal entries of both $\Pi_0(P_*^{(\hat{k}, \hat{k})}, \hat{Z}^{(\hat{k})}, \hat{Z}^{(\hat{k})})$ and $P_*^{(\hat{k}, \hat{k})}$ are all 0. Therefore,  $||\Pi_0(P_*^{(\hat{k}, \hat{k})}, \hat{Z}^{(\hat{k})}, \hat{Z}^{(\hat{k})}) - P_*^{(\hat{k}, \hat{k})}||_F \le ||\hat{P}^{(\hat{k}, \hat{k})} - P_*^{(\hat{k}, \hat{k})}||_F$. Combining this point with those given in (\ref{intro-Xi-H})--(\ref{Xi-H-union}), 
\begin{equation}\label{second-trace-ineq}
\mathbb{P}\left(2Tr\left(\Xi'(\Pi(P_*, \{\hat{k}\}) - P_*)\right) \le c ||\hat{P} - P_*||_F^2  + \frac{2\sigma_{max}^2(\log(n) +n\log(\hat{K})+ t)}{c}\right)  \ge 1-2e^{-t}.
\end{equation}

Let $\Omega$ denote the set on which, for a particular $t$, both events in (\ref{e:Lemma1-result}) and (\ref{second-trace-ineq}) occur. Then $P(\Omega) \ge 1-3e^{-t}$. Therefore on $\Omega$, by using (\ref{e:inequality-penalty}),
\begin{align}\label{on-omega-result}
||\hat{P} -P_*||_F^2 \le & 6 \sigma_{max}^2 (C_1 n\hat{K} + C_2 \hat{K}^2 \log(n) + C_3(\log(n) + n\log(\hat{K}) + t)) \nonumber \\
&+ c ||\hat{P} - P_*||_F^2  + \frac{2\sigma_{max}^2(\log(n) + n\log(\hat{K}) +t)}{c} + \text{Pen}(n , K_*) - \text{Pen}(n , \hat{K}) .
\end{align}
Letting $0<c < 1$ and $Pen(n, K)$ as in (\ref{e:pen1}), we derive (\ref{e:ineq1}), that is $$\mathbb{P}\big(||\hat{P} -P_*||_F^2 \le (1-c)^{-1}  Pen(n, K_*)+  \frac{C_3}{c}\sigma_{max}^2t\big) \ge 1-3e^{-t}.  \QED$$ \hfill
\end{proof}

This theorem indicates that in the Normal LSM setting, by choosing an appropriate penalty, one can choose a clustering and the number of clusters such  that, with high probability, $||\hat{P}-P||_F^2$, the squared Frobenius distance between the true generating process and the estimate thereof induced by this ``best" clustering, is bounded above by a multiple of the penalty using the true number of communities, accounting for the scale of the $\sigma$ values in the network. As might be expected, in a network with many nodes, many communities, and large $\sigma$ values, the total distance between the estimate and generating process for the network enforced by this bound can become large. However, one observes that by dividing by $n^2$,  the distance from the estimate to the truth continues to shrink on a per edge basis. If $K/n \rightarrow 0$, the estimates are consistent. One drawback is this result relies on employing a penalty which includes $\sigma_{max}^2$, a value which cannot in all cases be assumed to be known. 
    
\subsection{Case with unknown $\sigma^2$}
In practice, we do not know the value of $\sigma_{max}^2$. If instead we assume that throughout the whole network, there is a single value of $\sigma^2$, we can also estimate that value using the data.  Furthermore, letting $\hat{K} =n$ and putting each node into its own community, the estimate $\hat{P}$ will appear to be error free. For any network of interest, this is overfitting, so it is reasonable to set some restriction on the number of communities relative to the number of nodes. The following theorem extends Theorem \ref{T:conc} under these new assumptions, where we do not require prior knowledge of $\sigma_{max}^2$.
\begin{theorem} In the same setting as Theorem \ref{T:conc}, assume the choice of $\hat{K}$ is restricted such that
\begin{equation}\label{community-limit}
C_1 n\hat{K} + C_2 \hat{K}^2 \log(n) + C_3(\log(n)+n\log(\hat{K})) \le \frac{n(n-1)}{4},
\end{equation}
where the constants $C_1, C_2, C_3$ are derived in the same manner as in Theorem \ref{T:conc}, but take on slightly different values here. Additionally, assume network $A$ has a constant variance parameter $\sigma^2$. Let $\hat{P}$ be the estimated network induced by the clustering of nodes that minimizes
\begin{equation}\label{e:frob-and-pen2}
||A- \hat{P}||_F^2 + \hat{\sigma}^2 \text{Pen}(n , \hat{K}),
\end{equation}
where 
\begin{equation}\label{proof2-sigma-hat}
\hat{\sigma}^2 = \frac{||A-\hat{P}||_F^2}{n(n-1)},
\end{equation}
and the penalty is given by 
\begin{equation} \label{e:pen2}
Pen(n, K) = 4(C_1 nK + C_2 K^2 \log(n) + C_3(\log(n)+n\log(K))).
\end{equation}
Then, for $t >0, \epsilon \in [0, 1/2)$, 
\begin{equation}\label{proof2-result}
\mathbb{P}\left(||P_*- \hat{P}||_F^2 \le \frac{\sigma^2}{1-c}\left(C t + (1+\epsilon) \text{Pen}(n , K_*)\right) \right) \ge 1 - 3e^{-t} -  e^{-\frac{3}{32}\epsilon^2 n(n-1)}. 
\end{equation}
\end{theorem}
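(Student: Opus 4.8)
The proof will follow the template of the proof of Theorem~\ref{T:conc}, the only genuinely new work being the control of the now data--dependent penalty multiplier $\hat\sigma^2$. As there, put $\Xi = A - P_*$ (cf.\ (\ref{Xi-def})) and $\Delta = \hat P - P_*$, and write $\tilde\sigma_*^2 = \|A-P_*\|_F^2/(n(n-1)) = \|\Xi\|_F^2/(n(n-1))$. Because $P_*$ is a feasible estimate for the true clustering (its blocks have the additive, rank $\le 3$ form) and the best additive fit at that clustering can only be closer to $A$ than $P_*$ is, the minimizer of (\ref{e:frob-and-pen2}) satisfies $\|A-\hat P\|_F^2 + \hat\sigma^2\,\text{Pen}(n,\hat K) \le \|A-P_*\|_F^2 + \tilde\sigma_*^2\,\text{Pen}(n,K_*)$. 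Expanding $\|A-\hat P\|_F^2 = \|\Xi\|_F^2 - 2\,\mathrm{Tr}(\Xi'\Delta) + \|\Delta\|_F^2$ and cancelling the common $\|\Xi\|_F^2 = \|A-P_*\|_F^2$ gives the analogue of (\ref{e:inequality-penalty}),
\begin{equation*}
\|\Delta\|_F^2 \;\le\; 2\,\mathrm{Tr}(\Xi'\Delta) \;+\; \tilde\sigma_*^2\,\text{Pen}(n,K_*) \;-\; \hat\sigma^2\,\text{Pen}(n,\hat K).
\end{equation*}

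Next I bound $2\,\mathrm{Tr}(\Xi'\Delta)$ verbatim as in Theorem~\ref{T:conc}: split it into the four trace sums of (\ref{broken-trace}); the two ``projection of $\Xi$'' sums are $\le 6\sum_{\hat k,\hat l}\|\Xi^{(\hat k,\hat l)}\|_{op}^2$ by the rank--$3$ and contractivity argument of (\ref{Theorem-6Lemma1}), hence $\le 6\sigma^2\big(C_1 n\hat K + C_2\hat K^2\log n + C_3(t+\log n + n\log\hat K)\big)$ on an event of probability $\ge 1-e^{-t}$ by Lemma~\ref{L:NLSM2} (with $\sigma_{max}^2=\sigma^2$); the two remaining sums are $\le c\|\Delta\|_F^2 + 2\sigma^2(\log n + n\log\hat K + t)/c$ via $2ab\le ca^2+b^2/c$, the bound $\|\Pi(P_*, \{\hat k\})-P_*\|_F \le \|\Delta\|_F$, and the Gaussian union bound of (\ref{Xi-H-union}), on an event of probability $\ge 1-2e^{-t}$. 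This is where the constants $C_1,C_2,C_3$ are taken larger than in Theorem~\ref{T:conc}: they must be chosen so that, once combined with a uniform lower bound $\hat\sigma^2 \ge \kappa\sigma^2$ (established below) and the budget (\ref{community-limit}), the penalty (\ref{e:pen2}) satisfies $\hat\sigma^2\,\text{Pen}(n,\hat K) \ge 6\sigma^2\big(C_1' n\hat K + C_2'\hat K^2\log n + C_3' n\log\hat K\big) + 2\sigma^2 n\log\hat K / c$, with $C_1',C_2',C_3'$ the raw constants furnished by Lemma~\ref{L:NLSM1}; the leftover $\log n$-- and $t$--terms are collected into a single $C\sigma^2 t$ remainder.

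The new ingredient is two--sided control of $\hat\sigma^2 = \|A-\hat P\|_F^2/(n(n-1))$ against $\sigma^2$. The upper side is immediate: since $A$ is undirected with no self--loops, the off--diagonal entries of $\Xi$ are $\sigma^2$ times a chi--square with $n(n-1)/2$ degrees of freedom, so a Laurent--Massart--type tail gives $\tilde\sigma_*^2 \le (1+\epsilon)\sigma^2$ off an event of probability $e^{-\frac{3}{32}\epsilon^2 n(n-1)}$ for $\epsilon\in[0,1/2)$; this yields the factor $(1+\epsilon)$ in (\ref{proof2-result}). The lower side is the crux: one must show the data--driven clustering cannot overfit the noise. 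Writing $\hat P^{(\hat k,\hat l)}$ as the best additive fit of $A^{(\hat k,\hat l)}$, the fit subtracts off only the component of $A^{(\hat k,\hat l)}$ lying in a subspace of dimension $\hat n_{\hat k}+\hat n_{\hat l}-1$; summing, this is a projection of $A$ onto a space of total dimension at most $2n\hat K$, which by (\ref{community-limit}) and the enlarged constants is a small fraction of $n(n-1)$. Hence, using a chi--square bound for the energy $\|\Xi\|_F^2$ captured by that space (uniformly over the finitely many clusterings, at the cost of the $\log n$ and $n\log\hat K$ terms already present in the penalty), a chi--square lower tail $\|\Xi\|_F^2 \ge (1-\epsilon)\sigma^2 n(n-1)$, and handling the $P_*$--cross terms on mixed blocks by Cauchy--Schwarz and a further Gaussian bound, one gets $\|A-\hat P\|_F^2 \ge \kappa\sigma^2 n(n-1)$ for a fixed $\kappa>0$ on a good event; for $\epsilon<1/2$ this event is dominated by the chi--square event already counted, so no further exponential term is incurred. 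Plugging $\tilde\sigma_*^2 \le (1+\epsilon)\sigma^2$ and $\hat\sigma^2 \ge \kappa\sigma^2$ into the displayed inequality, the $\hat K$--dependent terms are absorbed by $-\hat\sigma^2\,\text{Pen}(n,\hat K)$, leaving $(1-c)\|\Delta\|_F^2 \le (1+\epsilon)\sigma^2\,\text{Pen}(n,K_*) + C\sigma^2 t$; dividing by $1-c$ gives (\ref{proof2-result}), with failure probability $3e^{-t}$ (the three events as in Theorem~\ref{T:conc}) plus the single chi--square term.

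The main obstacle is precisely the lower bound $\hat\sigma^2\ge\kappa\sigma^2$: it is what stops the criterion (\ref{e:frob-and-pen2}) from being driven to $0$ by refining the clustering, and proving it rigorously requires simultaneously the uniform operator--norm control of Lemma~\ref{L:NLSM2}, the parameter budget (\ref{community-limit}), a chi--square lower deviation for $\|\Xi\|_F^2$, and --- most delicately --- a matching of every numerical constant so that $\|A-\hat P\|_F^2 \ge \kappa\sigma^2 n(n-1)$ is strong enough for the penalty coefficient $4$ in (\ref{e:pen2}) to swallow the factor $6$ (and $2/c$) produced by the rank bound and the $2ab\le ca^2+b^2/c$ step. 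The remainder is bookkeeping inherited from the proof of Theorem~\ref{T:conc}.
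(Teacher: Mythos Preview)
Your overall scaffolding is right --- reuse the trace decomposition and the two high-probability events from Theorem~\ref{T:conc}, and add a chi--square tail to pass between $\sigma_*^2$ and $\sigma^2$ --- but you have set yourself a harder problem than the paper actually solves. You identify a lower bound $\hat\sigma^2 \ge \kappa\sigma^2$ as ``the crux'' and devote most of your effort to it. The paper never proves, or needs, any such lower bound on $\hat\sigma^2$.

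The trick you are missing is purely algebraic. Because $\hat\sigma^2 = \|A-\hat P\|_F^2/(n(n-1))$ and $\sigma_*^2 = \|A-P_*\|_F^2/(n(n-1))$, the basic optimality inequality is \emph{multiplicative}:
\[
\|A-\hat P\|_F^2\Big(1+\tfrac{\text{Pen}(n,\hat K)}{n(n-1)}\Big) \;\le\; \|A-P_*\|_F^2\Big(1+\tfrac{\text{Pen}(n,K_*)}{n(n-1)}\Big).
\]
Expanding $\|A-\hat P\|_F^2$ about $P_*$ and dividing through by $1+\text{Pen}(n,\hat K)/(n(n-1))$ now leaves only $\sigma_*^2$ (not $\hat\sigma^2$) multiplying both penalty terms. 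One then replaces $\sigma_*^2$ by $\sigma^2$ via the \emph{two--sided} chi--square bound $(1-\epsilon)\sigma^2\le\sigma_*^2\le(1+\epsilon)\sigma^2$: the upper side gives $(1+\epsilon)\sigma^2\,\text{Pen}(n,K_*)$ in the final bound, and the lower side is used to show that the negative term $-\sigma_*^2\,\text{Pen}(n,\hat K)/(1+\text{Pen}(n,\hat K)/(n(n-1)))$ dominates the positive $\sigma^2(C_1 n\hat K+\cdots)$ terms. That last cancellation is exactly where the restriction (\ref{community-limit}) and the coefficient $4$ in (\ref{e:pen2}) enter: under (\ref{community-limit}) the denominator is at most $2$, and $(1-\epsilon)^{-1}<2$ for $\epsilon<1/2$, so the combination is nonpositive.

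Your route --- bounding $\|A-\hat P\|_F^2$ from below uniformly over clusterings via dimension counting and a chi--square lower tail --- could in principle be made to work, since for a fixed clustering the additive fit is a linear projection onto a subspace of dimension $O(n\hat K)$ and (\ref{community-limit}) caps that well below $n(n-1)$. But your sketch has loose ends (the $P_*$ cross terms on mis-clustered blocks, and the claim that ``no further exponential term is incurred'' despite invoking an additional Gaussian bound), and it forces you to juggle constants so that the penalty coefficient $4$ absorbs a factor $6/\kappa$ rather than just $6/(\text{denominator}\ge 1)$. The paper's multiplicative rewriting eliminates $\hat\sigma^2$ from the analysis entirely and makes the constant-matching transparent.
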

\begin{proof}    
By assumption,  
$$||A- \hat{P}||_F^2 + \hat{\sigma}^2 \text{Pen}(n , \hat{K}) \le ||A- P_*||_F^2 + \sigma_*^2 \text{Pen}(n , K_*),$$
where $\hat{\sigma}^2 = \frac{||A-\hat{P}||_F^2}{n(n-1)}$ as in (\ref{proof2-sigma-hat}), and $\sigma_*^2 = \frac{||A-P_*||_F^2}{n(n-1)}$. The above is equivalent to
$$||A- \hat{P}||_F^2 \left(1+\frac{\text{Pen}(n , \hat{K})}{n(n-1)}\right) \le ||A- P_*||_F^2 \left( 1+ \frac{\text{Pen}(n , K_*)}{n(n-1)}\right).$$ Then, following the proof of Theorem \ref{T:conc} leading to (\ref{e:inequality-penalty}), we have\\
\begin{align}
&||P_*- \hat{P}||_F^2 \left(1+\frac{\text{Pen}(n , \hat{K})}{n(n-1)}\right)& \nonumber \\
&\le  2Tr(\Xi'(\hat{P} -P_*)) \left(1+\frac{\text{Pen}(n , \hat{K})}{n(n-1)}\right) + \frac{||A- P_*||_F^2}{n(n-1)}\left(\text{Pen}(n , K_*)- \text{Pen}(n , \hat{K})\right). \nonumber
\end{align}
Dividing both sides by $1+\frac{\text{Pen}(n , \hat{K})}{n(n-1)}$ and following the arguments culminating in (\ref{on-omega-result}) but adjusting constants as necessary, with probability $\ge 1-3e^{-t}$:
\begin{align}
&(1-c)||P_*- \hat{P}||_F^2 & \nonumber \\
&\le \sigma^2 (C_1 n\hat{K} + C_2 \hat{K}^2 \log(n) + C_3 (\log(n)+n\log(\hat{K}))+ Ct) + \frac{\sigma_*^2 \text{Pen}(n , K_*)}{1 + \frac{\text{Pen}(n , \hat{K})}{n(n-1)}} - \frac{\sigma_*^2 \text{Pen}(n , \hat{K})}{1 + \frac{\text{Pen}(n , \hat{K})}{n(n-1)}}& \nonumber
\end{align}
\begin{equation}\label{drop-denom}
\le \sigma^2 (C_1 n\hat{K} + C_2 \hat{K}^2 \log(n) +  C_3 (\log(n)+n\log(\hat{K})) + Ct) + \sigma_*^2 \text{Pen}(n , K_*)- \frac{\sigma_*^2 \text{Pen}(n , \hat{K})}{1 + \frac{\text{Pen}(n , \hat{K})}{n(n-1)}},
\end{equation}
where the denominator from the second to last term was dropped in the last inequality. 

Note that $\sigma_*^2$ is the average of $\frac{n(n-1)}{2}$ squared $\mathcal{N}(0, \sigma^2)$ random variables, so its distribution is  $\frac{2\sigma^2}{n(n-1)} \chi_{\frac{n(n-1)}{2}}^2$. Using a result from \cite{johnstone2001chi}, for $\epsilon \in [0, 1/2)$,  
\begin{equation}\label{sigma-vs-sigma}
\mathbb{P}\big((1-\epsilon)\sigma^2 \le \sigma_*^2 \le (1+\epsilon)\sigma^2\big)  \ge 1- e^{-\frac{3}{32}n(n-1)\epsilon^2}.
\end{equation}
Putting together (\ref{drop-denom}) and (\ref{sigma-vs-sigma}), for $\epsilon \in [0, .5)$,  with probability $\ge 1- 3e^{-t} - e^{-\frac{3}{32}\epsilon^2 n(n-1)}$, 
\begin{align}\label{pre-improper}
&(1-c)||P_*- \hat{P}||_F^2 \le C\sigma^2 t + (1+\epsilon)\sigma^2 \text{Pen}(n , K_*) \nonumber \\ 
&+\sigma_*^2\left(\frac{C_1}{1-\epsilon} n\hat{K} + \frac{C_2}{1-\epsilon} \hat{K}^2 \log(n) +\frac{C_3}{1-\epsilon}(\log(n)+n\log(\hat{K})) - \frac{\text{Pen}(n , \hat{K})}{1 + \frac{\text{Pen}(n , \hat{K})}{n(n-1)}}\right).
\end{align}
Using (\ref{community-limit}), (\ref{e:pen2}), and $\epsilon < 1/2$, the term in parentheses in (\ref{pre-improper}) is bounded by
$$2(C_1n\hat{K} + C_2\hat{K}^2 \log(n) +C_3(\log(n)+n\log(\hat{K}))) - \frac{\text{Pen}(n , \hat{K})}{2}\le 0.$$
Hence, with probability $\ge 1- 3e^{-t} - e^{-\frac{3}{32}n(n-1)\epsilon^2}$, $$||P_*- \hat{P}||_F^2 \le \frac{C\sigma^2 t + (1+\epsilon)\sigma^2 \text{Pen}(n , K_*)}{(1-c)}.\qquad \qquad \QED$$ \hfill
\end{proof}

\subsection{Auxiliary results}
The following results were used in the proof of Theorem \ref{T:conc}. 

\begin{lemma} \label{L:NLSM1} 
Let $\Xi$ be a symmetric $n\times n$ matrix with 0's on the diagonal and independent $\mathcal{N}(0, \sigma_{uv}^2)$ entries above the diagonal, where all $\sigma_{uv}^2 \le \sigma_{max}^2$. Let $\Xi$ be partitioned into $K^2$ submatrices $\Xi^{(k, l)}, k, l = 1, \ldots, K$ 
For some constants $C_1, C_2$, and $t>0$, 
\begin{equation}\label{Lemma1}
\mathbb{P}\left(\sum\limits_{k, l =1}^{K} ||\Xi^{(k, l)}||_{op}^2 \le \sigma_{max}^2 (C_1 nK + C_2 K^2 \log(n) + C_3 t) \right) \ge 1 - e^{-t}.
\end{equation}
\end{lemma}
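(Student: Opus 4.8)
The plan is to bound the operator norm of each of the $K^2$ blocks $\Xi^{(k,l)}$ separately by its mean plus a Gaussian deviation term, and then assemble the block bounds into a single inequality in a way that keeps the dependence on $t$ linear. For an off-diagonal block $k\neq l$, $\Xi^{(k,l)}$ is an $n_k\times n_l$ matrix with independent $\mathcal N(0,\sigma_{uv}^2)$ entries, $\sigma_{uv}^2\le\sigma_{\max}^2$; by Sudakov--Fernique comparison (the entry variances are at most $\sigma_{\max}^2$) one has $\mathbb{E}\|\Xi^{(k,l)}\|_{op}\le\sigma_{\max}(\sqrt{n_k}+\sqrt{n_l})$, and since $M\mapsto\|M\|_{op}$ is $1$-Lipschitz in the Frobenius norm it is $\sigma_{\max}$-Lipschitz as a function of the underlying standard Gaussians, so Borell--TIS gives $\mathbb{P}\big(\|\Xi^{(k,l)}\|_{op}>\sigma_{\max}(\sqrt{n_k}+\sqrt{n_l})+\sigma_{\max}s\big)\le e^{-s^2/2}$. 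For a diagonal block $k=l$, $\Xi^{(k,k)}$ is a symmetric matrix with zero diagonal and independent sub-$\sigma_{\max}^2$ Gaussian entries above it; the Wigner-type bound $\mathbb{E}\|\Xi^{(k,k)}\|_{op}\le 2\sigma_{\max}\sqrt{n_k}$ and the same Lipschitz concentration tail apply. In all cases $\|\Xi^{(k,l)}\|_{op}\le a_{kl}+\sigma_{\max}s$ with probability $\ge 1-e^{-s^2/2}$, where $a_{kl}^2\le C\sigma_{\max}^2(n_k+n_l)$.

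The delicate step is to aggregate these $K^2$ tail bounds without paying a union-bound cost that multiplies $t$ by $K^2$. I would split $\|\Xi^{(k,l)}\|_{op}\le\bar a_{kl}+W_{kl}$, where $\bar a_{kl}:=a_{kl}+\sigma_{\max}\sqrt{2\log(K^2)}$ absorbs the union buffer once and for all and $W_{kl}:=(\|\Xi^{(k,l)}\|_{op}-\bar a_{kl})_+$ is the residual excess, which by the previous display satisfies $\mathbb{P}(W_{kl}>\sigma_{\max}\sqrt{2s})\le 2K^{-2}e^{-s}$ for $s\ge 0$. Then $\mathbb{E}\exp\!\big(\lambda\,W_{kl}^2/(2\sigma_{\max}^2)\big)\le 1+cK^{-2}$ for $\lambda$ a small absolute constant, and — using that the distinct off-diagonal blocks (indexed by $k<l$, with $\Xi^{(l,k)}$ a transpose of $\Xi^{(k,l)}$) together with the diagonal blocks involve pairwise disjoint sets of entries of $\Xi$, hence are independent — one gets $\mathbb{E}\exp\!\big(\tfrac{\lambda}{2\sigma_{\max}^2}\sum_{k,l}W_{kl}^2\big)\le(1+cK^{-2})^{K^2}\le e^{c}$. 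Markov's inequality then yields $\sum_{k,l}W_{kl}^2\le C\sigma_{\max}^2(t+1)$ with probability $\ge 1-e^{-t}$.

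It remains to collect the deterministic part. Using $(x+y)^2\le 2x^2+2y^2$, $(\sqrt{n_k}+\sqrt{n_l})^2\le 2(n_k+n_l)$, $\sum_{k,l}(n_k+n_l)=2nK$, and $\log(K^2)=2\log K\le 2\log n$ (as $K\le n$), one obtains $\sum_{k,l}\bar a_{kl}^2\le C\sigma_{\max}^2(nK+K^2\log n)$, whence on the event above $\sum_{k,l}\|\Xi^{(k,l)}\|_{op}^2\le 2\sum_{k,l}\bar a_{kl}^2+2\sum_{k,l}W_{kl}^2\le\sigma_{\max}^2(C_1nK+C_2K^2\log n+C_3t)$, which is (\ref{Lemma1}). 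An alternative, slicker route avoids the explicit buffer: the functional $\Xi\mapsto(\sum_{k,l}\|\Xi^{(k,l)}\|_{op}^2)^{1/2}$ is $\sqrt2\,\sigma_{\max}$-Lipschitz in the entries of $\Xi$ (its blocks involve disjoint entries, and symmetry contributes a bounded multiplicity), so Borell--TIS directly gives $\sum_{k,l}\|\Xi^{(k,l)}\|_{op}^2\le 2\,\mathbb{E}\big[\sum_{k,l}\|\Xi^{(k,l)}\|_{op}^2\big]+C\sigma_{\max}^2 t$ with probability $\ge 1-e^{-t}$, and the block-wise mean and variance bounds give $\mathbb{E}\big[\sum_{k,l}\|\Xi^{(k,l)}\|_{op}^2\big]\le C\sigma_{\max}^2\,nK$, which would actually produce $K^2$ in place of $K^2\log n$ (a harmless strengthening).

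The main obstacle is exactly the one addressed in the second paragraph: keeping the $t$-term linear rather than $K^2 t$. A naive union bound over the $K^2$ blocks forces each block to carry its own deviation parameter and costs $K^2 t$; the remedy is to pay the union cost once inside the logarithmic term and then control the aggregate of the squared residual excesses by an exponential-moment (Bernstein-type) estimate, exploiting block independence — or, equivalently, to use Lipschitz concentration of the full functional. The other ingredients — the sharp mean bounds $\mathbb{E}\|M\|_{op}\le\sigma(\sqrt m+\sqrt p)$ for rectangular Gaussian matrices and $\le 2\sigma\sqrt m$ for Wigner matrices (with variances $\le\sigma_{\max}^2$ handled by Sudakov--Fernique), and Borell--TIS concentration of the operator norm — are classical and enter as black boxes.
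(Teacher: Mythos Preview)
Your proposal is correct and follows the same overall architecture as the paper's proof: write each block norm as expectation plus centered fluctuation, bound the sum of squared expectations deterministically, and control the sum of squared fluctuations using that the centered block norms (indexed by $k\le l$) are independent sub-Gaussians. The technical devices differ. For the block means, the paper invokes Bandeira--van Handel, which carries the extra $\sqrt{\log\min(n_k,n_l)}$ term and is the source of the $K^2\log n$ in the statement; you use Sudakov--Fernique/Gordon, which is sharper in the purely Gaussian case. For the aggregation step --- which you rightly flag as the crux --- the paper applies the Hsu--Kakade--Zhang quadratic-form tail bound to the vector $\big(\|\Xi^{(k,l)}\|_{op}-\mathbb{E}\|\Xi^{(k,l)}\|_{op}\big)_{k\le l}$ of independent sub-Gaussians, obtaining $\|\eta\|^2\le C\sigma_{\max}^2(K^2+t)$ in one stroke. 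Your first route (absorb a $\sqrt{2\log K^2}$ buffer, then bound the MGF of the residual sum of squares by hand) is a self-contained reproof of essentially the same estimate; your second route (Borell--TIS applied directly to the $\sqrt{2}\,\sigma_{\max}$-Lipschitz functional $(\sum_{k,l}\|\Xi^{(k,l)}\|_{op}^2)^{1/2}$) is a genuinely slicker alternative that bypasses the block-by-block bookkeeping and, as you note, removes the $\log n$ factor. All three aggregation arguments are valid; the paper's is the most off-the-shelf, yours the most self-contained.
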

\begin{proof}For a fixed partition, let $\xi$ and $\mu$ be vectors with entries $\xi_{k, l} = ||\Xi^{(k,l)}||_{op} $ and  $\mu_{k,l} = \mathbb{E}||\Xi^{k,l}||_{op}$ and $\eta = \xi - \mu$. Then,
$$ \sum\limits_{k,l=1}^{K} ||\Xi^{(k, l)}||_{op}^2 = ||\xi||^2 \le 2||\eta||^2 + 2||\mu||^2.$$
We start by bounding $||\mu||^2$. Using Theorems 1.1 and 3.1 from \cite{Bandeira_2016}, letting $n_{k}$ and $n_{l}$ denote the number of rows and columns respectively in $\Xi^{(k, l)}$,
 $$\mu_{k, l} = \mathbb{E}||\Xi^{(k, l)}||_{op} \le (1+\epsilon)\left(\sqrt{n_{k}}\sigma_{max} + \sqrt{n_{l}}\sigma_{max} + \frac{6}{\sqrt{\log(1+\epsilon)}}\sigma_{max}\sqrt{\log(\min(n_{k}, n_{l}))}\right)$$ for any $0< \epsilon \le  1/2$. In other words, 
$$\mu_{k,l} \le C_0 \sigma_{max}(\sqrt{n_{k}} + \sqrt{n_{l}} +\sqrt{\log(\min(n_{k}, n_{l}))}\, ),$$
$$\mu_{k,l}^2 \le 3C_0^2 \sigma^2_{max}(n_{k} + n_{l} +\log(\min(n_{k}, n_{l}))),$$
\begin{equation}\label{L:mu}
||\mu||^2 \le 3 C_0^2 \sigma^2_{max}\sum\limits_{k,l=1}^{K} (n_{k} + n_{l} +\log(\min(n_{k}, n_{l}))) \le 6C_0^2 \sigma^2_{max}nK + 3C_0^2 \sigma^2_{max}K^2 \log(n).
\end{equation}
Next, for $1\le k \le l \le K$, $\eta_{k,l} = \xi_{k,l} - \mu_{k, l}$ are all independent random variables since all errors are assumed to be independent. By Theorem 5.8 of \cite{boucheron2013concentration}, $$\mathbb{P}(|\eta_{k, l}| \ge t ) = \mathbb{P}(|\xi_{k,l} - \mu_{k,l}| \ge t ) \le  2e^{{\frac{-t^2}{4\sigma_{max}^2}}},$$ so $\eta_{k,l}$ is sub-gaussian. Since $\mathbb{E}(\eta_{k,l})=0$, using sub-gaussianity, from Proposition 2.5.2 of \cite{vershynin2018high}, there exists a constant $C \le 288e$ such that $$\mathbb{E}(e^{t\eta_{k,l}}) \le e^{\frac{C\sigma_{max}^2 t^2}{2}}.$$
Let $\tilde{\eta}$ be the sub-vector of $\eta$ which includes the $\eta_{k,l}$ values for $1\le k \le l \le K$. Then, Theorem 2.1 of \cite{Hsu_Kakade_Zhang_2012} ensures that, for any square matrix $M$, using the same constant $C$,

$$\mathbb{P}\left(||M\tilde{\eta}||^2 > C\sigma_{max}^2(Tr(M'M) + 2\sqrt{Tr((M'M)^2)t} + 2||M'M||_{op} t )\right) \le e^{-t}.$$
Letting $M=I_{K(K+1)/2}$, this becomes
$$\mathbb{P}\left(||\tilde{\eta}||^2 \ge C\sigma_{max}^2(K(K+1)/2 + \sqrt{2K(K+1)t} +2t\right) \le e^{-t},$$
and since $||\eta||^2 \le 2||\tilde{\eta}||^2$, 
\begin{equation}\label{L:eta}
\mathbb{P}\left(||\eta||^2 \le 2C\sigma_{max}^2K(K+1) + 6 C\sigma_{max}^2 t \right) \ge 1 - e^{-t}.
\end{equation}
Combining (\ref{L:mu}) and (\ref{L:eta}), 
$$\mathbb{P}\left( |\xi||^2 \le 12C_0^2 \sigma^2_{max}nK + 6C_0^2 \sigma^2_{max}K^2 \log(n) + 4C \sigma_{max}^2 K(K+1) + 12 C \sigma_{max}^2  t\right)\ge 1 - e^{-t}.$$
Collecting terms yields (\ref{Lemma1}). \hfill \QED
\end{proof}\\

The following lemma is nearly identical to Lemma 6 in \cite{noroozi2019estimation}, and serves the same purpose, to translate the bound in Lemma \ref{L:NLSM1}, which is conditional on a particular partition, into an unconditional bound. 
\begin{lemma}\label{L:NLSM2}
For any t $> 0$, 
\begin{equation}
\mathbb{P}\left(\sum\limits_{\hat{k},\hat{l}=1}^{\hat{K}} ||\Xi^{(\hat{k},\hat{l})}||_{op}^2 \le \sigma_{max}^2 (C_1 n\hat{K} + C_2 \hat{K}^2 \log(n) + C_3 (t+ \log(n) + n \log(\hat{K}))) \right) \ge 1 - e^{-t}.
\end{equation}
\end{lemma}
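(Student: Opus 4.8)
The statement is the ``uniform over all clusterings'' version of Lemma \ref{L:NLSM1}, so the natural route is a union bound over partitions, mirroring Lemma 6 in \cite{noroozi2019estimation}. The key observation is that Lemma \ref{L:NLSM1} controls $\sum_{k,l}||\Xi^{(k,l)}||_{op}^2$ only for a \emph{fixed} partition, whereas here the clustering $\{\hat k\}$ that determines the blocks $\Xi^{(\hat k,\hat l)}$ is data-driven --- it is the minimizer of (\ref{e:frob-and-pen}) --- and hence cannot be treated as deterministic. We must therefore pay a price for effectively searching over all partitions, and that price is exactly the additional $\log(n)+n\log(\hat K)$ term appearing inside the bound.

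Concretely, I would proceed as follows. Fix $K\in\{1,\dots,n\}$ and a partition $G\in\mathcal{G}_K$ of the $n$ nodes into (labelled) groups. Apply Lemma \ref{L:NLSM1} to this $G$ with the deviation parameter $t$ replaced by $t':=t+\log(n)+n\log(K)$, which yields
\[
\mathbb{P}\Big(\textstyle\sum_{k,l=1}^{K}||\Xi^{(k,l)}||_{op}^2 > \sigma_{max}^2\big(C_1 nK + C_2 K^2\log(n) + C_3(t+\log(n)+n\log(K))\big)\Big) \le e^{-t'} = \frac{e^{-t}}{n}\,K^{-n}.
\]
There are at most $K^n$ partitions in $\mathcal{G}_K$ (each of the $n$ nodes receives one of $K$ labels), so a union bound over $G\in\mathcal{G}_K$ leaves a failure probability of at most $e^{-t}/n$ for this value of $K$. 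A further union bound over $K=1,\dots,n$ --- we may restrict to $\hat K\le n$, since a clustering has at most $n$ nonempty communities --- then gives total failure probability at most $e^{-t}$.

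On the complement of this exceptional event, which has probability at least $1-e^{-t}$, the inequality
\[
\textstyle\sum_{k,l=1}^{K}||\Xi^{(k,l)}||_{op}^2 \le \sigma_{max}^2\big(C_1 nK + C_2 K^2\log(n) + C_3(t+\log(n)+n\log(K))\big)
\]
holds \emph{simultaneously} for every partition and every $K$; in particular it holds for the clustering into $\hat K$ communities induced by the minimizer of (\ref{e:frob-and-pen}), which is precisely the claimed bound. I do not anticipate any genuinely analytic obstacle beyond Lemma \ref{L:NLSM1}: the only delicate point is the combinatorial bookkeeping, namely checking that the $K^{-n}$ and $1/n$ factors inserted through $t'$ exactly cancel the $K^n$ partitions in $\mathcal{G}_K$ and the $n$ possible values of $K$, so that the two nested union bounds collapse to a clean $1-e^{-t}$ with the constants $C_1,C_2,C_3$ carried over unchanged from Lemma \ref{L:NLSM1}.
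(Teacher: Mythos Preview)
Your proposal is correct and matches the paper's own proof essentially line for line: apply Lemma \ref{L:NLSM1} with the deviation parameter inflated to $t+\log(n)+n\log(K)$, then take a union bound over the at most $K^n$ partitions in $\mathcal{G}_K$ and over $K=1,\dots,n$, so that the factor $nK^n$ is exactly absorbed by the inflation. There is no difference in approach or constants.
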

\begin{proof}
Denoting the set of partitions of the nodes into $K$ communities as $\mathcal{G}_K$, for any fixed partition $G \in \mathcal{G}_K$, from Lemma \ref{L:NLSM1},
$$\mathbb{P}\left(\sum\limits_{k, l =1}^{K} ||\Xi^{(k, l)}||_{op}^2 \ge \sigma_{max}^2 (C_1 nK + C_2 K^2 \log(n) + C_3 x) \right) \le  e^{-x}.$$
Taking a union bound over all possible partitions and setting $x = t+ \log(n) + n \log(K)$ 
$$\mathbb{P}\left(\sum\limits_{\hat{k},\hat{l}=1}^{\hat{K}} ||\Xi^{(\hat{k},\hat{l})}||_{op}^2 -  \sigma_{max}^2 (C_1 n\hat{K} + C_2 \hat{K}^2 \log(n) + C_3 (t + \log(n) + n \log(\hat{K}))) \ge 0 \right)$$
$$ \le \mathbb{P} \left( \max \limits_{1\le K \le n}\max \limits_{G \in\mathcal{G}_K } \sum\limits_{k, l =1}^{K} ||\Xi^{(k, l)}||_{op}^2 -  \sigma_{max}^2 (C_1 nK + C_2 K^2 \log(n) + C_3 (\log(n) + n \log(K))) \ge \sigma_{max}^2 C_3 t  \right)$$
$$\le \sum \limits_{K=1}^n \sum \limits_{G \in\mathcal{G}_K } \mathbb{P} \left(\sum\limits_{k, l =1}^{K} ||\Xi^{(k, l)}||_{op}^2 - \sigma_{max}^2 (C_1 nK + C_2 K^2 \log(n) + C_3 (\log n + n \log(K)))\ge\sigma_{max}^2 C_3 t \right) $$
$$\le nK^n e^{-t -\log(n) -n\log(K)} = e^{-t}.  \QED$$  \hfill
\end{proof}

\section{Additional estimation details}\label{sa:repeats}
\subsection{Pre-estimating the diagonal for within community NMF}
When applying the methodology described in Section \ref{s:nmicim1} to a set of edges within the same community, symmetry ensures that $a= b$, so there are no conflicts among estimators. There is, however, a need to resolve the fact that no self loops ($W_{uu} =0$) may impact the estimates of $a$ and $\alpha$ for the within community setting $i=j$. To avoid this impact, the values of the diagonal entries $W_{uu}$ should be imputed before computing the NMF. Were self loops allowed, the expected value a self loop would be $\mathbb{E}(W_{uu}|Z) = 2 \alpha Z_u$, so we can estimate this value while accounting for all diagonal zeros and estimates of other $Z$'s by setting
\begin{equation}\label{e:self-loop}
W_{uu} = \frac{\big(2\sum\limits_{v \in i , u \ne v } W_{uv}\big) - \big(\frac{1}{n_i-2} \sum\limits_{v \in i, u \ne v } \sum\limits_{q\in i, q \ne u, v} W_{vq}\big)}{n_i-1}\,,
\end{equation}
where $n_i$ is the number of nodes in community $i$. Essentially, this takes twice the average within community edge weight for a particular node $u$ and subtracts out twice the average non-diagonal edge weight connecting the other nodes in community $i$. This approximates twice the impact node $u$ while negating the collective impact of the other nodes. 

\subsection{Handling repeated edge weight values}
In Section \ref{s:nmcomp}, when estimating the uniformly distributed values of $\widehat{G}(w)$ in (\ref{e:edge-ECDF}) and $\widehat{\Psi}_u^{(j)}$ in (\ref{e:psi-est}), we make an adjustment if we observe the same value multiple times. Let $W_1 < W_2$ be two consecutive sorted values of $W_{uv}$'s in the same pair of communities, with $\widehat{G}(W_1) = \frac{k}{n+1}$ and $\widehat{G}(W_2) = \frac{k+m}{n+1}$, where $m>1$. That is, there are $m$ different edges in the set $\{W_{uv}: \, u \in i, \, v \in j, \, W_{uv} =W_2\}$. Then, we set  
\begin{equation}\label{e:reposition}
\widehat{G}(W_2) = \frac{k + \frac{m}{2} + \frac{1}{2m}}{n+1}.
\end{equation}
The purpose of (\ref{e:reposition}) is to make all the new $\{\widehat{G}(W_{uv}): \, u \in i, \, v \in j, \, W_{uv} =W_2\}$ values to be at least halfway between $\widehat{G}(W_{1})$ and the original $\widehat{G}(W_2)$ value, but where more duplicate values will bring this value down further. 
When there are no duplicate values (i.e. $W_1 < W_2 < W_3<\ldots<W_{n}$), this formula leaves $\widehat{G}(W_2)$ intact. When duplicate values are present, there will still be duplicate values after applying (\ref{e:reposition}), but they are moved to a different location between $\frac{k}{n+1}$ and $\frac{k+m}{n+1}$.  

 If the $D_{j}(u)$ values repeat in (\ref{e:local-degree}), we make an analogous adjustment to (\ref{e:psi-est}).

\section{Additional community detection details}\label{a:comm-det2}
In this appendix, more information is provided about community detection for the kinds of networks described in the main paper, including motivating ideas, algorithms, and some discussion of why existing community detection methods may be inappropriate in this setting.

\subsection{Limitations of modularity}\label{mod-problems}
Many clustering algorithms on networks attempt to maximize modularity, but this may not be the appropriate measure to use for dense networks. To see this, consider the network shown in Figure \ref{f:reconstruction}. That figure begins on the left with a network where $\sigma =0$ with 74 nodes in 2 communities exhibiting within community positive association and between community negative association. Within community edges are drawn from a uniform distribution with a maximum of 150, while between community edges are drawn from a uniform distribution with a maximum of 100. Node sociability parameters for both communities range from .05 to .95 in increments of .025. The $H$-function in the original network is the same as that used for the rightmost plot in Figure \ref{f:different-h-funcs}, though since the between community edges have negative association, the inputs to that $H$-function are $1 - \Psi_u$ and $1 - \Psi_v$. While the network is not strictly assortative, after accounting for node sociabilities and using the appropriate inputs, edges between nodes in the same communities have 50\% greater weights than edges between nodes in different communities. In this sense there is some notion of homophily that is absent in other regimes where modularity fails. Ordered as in the figure, one can visually identify 2 distinct communities. The second plot from the left in the figure is the estimate of the left plot using the community assignments when clustering nodes using the walktrap algorithm of \cite{pons2006computing} with four steps, which returns three communities, not two. The third plot is the estimate of the first plot using the community assignments by calculating the leading non-negative eigenvector of the modularity matrix of the graph. The fourth plot is the estimate of the first plot using the correct assignments. The modularity of the true communities on this network is in fact slightly negative. This failure of community detection algorithms will feed incorrect community labels to the estimation procedure, leading to estimates that don't preserve the structure of the original network, as seen in the figure. 
\setcounter{figure}{9} 
\begin{figure}
\begin{center}
 \includegraphics[width = \linewidth]{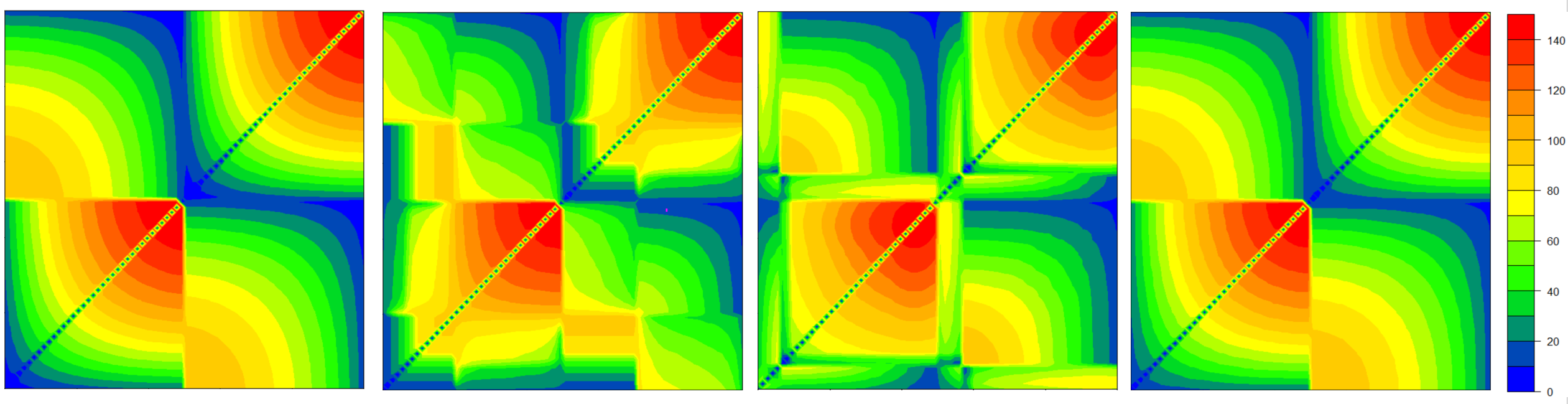}
\end{center}
\caption{Reconstructing a network (left) based on different community assignments.}
\label{f:reconstruction}
\end{figure}

The issue is modularity tries to identify highly interconnected nodes, where edge weights within communities are expected to be higher than edge weights between communities. The methodology described above needs clusters to have a different property to work appropriately, namely that the nodes in each community should obey a kind of monotonicity. Nodes in the same community should have similar patterns of connecting to nodes in other communities, and their own community. In the network in Figure \ref{f:reconstruction}, all nodes in community 1 ``prefer" other nodes in community 1 with high sociability ($\Psi$) values, but ``prefer" nodes in community 2 with low sociability values. This shared ordering of preferences over nodes in each community is crucial for ensuring that the estimation procedure will get appropriate orderings of local sociability statistics. 

It is not difficult to construct networks where it would be useful to combine the community detection using $L$ with modularity. For example, assume communities $i$ and $j$ have positive association both within community and between the communities, but the within community edges are generally much larger than the between community edges. Also assume both $i$ and $j$ have negative association with community $k$. Clustering nodes to simply share ordering of preferences would separate nodes in community $k$ but would not distinguish nodes in community $i$ from nodes in community $j$. Subsequently employing a modularity maximization algorithm on the estimated community consisting of nodes in $i$ and $j$ would recover the true communities and lead to better estimates for all of the edges within and between $i$ and $j$.

\subsection{Greedy algorithm for community detection} \label{s:greedy-alg}
A direct algorithm for maximizing $L$ in (\ref{e:measure}) is to try to iteratively combine nodes into communities which will greedily make $L$ larger. It is computationally impractical to test every possible partition of nodes to maximize $L$. However, based on the structure of $L$, Algorithm \ref{alg:community-greedy} tries to combine communities which are most correlated with one another. 

At the start, each node is placed into its own estimated community. The ``aggregate degree" of each estimated community is defined as the total weight of the edges emanating from any node in the estimated community, where edges are double counted if they connect two nodes in the same estimated community. In Algorithm \ref{alg:community-greedy}, the aggregate degree of an estimated community is equal to the column sum of the estimated community's corresponding column in the ``communityAggregate" matrix. When each node is in its own estimated community, the aggregate degree of the community is the same as the degree of the node. In each ``round," the algorithm orders the estimated communities by their aggregate degrees at the beginning of the round from largest to smallest. Then the algorithm visits these estimated communities in order, and for each estimated community $\hat{i}$, selects a candidate estimated community $\hat{j} \ne \hat{i}$ which maximizes
$$\mathcal{C}_{\hat{i}\hat{j}} = corr\left\{\big(\sum\limits_{u:u \in \hat{i}}W_{uv}, \sum\limits_{q:q \in \hat{j}}W_{qv}\big): v \in V\right\},$$
where $\sum_{u:u \in \hat{i}}W_{uv},  v \in V$, is a length $n$ vector where each entry has the total edge weight connecting nodes in estimated community $\hat{i}$ with each node in the network. Estimated communities $\hat{i}$ and $\hat{j}$ are merged if doing so does not decrease the measure $L$. If $\hat{i}$ and $\hat{j}$ are merged, and if $\hat{j}$ has not already been visited by the algorithm this round, the algorithm will visit the combined $\hat{i}$ and $\hat{j}$ when it would have visited $\hat{j}$. Each round completes after the algorithm has completed all scheduled visits. 

If at least two estimated communities have been merged, the algorithm proceeds to the next round. If no communities have been merged, the algorithm does a sweep, calculating $\mathcal{C}_{\hat{i}\hat{j}}$ for all estimated communities, and attempts to merge pairs of estimated communities in decreasing order of $\mathcal{C}$ values. As soon as any pair of estimated communities are merged, the algorithm stops the sweep and proceeds to the next round.  Algorithm \ref{alg:community-greedy} terminates when it goes through a full round and a sweep without merging any estimated communities, or when the number of estimated communities reaches 1.        

Two details of the algorithm should be explained further. First, the algorithm does not require merging communities to increase $L$ because at the outset, when all nodes are in their own estimated community, combining two communities \textit{cannot} increase $L$. Therefore, requiring a merger to increase $L$ would prevent the algorithm from gaining any traction. Second, why go through each round instead of constantly sweeping, or better yet just combining the estimated communities which would most increase $L$? In addition to this proposal being computationally costly, it may also lead to the initial formation of a single large but overly heterogeneous community since communities can only contribute to $L$ once they contain three or more nodes. As this is already a greedy algorithm, we view our design as a means of not overlooking any estimated community, and as a conservative precaution against premature optimization. 

\begin{algorithm}
\SetAlgoLined
\DontPrintSemicolon
\SetKwFunction{attemptMerge}{attemptMerge} 
\SetKwFunction{sweep}{sweep} 
\KwResult{$\{\, \widehat{i} \,\}$}
\KwInput{$W$}
\nl labels$^{(0)} = \vec{0}_n$\;
\nl labels$^{(1)} = \{1, ..., n\}$\; 
\nl $q$=1\;
\nl communityAggregate = $W$\;
\nl \While {$labels^{(q)} \ne labels^{(q-1)}$} {
	\nl $q$++\;
	\nl labels$^{(q)}$ = labels$^{(q-1)}$\;
	\nl labelOrder = sort labels by decreasing column sum of communityAggregate\; 
	\nl \While {$length(labelOrder) > 0$}{
		\nl labels$^{(q)}$, communityAggregate, labelOrder = \attemptMerge{communityAggregate, labels$^{(q)}$, labelOrder}\;
		\nl dequeue(labelOrder(1))\;
	}
	\nl \If{$labels^{(q)} = labels^{(q-1)}$}{
		\nl labels$^{(q)}$, communityAggregate = \sweep{communityAggregate, labels$^{(q)}$}\;
	}
}
\nl finalClustering = labels$^{(q)}$\;
\;
  \SetKwProg{myproc}{Procedure}{}{}
  \myproc{\attemptMerge{\text{communityAggregate}, \text{labels}$^{\text{(q)}}$, \text{labelOrder}}}{
		\nl newLabels = labels$^{(q)}$\;
		\nl newOrder = labelOrder\;
		\nl A = newOrder(1)\;  
		\nl B = $\argmax{\text{B}' \ne \text{A}}$ $corr([\text{communityAggregate(A)}], [\text{communityAggregate(B')}])$\;
 \tcp{[\text{communityAggregate(A)}] represents the Ath column in the communityAggregate matrix}
		\nl mergedLabels = newLabels\;
		\nl mergedLabels(mergedLabels = B) = A\;
		\nl mergedOrder = newOrder\; 
 		\nl mergedOrder(mergedOrder = B) = A\;
		\nl \If{calculateMeasureL(\text{W}, \text{mergedLabels}) $\ge$ calculateMeasureL(\text{W}, \text{newLabels})}{
			\nl newLabels = mergedLabels\;
			\nl newOrder = mergedOrder\;
			\nl [communityAggregate(A)] += [communityAggregate(B)]\;
			\nl Remove [communityAggregate(B)] column from communityAggregate\;
			}
		  \nl \KwRet newLabels, communityAggregate, newOrder
}
\caption{Greedy algorithm using $L$}
\label{alg:community-greedy}
\end{algorithm}

\begin{algorithm}
\DontPrintSemicolon 
\SetAlgoLined
\setcounter{AlgoLine}{28}
	\SetKwProg{myproc}{Procedure}{}{}
	\myproc{\sweep{communityAggregate, labels$^{(q)}$}}{
		\nl correlationOrder = sort non-matching label pairs by decreasing correlations of columns in communityAggregate\;
		\nl \While{$length(correlationOrder) > 0$}{
			\nl newLabels = labels$^{(q)}$\;
			\nl A = correlationOrder(1, 1)\;  
			\nl B = correlationOrder(1, 2)\;  
			\nl mergedLabels = newLabels\; 
			\nl mergedLabels(mergedLabels = B) = A\;
			\nl \eIf{calculateMeasureL(W, mergedLabels) $\ge$ calculateMeasureL(W, newLabels)}{
				\nl newLabels = mergedLabels\;
				\nl [communityAggregate(A)] += [communityAggregate(B)]\;
				\nl Remove [communityAggregate(B)] column from communityAggregate\;
				\nl dequeueAll(correlationOrder)
			}{
			\nl dequeue(correlationOrder(1))
			}
		}
\nl \KwRet newLabels, communityAggregate
}
\end{algorithm}

\subsection{Spatial clustering}\label{s:spatial}
For another perspective on community detection, we represent the original network as a matrix, and think of each row vector as a point in $n$-dimensional space corresponding to a particular node, where $n$ is the number of nodes in the network. Spatial clustering of the points then gives us communities, which is motivated as follows. Consider two nodes $u$ and $v$ in the same community with similar $\Psi$ values. For any third node $x \in j$, as $u \in i$ and $v \in i$, $H_{ij}(\Psi_u, \Psi_x)$ should be similar to $H_{ij}(\Psi_v, \Psi_x)$, thus $W_{ux}$ should be close to $W_{vx}$. Therefore, nodes in the same community with similar $\Psi$ values are expected to have similar edge weights, so entries in their corresponding rows should be similar in $n-2$ dimensions (the exceptions being due to no self loops). Therefore, in this $n$-dimensional space, two nodes with the same community and similar $\Psi$ values should be neighbors. 

Spatial clustering can be achieved using many existing clustering algorithms. For certain methods, such as those based on distances, one may need to account for the zeros on the diagonal by only calculating the distance between two nodes on the remaining $n-2$ dimensions. However, as with other spatial clustering problems, there is no algorithm which correctly clusters the nodes for every possible network.

\subsection{Spectral clustering algorithm incorporating $L$}\label{s:spectral-alg}
Appendix \ref{s:spatial} justifies the application of spatial clustering techniques for our networks of interest. In this section, we focus on one spatial clustering algorithm, spectral clustering, which we employ in tandem with the measure $L$ in (\ref{e:measure}), used to choose both a particular number of estimated communities as well as the best clustering for that number of communities. If the number of communities $K$ is known, we can follow the methodology of \cite{ng_jordan_weiss}. Measure the distance between every column in our network using a radial basis function kernel. Then construct a neighbors graph based on these distances, and a Laplacian based on this neighbors graph.  Finally, run K-means on the Laplacian to get clusterings. This can all be done using the existing \code{specc} function from the \proglang{R} package  \pkg{kernlab} just by specifying the number of centers, as \code{specc} will automatically select a scale parameter for the kernel. However, as this is not a deterministic algorithm, it can be helpful to run several replicates and take clustering that maximizes $L$.    

The remaining issue is how to choose the number of clusters, $K$, which is a priori unknown. However, we can use the introduced measure $L$ as a measure of clustering success, so we can impose a simple stopping rule, which is shown in Algorithm \ref{alg:community}. 


\begin{algorithm}
\SetAlgoLined
\KwResult{$\{\, \widehat{i} \,\}$}
\KwInput{$W$, replicates} \tcp{replicates is the number of times to run \code{specc} for each value of K}
\nl $\Delta$ = 1; $K$ =0\; 
	\nl \While {$\Delta > 0$}{
	\nl $K = K++$\;
	\nl clusterGoodness($K$) = $-\infty$\;
	\nl index $= 0$\;
    \nl \While{$\text{index}$ $<$ $\text{replicates}$}{
		\nl index$++$\;
		\nl candidateClustering = \code{specc}($W$, centers = $K$)\;
		\nl candidateGoodness =calculateMeasureL($W$, candidateClustering)\; 
			\nl \If{$candidateGoodness > clusterGoodness(K)$}{
    			\nl clusterGoodness($K$) = candidateGoodness\;
				\nl clustering($K$) = candidateClustering\;
			}
	}
\nl $\Delta$ = clusterGoodness($K$) $-$ clusterGoodness($K-1$) 
}
\nl finalClustering = clustering($K-1$)
 \caption{Community detection using spectral clustering and stopping criterion}
\label{alg:community}
 \end{algorithm}

\subsection{Clustering using normalized network}\label{s:rotated}

Another approach to community detection which seems to return interesting results is as follows: take the network as a whole and normalize each row in the original matrix by taking $N_{uv} = \frac{A_{uv}- \bar{A}_{u\bullet}}{SD(A_{u\bullet})}$, where $A_{u\bullet}$ represents the row in $A$ corresponding to node $u$. Next, calculate the eigenvalues $\lambda_1, \ldots, \lambda_n$ of the normalized network $N$, and note the index  corresponding to the largest difference in the absolute values of the real parts of these successive eigenvalues, $\argmax {1\le i \le n-1} |Re(\lambda_{i})| -  |Re(\lambda_{i+1})|$. Keep only the real parts of the first several eigenvectors corresponding to eigenvalues $\lambda_1, \ldots, \lambda_i$, and look for clusters in this lower dimensional space. 

A normalized version of the network presented in Figure \ref{f:reconstruction} is shown in the left plot of Figure \ref{f:normalized}. In the right plot of Figure \ref{f:normalized}, each point plots a row of the real parts of the first 2 eigenvectors of this row-normalized network, where the color of the point represents the true community of the corresponding node.

Plotting each row of these eigenvalues appears to give clearly separate clusters. Normalizing the network so all nodes have degree 1 doesn't give the same results. The first eigenvector of the original matrix often represents degree information, so the normalization to calculate $N$ should discard that degree information, but in the process, it also seems have some kind of downstream effect on other eigenvectors which helps them to capture the underlying communities. 
\begin{figure}
\begin{center}
    \includegraphics[width = 5in]{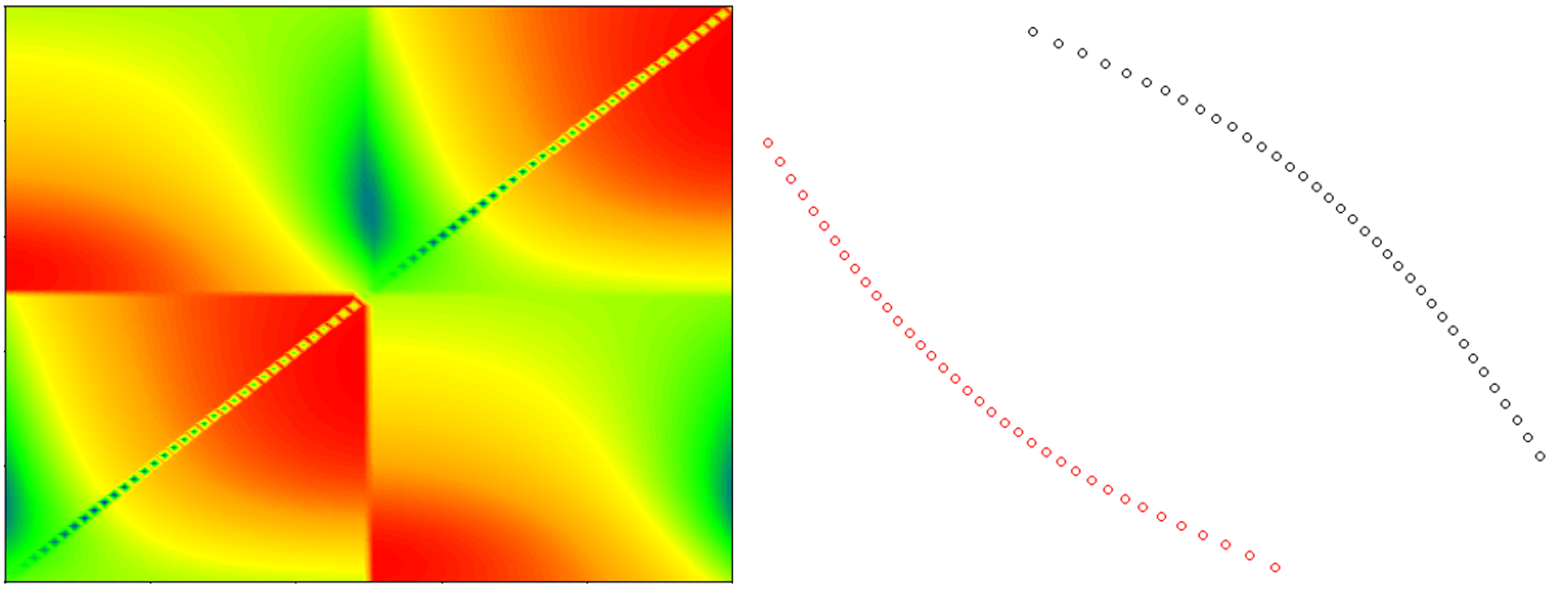}
\end{center}
\caption{Normalized network with implied clustering.}
\label{f:normalized}
\end{figure}
This phenomenon is not  specific to the constructed network in Figure \ref{f:reconstruction}. Figure \ref{f:rot-eig} shows a network generated via an NSM with an $H$-function combining two exponential random variables, in the style of the third plot in Figure \ref{f:different-h-funcs}, where within community connections show positive association and between community connections show Simpson association, along with the eigenvectors from the normalized version of that network. Again, the real parts of the eigenvectors of the normalized network show a clear separating plane between the two communities, but it is not along one of the 2 dimensions, but rather along a combination of them.
\begin{figure}
\begin{center}
    \includegraphics[width = 4in]{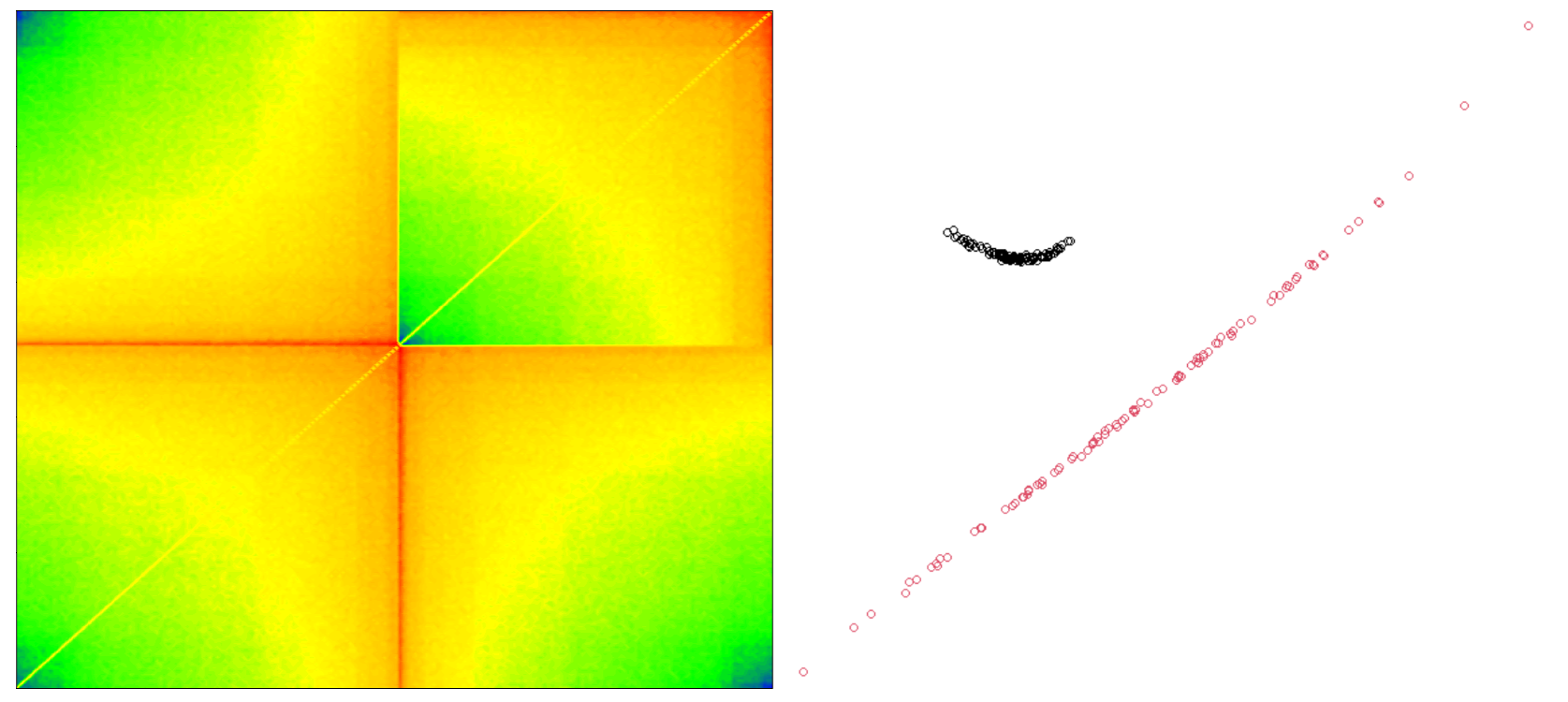}
\end{center}
\caption{Network generated via NSM, with the clustering from its normalized version.}
\label{f:rot-eig}
\end{figure}

There is some intuition for why clustering based on normalizing the network might work. Ignoring the diagonal, normalizing the matrix makes each column sum to 0, and have variance 1. Since this is a dense weighted network, there is no true concept of a “hub,” but instead just a node that has greater edge weights. For this reason, we have no reason to value one node/dimension over another, and we can avoid having to find complicated kernels with different variances across different dimensions. If we don't normalize, the distance in only one dimension can totally dominate. In the case where all associations are positive, when doing community detection, we want to ignore degree in favor of preference, so it makes sense to divide by a standard deviation. In the case where some associations are negative (or Simpson), extreme $\Psi$ values will indicate greater variance, but again, we still want to ignore this for the purposes of community detection, so it still makes sense to normalize. The reason to normalize is since each node is connecting to all (or most) others, we want to ensure that we are accounting for a given node's average edge weight and the variance of the edge weights. In this way, we are still looking for patterns of preferences across other nodes. By normalizing the columns, we can compare them to one another on an apples-to-apples basis. A negative weight in the normalized matrix means that the edge between the reference node and another node is less than the average weight emanating from the reference node. For the purposes of community detection in this model, this is really what we care about, that is, patterns across nodes in a given community that show preferences across nodes throughout the whole network.

These last points also indicate the zeros on diagonal can bring up issues.  Imagine a situation where every other edge weight emanating from a node is extremely large but low variance. The 0 will shrink the average and increase the standard deviation by a lot, but it's purely artificial. If we use a distance between nodes/columns to cluster, it is imperative to ignore both the rows corresponding to those nodes. If there is actually a relatively large edge weight between them, that edge weight will be subtracted and squared twice. This is also purely artificial due to no self loops.

\section{Additional simulations}\label{more-sims}
\subsection{$H$-Normal LSM with $\sigma =0$} 
Figure \ref{f:no-noise-LSM} shows an $H$-Normal LSM with $\sigma =0$, where the value for within community edges are equal to $5+ 3\Phi_1^{-1}(\Psi_u) + 3\Phi_1^{-1}(\Psi_v)$. The between community edges, assuming $u$ is in community 1 and $v$ is in community 2, are equal to $8 - 3\Phi_1^{-1}(\Psi_u) + 1.5\Phi_1^{-1}(\Psi_v)$. The original network is shown on the left. In the middle is the reconstructed estimated network, which looks nearly identical to the original network. $\widehat{\sigma}$ is essentially 0 for this network, so the MSE for each subnetwork is shown on the right. 
 \begin{figure}[h!]
 \includegraphics[width=\linewidth]{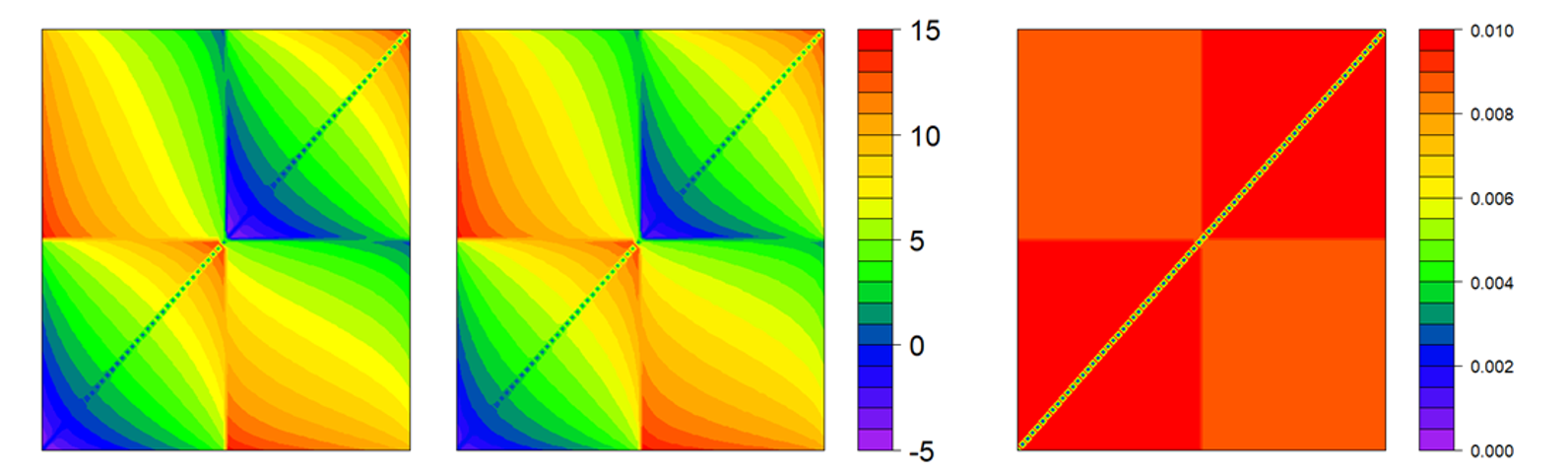}
\caption{$H$-Normal LSM with $\sigma =0$, its estimate, and the subnetwork level MSE.}
\label{f:no-noise-LSM}
\end{figure}

\subsection{Varied network}
Figure \ref{f:tricky} adds more complexity and departs even further from an $H$-Normal NSM, this time with 200 nodes and four communities of possibly different sizes. In this case, nodes are not assigned $\Psi$ values but i.i.d. Gamma(shape = 5, scale = 10) random values $\Gamma_u^{(1)}$ and $\Gamma_u^{(2)} = 1/\Gamma_u^{(1)}$.  However, within each community, $\Gamma_u^{(1)}$ and $\Gamma_u^{(2)}$ are normalized by dividing by $\sum_{u \in i} \Gamma_u^{(1)}$ and $\sum_{u \in i} \Gamma_u^{(2)}$, respectively. Each pair of communities is also assigned independently a random Gamma(shape =50$\sqrt{10}$, scale = 50$\sqrt{10}$) value $\Gamma_{ij}$. There is no mathematical significance to these parameters, they were chosen only to create a striking image. Within community edges are distributed according to a negative binomial distribution where the target number of successful trials is $2\times\Gamma_{ii}$ and the probability of success in each trial is $(1-\Gamma_u^{(1)})(1- \Gamma_v^{(1)})$. ``Adjacent" communities in the graph are distributed according to a negative binomial where the target number of successful trials is $1.5 \times\Gamma_{ij}$ and the probability of success in each trial is $(1-\Gamma_u^{(2)})(1- \Gamma_v^{(2)})$. Connections between communities 1 and 3 or between communities 2 and 4 are distributed according to a Poisson distribution with parameter $100\times\Gamma_{ij}\Gamma_u^{(1)}\Gamma_v^{(1)}$. Finally, connections between 1 and 4 are normally distributed with mean $10000\times(\Gamma_u^{(1)}+\Gamma_v^{(1)})$ and variance 1. Though the original network looks noisy, the estimate seems to capture a smooth approximation. In fact, in this network, $\hat{\sigma}$ is indistinguishable from 0 everywhere, so the MSE of each subnetwork, shown in the right plot of Figure \ref{f:tricky}, would be used for the bootstrap. 
\begin{figure}[h!]
 \includegraphics[width=\linewidth]{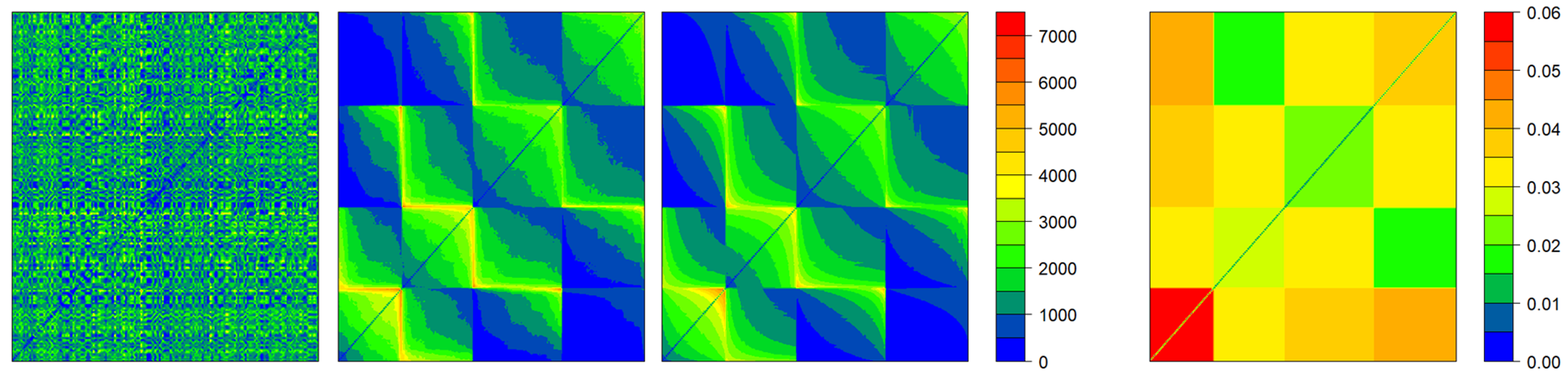}
\caption{From left to right: original, reordered, estimate, MSE of each subnetwork.}
\label{f:tricky}
\end{figure}

\subsection{Correlation matrix}
In this network, we first generate 3 ``lodestar" series $\mathcal{L}_1, \mathcal{L}_2, \mathcal{L}_3$ of length 1000 composed entirely of i.i.d. U$(0,1)$ random variables. We create a network with 200 nodes where the first 50 nodes each get a series which are (to different degrees) positively correlated with the first lodestar series, and negatively correlated with the second lodestar series. This is done by calculating the series for node $u$ at time $t$ as $$u(t) = \beta_1(u)\mathcal{L}_1(t) +  \beta_2(u)\mathcal{L}_2(t) +\beta_3(u)\mathcal{L}_3(t) + \big(1-\sqrt{\beta_1^2(u)+\beta_2^2(u) +\beta_3^2(u)}\big)\epsilon_u(t),$$ where $\epsilon_u(t) \sim$ U$(0, .35)$ is an idiosyncratic term for each node at each time step. For the first 50 nodes, the $\beta_1(u)$ values are positive and increasing with $u$, while the $\beta_2(u)$ values are negative and getting more negative with $u$, and $\beta_3(u) =0$. For the second 50 nodes, the $\beta_2(u)$ values are positive and increasing with $u$, while the $\beta_1(u)$ values are negative and getting more negative with $u$. The 101st to 150th nodes get series which are increasingly positively correlated to $\mathcal{L}_3$ series and increasingly negatively correlated to $\mathcal{L}_2$. Finally, the last 50 nodes have series which are increasingly negatively correlated to $\mathcal{L}_3$. Taking the correlations of the 200 series, we get the correlation matrix on the left of Figure \ref{f:corr-less}. Though the subnetwork between the first and fourth communities may look disordered, because nodes within each community are relatively correlated with each other, there is a discernible ordering in that between community subnetwork. Even so, from a practical perspective, edge weights only take on a narrow range of values near 0 in that subnetwork.  
\begin{figure}[h!]
 \includegraphics[width=\linewidth]{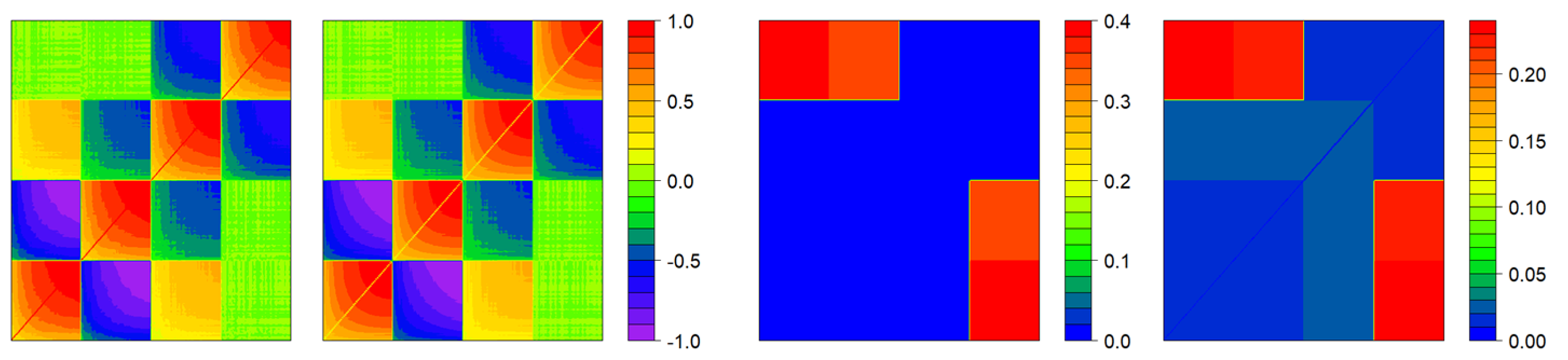}
\caption{Correlation network. From left to right: original, estimate, $\hat{\sigma}$ for each subnetwork, MSE of each subnetwork.}
\label{f:corr-less}
\end{figure}

\subsection{Edge weights with injected noise}\label{s:noisyedgesim}
In plot A of Figure \ref{f:noisyedgesim}, we start with the same underlying network as in Figure \ref{f:no-noise-NSM} but add Gaussian noise $\zeta_{uv}$ centered at 0 with a variance of 36 \textit{to the final edge weights}, not in ``normal" space, so the edge weights in the network can go below 0 and above 150. In plot B, the added Gaussian noise $\zeta_{uv}$ has variance 100 for within community edges and variance 225 for between community edges. In both cases, the reconstructed networks recover the underlying pattern, but the noisier network is estimated more coarsely.  
\begin{figure}[h!]
 \includegraphics[width=\linewidth]{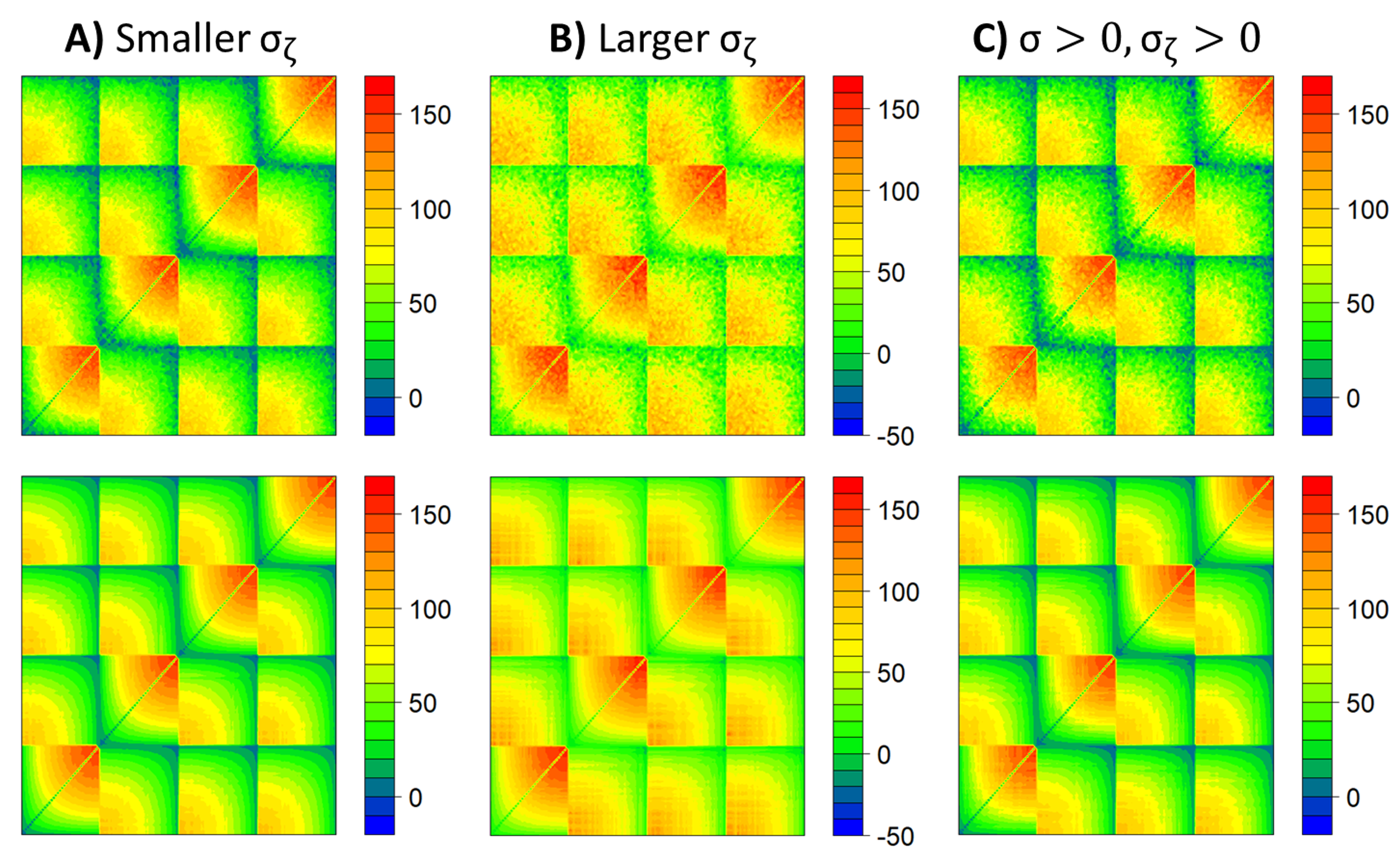}
\caption{The network in plot A of Figure \ref{f:no-noise-NSM} with different values for $\sigma_{\zeta}^2$}
\label{f:noisyedgesim}
\end{figure}

Finally, in plot F, the same network is taken with $\sigma = .05$ everywhere and external noise is included by adding $\zeta_{uv}$ with variance 36 to the final edge weights. Even in this last case, the underlying signal is broadly recovered.    

\subsection{Multiplying sociability parameters}
It may be worth considering alternative models which can generate networks similar to LSMs and NSMs. For example, if edge weights are generated by multiplying sociability parameters, some surprising things happen. Ignoring diagonal zeros, if we take the values from .01 to 1 by .01 on each axis and let the value in the matrix equal the product of the axes, that results in the leftmost plot in Figure \ref{f:mult}.  In other words, if all values are positive, the result looks like the 4th plot in Figure \ref{f:different-h-funcs}. But if we center each axis to be mean 0 and recalculate, that produces the second plot in Figure \ref{f:mult}. This is not the intended setting for the models discussed in this paper, but if we were to treat these networks as such, in the first case, every node would be put into a single community. In the second case, those nodes with negative values would be put in a separate community from nodes with positive values. Using this split and reordering based on within community degree gives the rightmost plot in Figure \ref{f:mult}. 

In principle, were noise added to the edge weights of this network, knowing the true generating model type might improve estimation, as one may be able to smooth noise out over more observations by keeping all nodes in one large community. However, even using the ``wrong" communities, our estimation procedure appears to replicate the underlying network. Even though the generating process for this network is the same across both estimated communities, separating the second network into two communities is therefore a reasonable choice, especially since the two communities can be so easily defined. This kind of pattern only arises when multiplying nodes with positive sociabilities and others with negative sociabilities, not when all sociabilities have the same sign. While multiplying sociabilities hints at the idea of negative association, networks generated from these models are still restricted to symmetric contours of the type seen in the 4th plot of Figure \ref{f:different-h-funcs}. However, simply multiplying sociability parameters can’t generate networks that have contours of the type seen in the first 3 plots of figure \ref{f:different-h-funcs}, nor can it give positive association patterns within a community but negative association patterns between communities when there are more than two communities.    
\begin{figure}
\begin{center}
 \includegraphics[width = \linewidth]{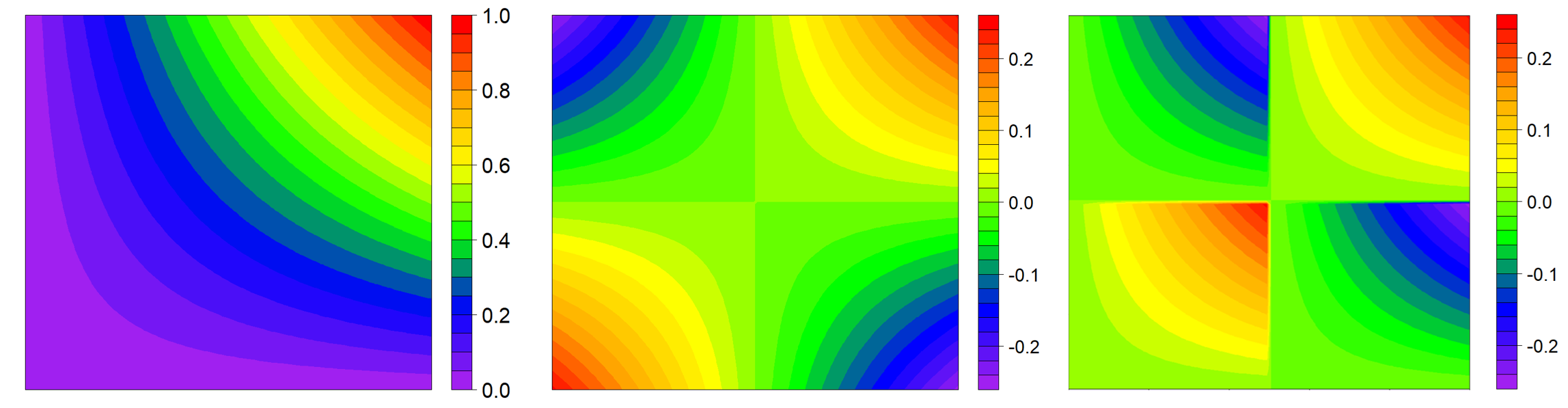}
\end{center}
\caption{From left to right: Network generated by multiplying sociability parameters. Network generated by demeaning sociability parameters then multiplying. Reordering of second network based on within estimated community degree.}
\label{f:mult}
\end{figure}   

\section{Higher dimensional $H$-functions and ``failure"}\label{s:Hext}
Thus far, we have defined $H$-functions as a class of functions that take in two uniform random variables and output another uniform random variable. This model can be extended to include a broader class of functions.   

\begin{definition}($d$-dimensional $H$-function.) A function $H:(0, 1)^d\rightarrow (0,1)$ is a $d$-dimensional $H$-function if the inputs are $d$ uniform random variables, and the output is a uniform random variable which is monotonic in each argument. 
\end{definition}

Note the $H$-Normal NSM (\ref{e:HNSM}) can be expressed in terms of a 3-dimensional $H$-function as:
\begin{equation}\label{e:HNSM2}  
W_{u v} = G^{-1}\left (H(\Psi_{u}, \Psi_{v}, \eta_{u v})\right)
\end{equation}
with $\eta_{uv} = \Phi_1(\epsilon_{uv}) \sim U(0,1)$, where
\begin{equation}\label{e:3DH} 
H(x, y, \eta) =\Phi_1\left(\frac{1}{\sqrt{1+\sigma^2}} \Phi_1^{-1}(h(x,y)) + \frac{\sigma}{\sqrt{1+\sigma^2}} \Phi_1^{-1}(\eta)\right),
\end{equation}
where $h(x,y)$ is a 2-dimensional $H$-function. The first two plots in Figure \ref{f:noise-failure} show the values along the $x$ and $y$ axes of functions which have the form of (\ref{e:3DH}) where $h(x,y)$ is in the schema of (\ref{e:HCDF}) such that $F_1$ and $F_2$ are Exponential distributions, and $F_{1, 2}$ is a Gamma distribution. As ``errors" are injected at each ($x$, $y$) pair, this can also be seen as a subnetwork generated via an $H$-Normal NSM where $H(x,y)$ in (\ref{e:HNSM}) is that of the third plot in Figure \ref{f:different-h-funcs}, and $\eta$ is injected at the edge level.

\begin{figure}
 \includegraphics[width=\linewidth]{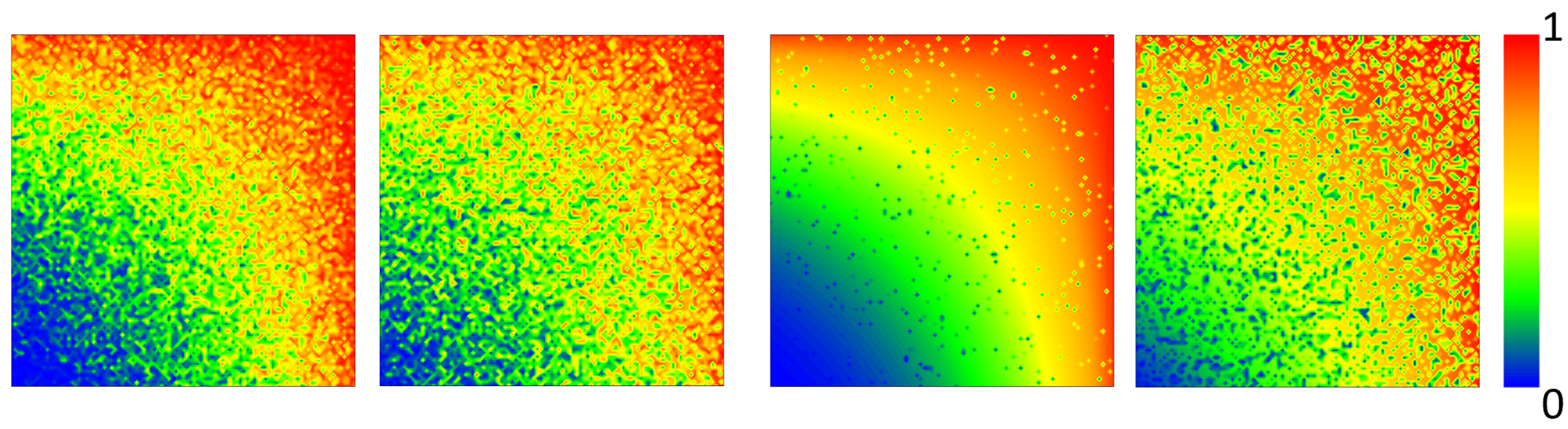}
\caption{Discretized 3-dimensional $H$-functions where axes show $x$ and $y$ but $\eta$ is drawn randomly at each ($x$, $y$) pair. The 2-dimensional $H$-function with arguments $x$ and $y$ is the same as in the third plot of Figure \ref{f:different-h-funcs}. From left to right: $H$-Normal NSM as in (\ref{e:HNSM}) with $\sigma = .05$. $H$-Normal NSM with $\sigma=.5$. Failure with $\alpha = .95$.  Failure with $\alpha = .5$.}
\label{f:noise-failure}
\end{figure}
The specific $H$-Normal NSM (\ref{e:H3-with-noise}) is a simple example which can be expressed in closed form as a 3-dimensional $H$-function as in (\ref{e:3DH}) where
\begin{equation}\label{e:3DH2}
H(x, y, \eta) =\Phi_1\left(\frac{1}{\sqrt{1+\sigma^2} \sqrt{1+\rho^2}} \Phi_1^{-1}(x) + \frac{\rho}{\sqrt{1+\sigma^2 }\sqrt{1+\rho^2}} \Phi_1^{-1}(y) + \frac{\sigma}{\sqrt{1+\sigma^2}} \Phi_1^{-1}(\eta)\right).
\end{equation}
In the network context, $\rho$ defines the relative influence of each of the node sociabilities, while  $\sigma$ controls the ``signal-to-noise" ratio of this 3-dimensional $H$-function. If one imagines observing several instances of the same network given by (\ref{e:HNSM2}) with idiosyncratic $\eta$ values, then increasing $\sigma$ would increase the variance of the individual edge weights from one instance to another. 
The 3-dimensional $H$-function (\ref{e:3DH2}) can be viewed as a composition of two 2-dimensional $H$-functions as follows:
$$ H(x, y, \eta) = H_{\sigma^2}(H_{\rho^2}(x,y), \eta), $$
where $H_{\rho^2}$ and $H_{\sigma^2}$ are both of the form (\ref{e:H-normal-rho}) but with different variance parameters, as indicated by their subscripts. 
All the observations about (\ref{e:3DH2}) do not depend on using 3-dimensional $H$-functions built from normal distributions or even $H$-Normal NSMs, but rather one can use any $d$-dimensional functions similar to (\ref{e:HCDF}) where $d>2$, with suitable adjustments based on the chosen distributions $F_1, F_2$, and $F_{1,2}$. 

One general method of creating higher dimensional $H$-functions is by chaining together lower dimensional $H$-functions as follows:
\begin{equation}\label{H-chain} 
H(x, y, \eta) = F_{1,2}\left(  F_1^{-1} \left( F_{3,4}(F_3^{-1}(x), F_4^{-1}(y)) \right), F_2^{-1}(\eta)   \right).
\end{equation}  
In general, the inner functions $F_3, F_4$ do not need to share the same form as the outer functions $F_1, F_2$. In the left plot of Figure \ref{f:Exp-Unif}, $F_3$, $F_4$ and $F_{3,4}$ correspond to the third plot in Figure \ref{f:different-h-funcs}, $\eta \sim$ U$(0,1)$ for each edge, and $F_1$, $F_2$ and $F_{1,2}$ correspond to the rightmost plot in Figure \ref{f:different-h-funcs}. The right plot of Figure \ref{f:Exp-Unif} swaps the roles of the inner and outer 2-dimensional $H$-functions in the left plot of Figure \ref{f:Exp-Unif}. While building higher dimensional $H$-functions in this way provides a lot of flexibility, this method may not give simple closed form expressions, and may not guarantee identifiability.  

\begin{figure}
\begin{center}
\includegraphics[width=0.75\textwidth]{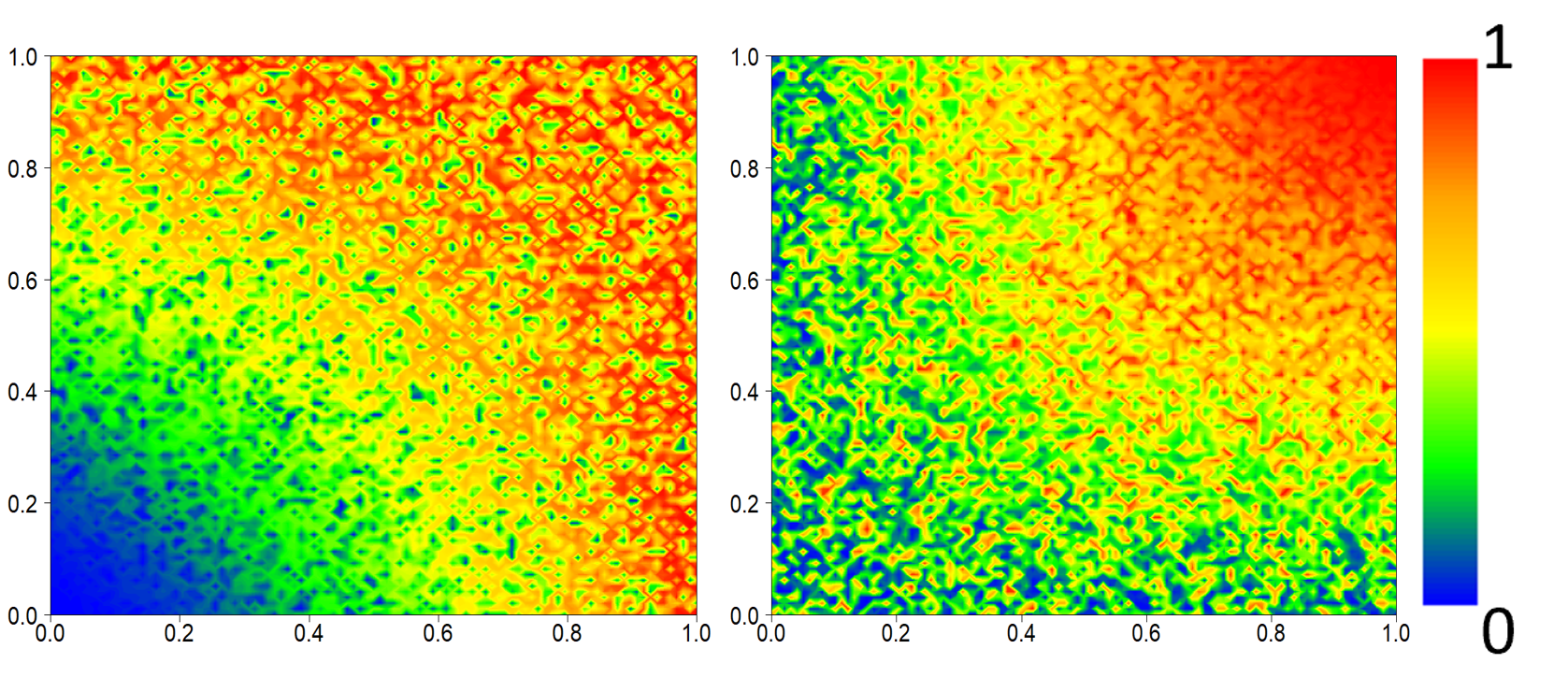}
\end{center}
\caption{3-dimensional $H$-functions of the form (\ref{H-chain}), where axes represent values of $x$ and $y$.}
\label{f:Exp-Unif}
\end{figure}

While the $H$-Normal NSM is a model with additive ``error," (\ref{e:HNSM2}) is more general. The following can be used to generate a different kind of ``error," one we shall call \textit{failure}. In this context, let 

\begin{equation}\label{e:failure}
H(x, y, \eta) = ( h(x, y))^\alpha \delta^{1-\alpha},
\end{equation}
where $\alpha$ is a value between 0 and 1, $h$ is a 2-dimensional $H$-function, and $\delta^{1-\alpha}$ is given by $F_\alpha^{-1}(\eta)$ where 
 $$F_\alpha(x) =
  \begin{cases}
 0, & x<0, \\
  \frac{x}{1-\alpha}, & 0 \le x \le 1-\alpha, \\
1, & x >  1-\alpha.
   \end{cases}
$$
Seen another way, 
$$\delta =
  \begin{cases}
 1 & \text{with probability } \alpha, \\
  \eta^{\frac{1}{1-\alpha}} & \text{with probability } 1-\alpha.
   \end{cases}
$$ 

One can see that (\ref{e:failure}) satisfies the definition of a 3-dimensional $H$-function by taking the Laplace-Stieltjes (LS) transform of the log of (\ref{e:failure}) with uniform inputs and recognizing that it matches the LS transform of the log of a uniform distribution. For $\eta \sim$ U$(0,1)$, the model (\ref{e:failure}) can also be written as
$$ H(x, y, \eta) \overset{d}{=}
  \begin{cases}
 h(x, y)^\alpha, & \text{with probability } \alpha, \\
   h(x, y)^\alpha \varepsilon, & \text{with probability } 1-\alpha,
   \end{cases}
$$ 
where $\varepsilon$ is also a uniform random variable. When $\alpha$ = 1, there is no ``error;" when $\alpha$ = 0, there is no degree correction, and increasing $\alpha$ increases the ``signal-to noise-ratio." As a contrast to the additive ``error" regime, in the failure case, when $\alpha$ is large, several different instances of the same network would share many of the \textit{exact} same edge weights, as in expectation, $100\times(1-\alpha)\%$ of the edge weights are given precisely by $h(x, y)^\alpha$.  The reason to call this kind of error ``failure" is that rather than defining a distribution that is concentrated near $h(x, y)$ with relatively small variation, even when $\alpha$ is large, there are infrequent occasions where the value will fall far below the modal value of $h(x, y)^\alpha$. The injected error will never raise the value greater than $h(x, y)^\alpha$, which also accounts for why the modal value lies at $h(x, y)^\alpha$ rather than $h(x, y)$. This kind of variation is reminiscent of each component in a system possessing a particular capacity, but occasional component failures cause that capacity to not be met. Figure \ref{f:noise-failure} depicts different levels of $\sigma$ and $\alpha$ being injected into the third plot of Figure \ref{f:different-h-funcs}. 


Consider the example of a road network, where the vertices are geographic locations, edges are roads, and edge weights are the number of cars that travel along the road each day. In this case, there should be degree correction, as there should be heavier traffic between certain locations than others. However, in addition to random variation for travelers along each road (which could be represented by additive ``error"), on some days, whether due to accidents, construction, or some other issue, the traffic along certain roads may fall dramatically. This latter case would be an example of failure. In this case, it may be better to model the network using a 4-dimensional $H$-function of the form
$$H(x,y, \eta_1, \eta_2) =   (F_{1,2,3}\left(F_1(x)+ F_2(y) + F_3(\eta_1)\right))^{\alpha} \delta_2^{1-\alpha},$$
which would incorporate additive ``error" through $\eta_1$, and failure through $\delta_2^{1-\alpha}$, which is a function of $\eta_2$. While this would be a valid 4-dimensional $H$-function in theory, in practice there may be issues arising from dependence between inputs, such as the probability of failure being correlated with the additive ``error."  Furthermore, one may want to include other covariates in the $H$-function which are not completely random, but rather systematic features, like time, that are different from node sociabilities, for use with tensors rather than matrices. There are many other kinds of $H$-functions that can incorporate various covariates and errors to model specific phenomena. Our goal, however, is not to catalog these possibilities, but to illustrate the richness and generality of the class of $H$-functions, particularly for generating random networks. 
\end{document}